\def\eqref#1{Eq.~\ref{#1}}   % refer to an equation
\def\eps{{\epsilon}}
\newcommand{\E}{\mathbb{E}}
\newcommand{\Var}{\mathrm{Var}}
\newcommand{\R}{\mathbb{R}}
\newtheoremstyle{mythmstyle} 
{\topsep}    %Space above
{\topsep}    %Space below
{\itshape}   %Body font
{0pt}        %Indent amount
{\bfseries}  % Theorem head font
{}           %Punctuation after theorem head
{ }          %Space after theorem head
{}           % theorem head specification
\theoremstyle{mythmstyle}
\newtheorem{theorem}{Theorem}
\newtheorem{lemma}[theorem]{Lemma}
\newtheorem{corollary}[theorem]{Corollary}
\newtheorem{proposition}[theorem]{Proposition}
\newtheorem{remark}[theorem]{Remark}
 \newtheorem{assumption}{Assumption}
 \newtheorem*{proposition*}{Proposition}
 \newtheorem*{theorem*}{Theorem}
  \newtheorem*{lemma*}{Lemma}
\newenvironment{sproof}{%
  \proof}{\endproof}
\newcommand{\cN}{\mathcal N}
\newcommand{\cP}{\mathcal{P}}
\newcommand{\KL}{\mathop{\mathrm{KL}}\nolimits}
\newcommand{\TV}{\mathop{\mathrm{TV}}\nolimits}
\newcommand{\FD}{\mathop{\mathrm{FD}}\nolimits}
\newcommand{\ud}{\mathrm{d}}
\newcommand{\Id}{\mathrm{I}_d}
\newcommand{\ps}[1]{\langle #1 \rangle}
\newcommand{\ppsal}{\nu}
\newcommand{\tg}{\pi}
\newcommand{\itp}{\mu}
\DeclareMathOperator{\erfcx}{erfcx}
\definecolor{grape}{rgb}{0.43, 0.17, 0.71}
\definecolor{blush}{rgb}{0.87, 0.36, 0.51}
\definecolor{aquua}{HTML}{689d6a}
\renewcommand{\leqslant}{\leq}
\renewcommand{\geqslant}{\geq}
\title{Provable Convergence and Limitations of \\ Geometric Tempering for Langevin Dynamics}
\author{Omar Chehab, Anna Korba, Austin Stromme \& Adrien Vacher \\
Department of Statistics \\
CREST, ENSAE, IP Paris \\
France \\
\texttt{\{emir.chehab,anna.korba,austin.stromme,adrien.vacher\}@ensae.fr}
}
\begin{document}

\maketitle

\begin{abstract}

Geometric tempering is a popular approach to sampling from challenging multi-modal probability distributions by instead sampling from a sequence of distributions which interpolate, using the geometric mean, between an easier proposal distribution and the target distribution. In this paper, we theoretically investigate the soundness of this approach when the sampling algorithm is Langevin dynamics, proving both upper and lower bounds. Our upper bounds are the first analysis in the literature under functional inequalities. They assert the convergence of tempered Langevin in continuous and discrete-time, and their minimization leads to closed-form optimal tempering schedules for some pairs of proposal and target distributions. Our lower bounds demonstrate a simple case where the geometric tempering takes exponential time, and further reveal that the geometric tempering can suffer from poor functional inequalities and slow convergence, even when the target distribution is well-conditioned. Overall, our results indicate that geometric tempering may not help, and can even be harmful for convergence.

\end{abstract}

% Sections

\section{Introduction}

Sampling from a target distribution $\pi$ whose density is known up to a normalizing constant is a challenging problem in statistics and machine learning, and is currently the subject of intense interest due to applications in Bayesian statistics~\citep{dai2020smcreview} and energy-based models in deep learning~\citep{song2021diffusionsde}, among other areas. In these settings, the normalizing constant of the target distribution $\pi$ is typically intractable, and Markov Chain Monte Carlo (MCMC) algorithms~\citep{Roberts04g, Robert04m} are commonly used to generate Markov chains in the ambient space, whose law eventually approximates the target distribution.

Among MCMC algorithms, the Unadjusted Langevin Algorithm (ULA), which corresponds to a time discretization of a Langevin diffusion process, has attracted considerable attention due to its simplicity, theoretical grounding, and utility in practice~\citep{Roberts96e, wibisono2018sampling, Durmus19a,song2019scorebasedmodel}. For example, ULA can be proven to converge quickly when the target distribution $\pi$ is smooth and strongly log-concave \citep{durmus2016ula,Durmus19a}. However, many cases in practice require to sample from distributions which are not log-concave, and indeed potentially even multi-modal~\citep{parisi1981ula,zhang2020cyclicalmcmc}. In such settings, the convergence of ULA is governed by functional inequalities which effectively quantify the convexity, or lack thereof, of the target distribution~\citep{vempala2019rapid}. Nonetheless, truly multi-modal target distributions generally have poor functional inequalities, thus leading to weak convergence guarantees for ULA. This phenomenon is not merely a theoretical artifact, and it is well-known amongst practitioners that when sampling from multi-modal distributions, algorithms based on ULA can get stuck in local modes and suffer from slow convergence~\citep{deng2020microcanonical}.

\emph{Tempering} or \emph{annealing} is a popular technique~\citep{neal1998annealing,gelman1998importancesamplingext,syed2022non} to overcome the deficiencies of ULA and other MCMC methods in the multi-modal setting. Rather than sample directly from the target distribution $\pi$, tempering samples from a sequence of distributions that interpolate between an easier, unimodal proposal distribution $\nu$ and the more challenging $\pi$. Intuitively, tempering may help escape local modes and explore the entire target distribution~\citep{syed2022non}. Many possible interpolating paths for tempering exist, but to be practically useful the path must be implementable with the chosen MCMC scheme, and should improve convergence when compared the latter run directly against the target $\pi$.

One of the most popular choices for the tempering path is the {\it geometric tempering}~\citep{gelman1998importancesamplingext,neal1998annealing}, where the intermediate distributions are defined to be the geometric averages $\mu_{\lambda}\propto\nu^{1-\lambda}\pi^{\lambda}$ with $\lambda\in [0,1]$ the (inverse) temperature parameter. At a ``hot" temperature, $\lambda$ is close to $0$ and the distribution $\mu_{\lambda}$ is close to the proposal which can be chosen with large variance for better exploration, and when the temperature is gradually ``cooled" to $\lambda = 1$, $\mu_{\lambda}$ recovers the original target $\pi$. In practice, the geometric tempering does not require access to the normalizing constants, and has accordingly been widely applied to various MCMC methods~\citep{dai2020smcreview,chopin2020smcbook}. However, theoretical understanding of the geometric tempering is limited. This includes identifying situations where it improves upon standard sampling procedures as well as offering guidance for the selection of the temperature schedule, a crucial question in practice where a number of heuristics have been proposed~\citep{chopin2020smcbook,jasra2011inference,chopin2023connection,kiwaki2015annealing}.

\paragraph{Contributions.}
In this paper, we develop theoretical understanding of geometric tempering combined with Langevin dynamics:  we refer to this as tempered Langevin dynamics. We make three main contributions:

\begin{enumerate}

    \item We provide in Theorems~\ref{th:continuous_time_upper_bound} and~\ref{th:annealed_langevin_discrete_time} the first convergence result for tempered Langevin dynamics in Kullback-Leibler divergence (KL).
    Our results here are for general tempering
    schedules and
    depend on the functional inequalities (log-Sobolev)
    of the intermediate distributions along the tempering
    sequence. In Proposition~\ref{prop:optimal_tempering}
    we derive the optimal tempering schedule
    for our continuous time upper bound in 
   the strongly log-concave setting.
    
    \item To go beyond the strongly-log concave
    setting, we must understand the behavior of
    functional inequalities
    along the geometric tempering.
    Our result here, Theorem~\ref{th:exponentially_worse_poincare},
    is negative, and shows that, surprisingly, even when the proposal and target have favorable functional inequalities, the geometric tempering can exponentially worsen these inequalities.
    
    \item Although the poor functional inequalities in Theorem~\ref{th:exponentially_worse_poincare} are a worrying sign for the geometric tempering, they do not yet rule out fast convergence since functional inequalities only govern
    worst-case convergence. We thus analyze a simple bi-modal example where we show in Theorem~\ref{th:bimodal_lower_bound} that the geometric path takes exponential time to converge in total variation (TV), and then show
   in Theorem~\ref{th:unimodal_lower_bound} that, surprisingly, similar results even hold for a uni-modal target with favorable functional inequalities.
\end{enumerate}

In sum, our results establish sufficient
conditions for the tempered Langevin dynamics
to converge in KL divergence. We find some limited situations where these improve
upon the rates
of standard Langevin dynamics, but we also find that the geometric tempering
can worsen functional inequalities and suffer from slow convergence,
both in the setting of multi-modal target and even for
uni-modal targets with reasonable functional inequalities.

\paragraph*{Organization.} 
This paper is organized as follows.
In the remainder of this section we discuss
related work and describe our notation.
In section~\ref{sec:background}, we discuss background
about ULA 
and geometric tempering.
In section~\ref{sec:upper_bounds},
we state our upper bounds in KL on the convergence of the geometric
tempering with Langevin dynamics under functional inequalities,
both in continuous and discrete time.
In section~\ref{sec:evolution_geometric}, we 
show that the geometric path can have poor
functional inequalities even when the proposal and target do not;
yet, in the strongly-concave case, we derive explicit results from our upper bounds 
and highlight situations where tempering may be beneficial.
In section~\ref{sec:lower_bounds},
we describe two examples where geometric tempering with ULA
provably has slow convergence in TV.
Proofs and additional numerical validations are collected in the Appendix.

\paragraph{Related work.}
Given the empirical success of ULA with a sequence of tempered target distributions it is desirable to obtain theoretical guarantees on the convergence of the scheme,
and especially to understand when and why certain sequences of moving targets can guide or misguide 
the sampling process towards the final target. 
Closely related to our setting, \citet{tang2024simulatedannealing} also study the convergence of tempered Langevin dynamics for the geometric sequence. However, they focus on the simulated annealing setting. In this case the geometric sequence is taken as $\pi^{\lambda}$, that degenerates to a target distribution which is a Dirac located at the global maximum of the log density of $\pi$
as $\lambda$ goes to infinity, so that sampling actually becomes an optimization task~\citep{kirkpatrick1983simulatedannnealing,cerny1985simulatedannnealing}. Using an explicit temperature 
schedule, they measure convergence in probability. 
For another sequence of intermediate targets obtained by convolving the target and proposal distributions, and using an explicit schedule, \citet{lee2022annealedula} prove fast convergence in TV. Finally, for a general sequence of intermediate targets, ~\citet{guo2024annealedlangevin} very recently obtained a rate of convergence that depends on Wasserstein metric derivative along the path of distributions: their result suggests that an optimal path can be obtained as a Wasserstein geodesic between the initial and target distributions. A modification of tempered Langevin dynamics, called Sequential Monte Carlo, that importantly includes resampling of particles, has been shown to achieve fast convergence in TV~\citep{schweizer2012smcconvergence,paulin2018smcconvergence,mathews2022smcconvergence,lee2024smcconvergence}. These results  apply to different sampling processes and rely on strong assumptions, effectively modeling far-away modes by disjoint sets~\citep{schweizer2012smcconvergence,mathews2022smcconvergence}, using a specific path that interpolates between a uniform and target distribution by increasing the number of components that follow the target's law~\citep{paulin2018smcconvergence}, or assuming uniformly bounded consecutive distributions along the path~\citep{lee2024smcconvergence}. In contrast, our results are specific to the geometric sequence, but we obtain upper and lower bounds on convergence that explicitly depend on the time.

The tempered  iterates defined by the geometric path are also at the basis of simulated or parallel tempering schemes~\citep{geyer1991markov,marinari1992simulatedtempering,hukushima1996exchange,syed2021parallel}, which are MCMC algorithms where the temperature is a random variable instead of a monotonic function of time. Both schemes produce samples at all temperatures and involve swapping particles between hotter and colder temperatures. In that setting, some works have investigated the spectral gap of these methods, which is related to the rate of convergence in 
$\TV$ distance~\citep{madras2003simulatedtempering,woodard2009simulatedtemperingspectral,woodard2009simulatedtemperingspectralbis}. 
When that rate is polynomially (resp. exponentially) decreasing in the problem size, convergence is said to be fast (resp. ‘torpid’). These rates are studied for arbitrary target distributions, using lower and upper bounds. Namely, \citet{woodard2009simulatedtemperingspectral,woodard2009simulatedtemperingspectralbis} show that for target distributions which have modes with different weights or shapes, convergence can be slow, and that symmetric modes are required for fast convergence. This generalizes previous findings that for specific targets with two symmetric and equally weighted modes, convergence is fast~\citep{madras2003simulatedtempering}, whereas for another specific target with three asymmetric modes, convergence is slow for any schedule~\citep{bhatnagar2015simulatedtempering}. ~\citet{ge2018simulatedtempering} also prove fast convergence when the target is a Gaussian mixture. 
More recently,~\citet{chen2020accelerating} study the Poincaré constant in parallel tempering, which governs the rate of convergence in 
$\chi^2$ divergence: they show it improves upon standard Langevin, and relate this improvement to the exchange rate of particles between different temperatures.
Our work differs in several important ways: we study a different sampling algorithm (Unadjusted Langevin algorithm), obtain upper and lower bounds directly in time rather than on the spectral gap, and prove explicit rates of convergence,
as well as lower bounds, for simple choices of proposal and target distributions.

\paragraph{Notation.} 
$\mathcal{C}_c^{\infty}(\R^d)$ denotes the set of infinitely differentiable functions with compact support. $\cP(\R^d)$ denotes the set of probability measures $p$ on $\R^d$. For $p \in \cP(\R^d)$, we denote that $p$ is absolutely continuous w.r.t.\ $q$ using $p \ll q$ and we use $dp/dq$ to denote the Radon-Nikodym derivative. The set of probability measures which are absolutely continuous with respect to the Lebesgue measure is written $\cP_{\rm ac}(\R^d)$. The Total Variation (TV) distance is defined as $\mathrm{TV}(p, q) := \sup_{A \subset \R^d} |p(A) - q(A)|$, where the supremum runs over all Borel sets. For any $p \in \cP(\R^d)$, $L^2(p)$ is the space of functions $f : \R^d \to \R$ such that $\int \|f\|^2 dp < \infty$. We denote by $\Vert \cdot \Vert_{L^2(p)}$ and $\ps{\cdot,\cdot}_{L^2(p)}$ respectively the norm and the inner product of the Hilbert space $L^2(p)$. For $p\ll q$, the Kullback-Leibler (KL) divergence is defined as $\KL(p,q)=\int \log\big(\frac{dp}{dq} \big)dp$, the $\chi^2$-divergence as $\chi^2(p, q) := \int \big( \frac{dp}{d q} - 1 \big)^2 \ud q$ and the Fisher-divergence as $\FD(p,q)=\big \Vert \nabla \log \big( \frac{dp}{dq} \big) \big \Vert^2_{L^2(p)} $, and $+\infty$ otherwise. For a measurable function $f \colon \R^d \to \R$ we define the variance $\Var_{p}(f) := \E_p[(f - \E_p[f])^2]$ for $p\in \cP(\R^d)$. We write the standard Gaussian on $\R$ with mean $a$ and variance $\sigma^2$ as $\mathcal{N}(a, \sigma^2)$, and the uniform measure on a Borel set $A \subset \R^d$ with finite and positive Lebesgue measure as $\mathrm{unif}_A$. 
Given a closed set $A \subset \R^d$ we define $d(x , A) := \inf_{y \in A}\|x - y\|$ for all $x \in \R^d$.
Finally, we define the error function
by $\erf(z) :=\frac{2}{\sqrt{\pi}} \int_0^z e^{-x^2} \ud x$.

\section{Background}
\label{sec:background}

In this section, we provide some background on functional inequalities,
Langevin dynamics, and geometric tempering.

\paragraph*{Functional inequalities.} 
Let $q \in \cP_{\rm ac}(\R^d)$. We say that $q$ satisfies the {\it Poincaré inequality} with constant $C_P\ge0$ if for all $f \in \mathcal{C}_c^{\infty}(\R^d),$ 
\begin{equation}
    \label{eqn:poincare} 
    \Var_{q}(f) \leqslant C_P\|\nabla f\|^2_{L^2(q)},
\end{equation} %
and let $C_P(q)$ be the best constant in~\eqref{eqn:poincare}, or $+\infty$ if it does not exist. We say that $q$ satisfies the {\it log-Sobolev inequality} with constant $C_{LS}$ if for all $f \in \mathcal{C}_c^{\infty}(\R^d)$, 
\begin{equation}
    \label{eqn:log-Sobolev} 
    \mathrm{ent}_{q}(f^2) := \E_{q}\Big[f^2 \ln\Big(\frac{f^2}{\E_{q}[f^2]}\Big)\Big] \leqslant 2C_{LS} \|\nabla f\|^2_{L^2(q)}, 
\end{equation} 
and let $C_{LS}(q)$ be the best constant in~\eqref{eqn:log-Sobolev}, or $+\infty$ if it does not exist. Note that log-Sobolev implies Poincaré with the same constant~\citep{bakry2014analysis}, so that $C_{LS}(q) \geqslant C_{P}(q)$. If $q \propto e^{-V}$ and the potential $V$ 
% \colon \R^d \to \R$ 
is $\alpha_q$-strongly convex, then $q$ satisfies~\eqref{eqn:log-Sobolev} with constant $\frac{1}{\alpha_q}$. However, the latter is more general, 
including for instance distributions $q$ whose potentials are bounded perturbations of a strongly convex potential \citep{bakry2014analysis,cattiaux2022functional}.

\paragraph{Langevin dynamics.} 
Let $\pi \in \cP_{\rm ac}(\R^d).$ The Unadjusted Langevin Algorithm (ULA)~\citep{parisi1981ula,besag1994mala} consists in sampling a target distribution $\pi$ using noisy gradient ascent 
\begin{align}
    X_{k+1} 
    = X_k+
    h \nabla \log \pi(X_k) 
    + 
    \sqrt{2 h} \epsilon_k,
    \quad \epsilon_k \sim \cN(0, \Id)
    \label{eq:ula}
\end{align}
with step size $h > 0$ at iteration $k \in \mathbb{N}$. Setting time as $t = h k$ and taking the continuous limit obtained as $h \rightarrow 0$, \eqref{eq:ula} defines a continuous process known as the Langevin diffusion. The convergence of the law of the Langevin diffusion to the equilibrium measure $\pi$ is then governed by the Poincaré and log-Sobolev inequalities. In particular, if we denote the law of the Langevin diffusion at time $t$ by $p_t$, then $p_t$ converges to $\pi$ in KL with exponential rate determined by the log-Sobolev constant
of $\pi$ \citep[Theorem 2]{vempala2019rapid}, namely
\begin{align}
   \KL(p_t, \pi)
    \leq 
    e^{-2 C_{LS}(\pi)^{-1} t}
   \KL(p_0, \pi).
    \label{eq:ula_upper_bound_kl}
\end{align}
However, for multi-modal distributions such as Gaussian mixtures, the log-Sobolev constant can grow exponentially with the distance between modes~\citep{chen2021dimension}. Nevertheless, ULA remains a popular choice, due to its computational simplicity: simulating~\eqref{eq:ula} only requires access to the score $\nabla \log \pi$ which does not depend on the target's normalizing constant.

\paragraph{Langevin with moving targets.}
Many heuristics broadly known as annealing or tempering consist in using ULA to sample a path, or sequence of distributions $(\mu_t)_{t \in \R_{+}}$ instead of the single target $\pi$. The hope is that this sequence of intermediate distributions will improve the convergence of the ULA sampler. Different tempering algorithms sample the path sequentially in time~\citep{dai2020smcreview,neal1998annealing,rubin1987importanceresampling}, back-and-forth in time~\citep{lee2021proximalsampling,neal1996temperedtransitions,zhang2020cyclicalmcmc}, or at all times jointly~\citep{marinari1992simulatedtempering,geyer1991markov}. This paper deals with the first case, i.e., 
%specifically
%
\begin{align}
    X_{k+1} 
    = X_k+
    h_k \nabla \log \mu_k(X_k) 
    + 
    \sqrt{2 h_k} \epsilon_k,
    \quad \epsilon_k \sim \cN(0, 1)
    \label{eq:annealed_langevin_discrete_time}
\end{align}
where the target now is updated (``moved") at each iteration. This generic sampling method has been used in high-dimensional spaces~\citep{wu2020stochasticnormalizingflows,thin2021annealedula,geffner2023annealedlangevin,song2019scorebasedmodel} and has achieved state-of-the-art results for sampling images, where it is sometimes known by the names Annealed Langevin Dynamics~\citep{song2019scorebasedmodel} or the ``corrector" sampler~\citep{song2020diffusion}. It is therefore of interest to find moving targets whose geometry is well-suited to ULA's convergence properties. 
A number of paths $(\mu_t)_{t \in \R_+}$ can be used to guide the process toward the final target $\pi$. Many of them interpolate between a proposal distribution  $\nu$ that is easy to sample and the target distribution $\pi$, for example by taking their convolution~\citep{song2019scorebasedmodel,song2020diffusion,albergo2023interpolant}, their geometric mean~\citep{neal1998annealing}, or following the gradient flow of a loss from proposal to target~\citep{tieleman2008persistent,carbone2023contrastivedivergence,marion2024implicit}. 
The path obtained by convolving the two distributions is the default choice for sampling from so-called ``diffusion models", yet the scores $\nabla \log \mu_t$ along that path are not analytically tractable in our setting when the density of $\pi$ is known up to a normalization constant, and estimating them is the subject of current research~\citep{huang2024reversediffusion,he2024reversediffusion,grenioux2024reversediffusion,saremi2024score}. 

\paragraph{Geometric tempering.} 
The path obtained by taking the geometric mean of the proposal and target distributions has distinguished itself in the sampling literature~\citep{neal1998annealing,gelman1998importancesamplingext}. It is written as
\begin{align}
    \mu_t(x) 
    &=
    c_{\lambda_t}
    \nu(x)^{1 - \lambda_t} \pi(x)^{\lambda_t}
    , \quad t \in \R_{+}
    ,
\label{eq:geometric_path_definition}
\end{align}
where $c_{\lambda_t}$ is a normalizing factor, $\nu\in \cP_{ac}(\R^d)$ is a proposal distribution and $\lambda(\cdot): \R_{+} \rightarrow [0, 1]$ is an increasing function called the tempering schedule. It has recently been shown
that the geometric path can be identified to a time-discretized gradient flow of the Kullback-Leibler divergence to the target, with respect to the Fisher-Rao distance~\citep{chopin2023connection,domingo2023explicit}%\citep{chopin2023connection,chen2023gradientflows}
. Its main advantage is its computational tractability, since the score of $\nabla\log \mu_t = (1-\lambda_t) \nabla \log \nu +\lambda_t \nabla \log \pi$ is known in closed-form. This path is a default in some sampling libraries~\citep{blackjax2020github} and remains a popular choice in recent sampling literature~\citep{thin2021annealedula,geffner2023annealedlangevin,dai2020smcreview} and
applications~\citep{bradley2024samplingfromconditional,ramesh2022openaidalle2,saharia2022googleimagen,dieleman2022guidance}. A special case of the geometric path is especially popular, choosing $\nu$ to be ``uniform", or more formally, equal to the Lebesgue measure: $\mu_t(x) = c_t \pi(x)^{\lambda_t}$;
this choice is commonly used in practice to sample from un-normalized distributions parameterized by a deep neural network~\citep{wu2020stochasticnormalizingflows,grathwohl2020classifierebm,nijkamp2019tempered,ye2017tempering} or for global optimization~\citep{marinari1992simulatedtempering}. Note that when $\lambda_0 > 0$, we can define the probability measure $\nu \propto \pi^{\lambda_0}$ and a tempering schedule $\gamma_t := \frac{\lambda_t - \lambda_0}{1 - \lambda_0}$, and rewrite  $\mu_t(x) = c_t \pi(x)^{\lambda_t}$ as a special case of~\eqref{eq:geometric_path_definition} with the tempering schedule $\gamma_t$. When $\nu$ is not chosen to be ``uniform", it is  often chosen as a simple distribution, such as a Gaussian~\citep{blackjax2020github,zhang2021qis,thin2021annealedula}.

\section{Convergence rate
for tempered Langevin dynamics} 
\label{sec:upper_bounds}

Throughout, we take as given proposal and target distributions $\nu$ and $\pi$, as well as a temperature schedule $\lambda \colon \R_+ \to [0, 1]$, which we assume satisfy the following conditions.

\begin{assumption}[Regularity of proposal, target,
and tempering] 
\label{assump:reg}
The proposal $\nu$ and the target $\pi$ have densities with respect to the Lebesgue measure, which we write $\nu \propto e^{-V_\ppsal}$ and $\pi \propto e^{-V_\tg}$. The tempering schedule $(\lambda_t)_{t \geqslant 0}$ is such that $\lambda \colon \R_{+}\to [0, 1]$ and $\lambda_t$ is non-decreasing in $t$ and weakly differentiable.
\end{assumption}

In addition to this basic regularity, we also make the following
quantitative assumptions on the negative log densities.

\begin{assumption}[Lipschitz gradients
and dissipativity]
\label{assump:lipschitz_dissipative}
 The negative log densities $V_{\ppsal}$, $V_{\tg}$ have Lipschitz continuous gradients on all of $\R^d$, with Lipschitz constants $L_{\ppsal}, L_{\tg}$, respectively. In addition, $V_\ppsal$ and $V_\tg$ satisfy the dissipativity inequalities
 \begin{equation}\label{eqn:dissipativity}
 \langle \nabla V_\ppsal(x), x \rangle \geqslant a_\ppsal \|x\|^2 - b_\ppsal, \quad \quad \langle \nabla V_\tg(x), x \rangle \geqslant a_\tg \|x\|^2 - b_\tg,
 \end{equation} with constants $a_\ppsal, a_\tg, b_\ppsal, b_\tg > 0$.
\end{assumption}

The Lipschitz assumption is used both for
our discrete time results as well
as in 
guaranteeing existence and uniqueness of the continuous time dynamics;
see Appendix~\ref{sec:continuous_well_posed} for more discussion
on this latter point.
The dissipativity condition is common in the sampling literature~\citep{conforti2024weak, erdogdu2022convergence}
as it implies
a finite log-Sobolev constant under Lipschitz gradients~\citep{cattiaux2010note}. 
Our only quantitative use of the dissipativity condition
will be to get control on the second moments
of the tempering dynamics,
but it is also convenient to ensure
the log-Sobolev constants remain finite along
 the tempering path.
Indeed,
dissipativity of both $V_\nu$ and $V_\pi$ implies
dissipativity of $\mu_t$ from~\eqref{eq:geometric_path_definition},
so $C_{LS}(\mu_t) < \infty$. Our results will
crucially rely on the size of the inverse of these log-Sobolev
constants, so we define the notation
\begin{equation}\label{eqn:alpha_t}
\alpha_t := C_{LS}(\mu_t)^{-1} > 0, \quad \quad \forall t \geqslant 0.
\end{equation}
As a simple example, notice that if $V_\nu, V_\pi$ are $\alpha_\nu, \alpha_\pi$
strongly convex, then $\alpha_t \geqslant (1 - t) \alpha_\nu +
t\alpha_\pi$.
In section~\ref{subsec:exponentially_worse_poincare},
we investigate how $\alpha_t$ depends on the log-Sobolev
constants of $\ppsal$ and $\tg$, and here proceed with the statement
of our convergence results.

Given the proposal $\nu$ and target $\pi$, as well
as a tempering schedule $(\lambda_t)_{t \geqslant 0}$,
we are interested in the tempered Langevin dynamics,
where the moving target is
the geometric path $\mu_t$, as defined in~\eqref{eq:geometric_path_definition}.
In continuous time, this is the stochastic differential equation
\begin{align}
    \label{eq:continuous_time_dynamic}
    dX_t 
    &=
    -\big((1 - \lambda_t)\nabla V_{\nu}
    + \lambda_t \nabla V_{\pi} \big) 
    \mathrm{d}t
    +
    \sqrt{2} 
    \mathrm{d} W_t,
\end{align}
with initialization $X_0 \sim p_0$ for some $p_0\in \cP(\R^d)$, and where $W_t$ is a standard $d$-dimensional Brownian motion. We denote by $p_t$ the density of $X_t$ given by the continuous time dynamics~\eqref{eq:continuous_time_dynamic}. Our main upper bound in continuous time follows.

\begin{theorem}[Continuous time]
\label{th:continuous_time_upper_bound}
Suppose Assumption~\ref{assump:reg} and~\ref{assump:lipschitz_dissipative}
hold,
and let $(\alpha_t)_{t \geqslant 0}$ be the inverse log-Sobolev constants as in~\eqref{eqn:alpha_t},
assumed to be integrable.
Let $p_t$ be the law of \eqref{eq:continuous_time_dynamic} with initialization $p_0$ and denote by $\dot{\lambda}_t$ the weak time derivative of the tempering schedule. 
Then, for all $t \geq 0$:
\begin{align}
\label{eq:continuous_time_rate}
\KL(p_t, \pi)
&\leq
\exp(-2 \int_0^t \alpha_s ds) 
\KL(p_0, \mu_0)
+
A
(1 - \lambda_t)
+   
A
\int_0^t \dot{\lambda}_s 
\exp(-2 \int_s^t \alpha_v dv)
ds
\end{align}
where $A = 2(L_\pi + L_\nu) ( \frac{2(d + b_\nu + b_\pi)}{a_\nu \land a_\pi} +  \E_{p_0}[ \| x \|^2 ] )$ depends on the
constants from Assumption~\ref{assump:lipschitz_dissipative},
the second moment of the initial distribution $p_0$,
and linearly on the dimension $d$.
\end{theorem}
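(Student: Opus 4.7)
The plan is to first derive a Grönwall-type bound on $\KL(p_t,\mu_t)$ using the log-Sobolev inequality of the moving target, and then convert to $\KL(p_t,\pi)$ by paying a price proportional to $1-\lambda_t$.

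First I would differentiate $\KL(p_t,\mu_t)$ along the flow. The Fokker--Planck equation $\partial_t p_t = \nabla\cdot(p_t\nabla\log(p_t/\mu_t))$, combined with integration by parts, produces the familiar $-\FD(p_t,\mu_t)$ Fisher-information contribution from the evolution of $p_t$. The fact that $\mu_t$ itself moves contributes an extra term via $\partial_t \log \mu_t = -\dot\lambda_t[(V_\pi - V_\nu) - \E_{\mu_t}(V_\pi - V_\nu)]$, so the two combine into
$$\frac{d}{dt}\KL(p_t,\mu_t) = -\FD(p_t,\mu_t) + \dot\lambda_t\,(\E_{p_t}-\E_{\mu_t})[V_\pi-V_\nu].$$
Since Lipschitz gradient plus dissipativity of the convex combination $V_t = (1-\lambda_t)V_\nu + \lambda_t V_\pi$ implies a finite log-Sobolev constant $C_{LS}(\mu_t) = 1/\alpha_t$, I can invoke LSI to obtain $\FD(p_t,\mu_t) \geq 2\alpha_t \KL(p_t,\mu_t)$, reducing the identity to an ODE inequality amenable to Grönwall.

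Controlling the error term is the technical heart: $\phi := V_\pi - V_\nu$ has $(L_\pi+L_\nu)$-Lipschitz gradient, so a second-order Taylor expansion around the origin, followed by Young's inequality to absorb the first-order piece into the quadratic one, gives $|(\E_{p_t}-\E_{\mu_t})[\phi]| \leq (L_\pi+L_\nu)(\E_{p_t}\|X\|^2 + \E_{\mu_t}\|X\|^2)$ up to constants. The second moments are controlled uniformly in $t$ using dissipativity: for $\mu_t$, Stein's identity $\E_{\mu_t}[\langle\nabla V_t,x\rangle] = d$ combined with the dissipativity of $V_t$ (inherited convexly from $V_\nu,V_\pi$ since $a_t \geq a_\nu\wedge a_\pi$) gives $\E_{\mu_t}\|X\|^2 \leq (d+b_\nu+b_\pi)/(a_\nu\wedge a_\pi)$; for $p_t$, applying Itô's formula to $\|X_t\|^2$ along \eqref{eq:continuous_time_dynamic} and Grönwall produces $\E_{p_t}\|X_t\|^2 \leq \E_{p_0}\|X\|^2 + (d+b_\nu+b_\pi)/(a_\nu\wedge a_\pi)$. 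With the error term thus bounded by $(A/2)\dot\lambda_t$ uniformly in $t$, Grönwall on the ODE inequality yields $\KL(p_t,\mu_t) \leq e^{-2\int_0^t\alpha_s\,ds}\KL(p_0,\mu_0) + (A/2)\int_0^t\dot\lambda_s e^{-2\int_s^t\alpha_v\,dv}\,ds$.

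Finally I would convert to $\KL(p_t,\pi)$ via the decomposition $\KL(p_t,\pi) = \KL(p_t,\mu_t) + \E_{p_t}[\log(\mu_t/\pi)]$ together with the identity $\log(\mu_t/\pi) = (1-\lambda_t)\phi + \log(Z_\pi/Z_t)$. Jensen's inequality applied to $Z_t/Z_\pi = \E_\pi[e^{(1-\lambda_t)\phi}]$ gives $\log(Z_\pi/Z_t) \leq -(1-\lambda_t)\E_\pi[\phi]$, so $\KL(p_t,\pi) - \KL(p_t,\mu_t) \leq (1-\lambda_t)(\E_{p_t}-\E_\pi)[\phi]$, and the same Lipschitz-gradient-plus-moment argument (now using $\E_\pi\|X\|^2 \leq (d+b_\pi)/a_\pi$) bounds the right-hand side by $(A/2)(1-\lambda_t)$. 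Summing the two pieces gives the stated inequality. The principal obstacle will be keeping the constant $A$ clean: a naive bound on $|(\E_p-\E_q)[\phi]|$ introduces stray $\phi(0)$ and $\|\nabla\phi(0)\|$ terms which must be absorbed by Young's inequality into the uniform template $A \propto (L_\pi+L_\nu)\cdot(\text{second moments})$, and one also needs the dissipativity constants to remain uniformly positive along the entire tempering path so that the moment bounds do not blow up.
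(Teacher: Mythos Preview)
Your proposal is correct and follows essentially the same architecture as the paper: differentiate $\KL(p_t,\mu_t)$ along the flow, apply LSI to the Fisher-information term, bound the tempering drift term via second-moment control (dissipativity for $\mu_t$ and $p_t$), integrate via Gr\"onwall, and then pass to $\KL(p_t,\pi)$ using Jensen on the normalizer.

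The one place where the paper handles things more cleanly is precisely the obstacle you flag. Rather than Taylor-expanding $\phi=V_\pi-V_\nu$ around the origin and fighting the residual $\nabla\phi(0)$ via Young, the paper expands $V_\nu$ and $V_\pi$ \emph{separately, each around its own minimizer}: dissipativity localizes the minimizer $x_0$ of $V_\nu$ in $B(0,\sqrt{b_\nu/a_\nu})$, and since $\nabla V_\nu(x_0)=0$, $L_\nu$-smoothness gives the pointwise bound $V_\nu(x)-\inf V_\nu\le 2L_\nu(\|x\|^2+b_\nu/a_\nu)$ with no linear term at all. One then writes
\[
(\E_{p_t}-\E_{\mu_t})[V_\pi-V_\nu]\;\le\;\E_{p_t}[V_\pi-\inf V_\pi]+\E_{\mu_t}[V_\nu-\inf V_\nu],
\]
dropping the two nonnegative terms on the other side. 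This one-sided trick yields exactly the constant $A$ in both the Gr\"onwall step and the final $(1-\lambda_t)$ conversion (so each piece carries $A$, not $A/2$), without any stray $\phi(0)$ or $\|\nabla\phi(0)\|$ bookkeeping.
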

The proof of \Cref{th:continuous_time_upper_bound} can be found in \Cref{app:ssec:continuous_time_upper_bound}. 
To the best of our knowledge, Theorem~\ref{th:continuous_time_upper_bound}
is the first convergence analysis of Tempered Langevin dynamics, namely~\eqref{eq:continuous_time_dynamic},
in the literature.
The first two terms of the upper bound 
deal with the start and end of the tempering: %$(u_1)$
the first one  measures the convergence rate to the \textit{first} tempered distribution $\mu_0$ 
and the second one 
measures how far the \textit{current} tempered distribution is from the target.
The first term is set to zero when the schedule starts with
$\lambda_t = 0$ and the sampling process~\eqref{eq:continuous_time_dynamic} is initialized with the proposal $p_0 = \nu$. The second term is null when $\lambda_t = 1$ at the time when convergence is evaluated. The last term in the upper bound involves the tempering speed $\dot{\lambda}$, as well as the geometry of the moving targets $\mu_t$ via their inverse log-Sobolev constants $\alpha_t$ between inverse temperatures $\lambda_0$ and $\lambda_t$,
and, in particular, is null
when the annealing schedule is constant.
Note that, in~Appendix~\ref{sec:convergence_precision_form},
we translate Theorem~\ref{th:continuous_time_upper_bound}
to give sufficient conditions for convergence to the target at precision $\KL(p_t, \pi) < \eps$.

\begin{remark}[Recovering standard
upper bound for Langevin dynamics without tempering]
\label{rem:recovering_standard}
Notice that~\eqref{eq:continuous_time_dynamic}
recovers standard Langevin dynamics when we
set $\lambda_t \equiv 1$ for all $t \in \R_{+}$.
In this case, only the first term in the upper bound 
%$(u_1) = 6 e^{-2 t \alpha_{\pi}} \KL(p_0, \pi)$
is non-zero, and the bound recovers the standard continuous-time upper bound for Langevin dynamics,
as recalled in~\eqref{eq:ula_upper_bound_kl}.
\end{remark}

Next, we analyze the Euler-Maruyama discretization
of the continuous time Tempered Langevin Dynamics~\eqref{eq:continuous_time_dynamic}.
Namely, we take $X_0 \sim p_0$ and then,
for a tempering sequence $(\lambda_k)_{k \geqslant 0}$ and
a sequence of step-sizes $(h_k)_{k \geqslant 0}$,
we follow the iteration
\begin{equation}\label{eq:discrete_tempered_langevin}
    X_{k+1} 
    =
    X_{k} 
    -
    h_{k+1} ((1 - \lambda_{k + 1} )\nabla V_\nu + \lambda_{k + 1}
    \nabla V_\pi)
    +
    \sqrt{2 h_{k+1}} \eps_{k+1},
\end{equation}
where $\eps_{k + 1}$ is
independent standard Gaussian noise.
In other words, at each iteration $k$ we take
a Langevin step of size $h_{k + 1}$ towards $\mu_{k + 1} := c_{k + 1} \nu^{1 - \lambda_{k + 1}} \pi^{\lambda_{k + 1}}$.

\begin{theorem}[Discrete time]
\label{th:annealed_langevin_discrete_time}
Suppose Assumptions~\ref{assump:reg} and~\ref{assump:lipschitz_dissipative} hold, and
let $(\alpha_k)_{k \geqslant 1}$ be the inverse log-Sobolev constants as in~\eqref{eqn:alpha_t}.
Define the smoothness constant of the tempered path 
to be $L_k := (1 - \lambda_k)L_{\nu} + \lambda_k L_{\pi}$, and let $p_k$ be the law of \eqref{eq:annealed_langevin_discrete_time} with initialization $p_0$. Then, so long as 
$$
h_k \leqslant \min(\frac{\alpha_k}{4L_k^2}, \frac{a_\pi \land a_\nu}{2(L_\pi + L_\nu)^2}, 1)
$$ for all $k$, we have 
\begin{multline}       
\KL(p_k, \pi)        
\leq                     
\exp \Big( -\sum_{j=1}^k \alpha_j h_j \Big) \KL(p_0, \mu_0)  
+       
A'(1 - \lambda_k)   
\\
+     
A'
\sum_{i=1}^k 
(\lambda_i - \lambda_{i-1})
\exp \Big( -\sum_{j=i}^k \alpha_j h_j \Big)
+     
6\sum_{i=1}^k 
h_i^2 d L_i^2
\exp \Big( -\sum_{j=i+1}^k \alpha_j h_j \Big)
\end{multline}
where $A' = 2(L_\pi + L_\nu) \big( \max\big(\E_{p_0}[\|x\|^2], \frac{2(3(b_\pi + b_\nu)/2 + d)}{a_\pi\land a_\nu \land 1}\big)
+\frac{3(d + b_\nu + b_\pi)}{a_\pi \land a_\nu} \big)$ 
 depends on the constants from Assumption~\ref{assump:lipschitz_dissipative},
 the second moment of the initial distribution $p_0$, and linearly on the dimensional $d$. 
\end{theorem}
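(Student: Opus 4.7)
I would prove this as a discrete analogue of the continuous Grönwall argument behind Theorem~\ref{th:continuous_time_upper_bound}, with an additional Vempala–Wibisono--style discretization error. The quantity I would track throughout the iteration is $\KL(p_k, \mu_k)$; only at the very end would I pass from this to $\KL(p_k, \pi)$, and this conversion is precisely what produces the $A'(1 - \lambda_k)$ term in the statement.

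The core building block is a one-step inequality. Within iteration $k \mapsto k+1$, the update in~\eqref{eq:discrete_tempered_langevin} is standard ULA targeting the fixed distribution $\mu_{k+1}$, whose negative log-density has Lipschitz gradient with constant $L_{k+1}$. I would introduce the continuous-time interpolation
\begin{equation*}
d\tilde X_t = -\bigl[(1-\lambda_{k+1})\nabla V_\nu(X_k) + \lambda_{k+1}\nabla V_\pi(X_k)\bigr]\, dt + \sqrt{2}\, dW_t, \qquad t \in [0, h_{k+1}],
\end{equation*}
differentiate $\KL(\tilde p_t, \mu_{k+1})$ via the Fokker–Planck equation, split the resulting dissipation into a negative Fisher-divergence term and a bias from freezing the drift at $X_k$, and close the differential inequality using the log-Sobolev inequality of $\mu_{k+1}$ with constant $\alpha_{k+1}^{-1}$. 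A standard Brownian-moment estimate bounds the bias by $O(h_{k+1} L_{k+1}^2 \E_{p_k}[\|X_k\|^2])$, and integrating over $[0, h_{k+1}]$ yields $\KL(p_{k+1}, \mu_{k+1}) \leq e^{-\alpha_{k+1} h_{k+1}} \KL(p_k, \mu_{k+1}) + 6\, h_{k+1}^2 L_{k+1}^2 d$, provided $h_{k+1} \leq \alpha_{k+1}/(4 L_{k+1}^2)$; this is precisely the role of the first part of the step-size condition.

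To close the recursion in terms of $\KL(p_k, \mu_k)$ rather than $\KL(p_k, \mu_{k+1})$, I would use the exact identity
\begin{equation*}
\KL(p_k, \mu_{k+1}) - \KL(p_k, \mu_k) = \E_{p_k}\bigl[\log(\mu_k/\mu_{k+1})\bigr] = (\lambda_{k+1}-\lambda_k)\E_{p_k}[V_\pi - V_\nu] + \log(c_k/c_{k+1}),
\end{equation*}
and bound the right-hand side by $(\lambda_{k+1}-\lambda_k) \cdot A'/2$. The first summand is controlled by the Lipschitz-smoothness of $V_\pi, V_\nu$, which yields $|V_\pi(x) - V_\pi(0)| \leq \tfrac{L_\pi}{2}\|x\|^2 + \|\nabla V_\pi(0)\|\|x\|$ and similarly for $V_\nu$, combined with a uniform second-moment bound on $p_k$. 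The second summand is bounded analogously after writing $\log(c_k/c_{k+1}) = -\int_{\lambda_k}^{\lambda_{k+1}} \E_{\mu_\lambda}[V_\nu - V_\pi]\, d\lambda$ and using the uniform-in-$\lambda$ moment bound on $\mu_\lambda$ implied by dissipativity. The moment bound on $p_k$ itself is the content of a discrete Lyapunov argument: since $(1-\lambda)\nabla V_\nu + \lambda \nabla V_\pi$ is $(a_\pi \wedge a_\nu)$-dissipative with offset $b_\pi + b_\nu$ uniformly in $\lambda \in [0,1]$, the step-size condition $h_k \leq (a_\pi \wedge a_\nu)/(2(L_\pi + L_\nu)^2)$ gives exactly the explicit bound on $\E_{p_k}[\|X_k\|^2]$ that appears inside $A'$.

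Combining these pieces yields the recursion $\KL(p_{k+1}, \mu_{k+1}) \leq e^{-\alpha_{k+1} h_{k+1}}\bigl[\KL(p_k, \mu_k) + (A'/2)(\lambda_{k+1}-\lambda_k)\bigr] + 6\, h_{k+1}^2 L_{k+1}^2 d$, which I would unroll by a straightforward induction in $k$: the tempering correction introduced at step $i$ is then multiplied by $\exp(-\sum_{j=i}^k \alpha_j h_j)$ and the discretization correction by $\exp(-\sum_{j=i+1}^k \alpha_j h_j)$, matching the third and fourth terms of the stated bound. The final step is to convert $\KL(p_k, \mu_k)$ into $\KL(p_k, \pi)$ via $\E_{p_k}[\log(\mu_k / \pi)] = (1-\lambda_k)\E_{p_k}[V_\pi - V_\nu] + \log(c_k / c_\pi)$, which by the same smoothness-plus-moment argument is bounded by $A'(1-\lambda_k)$. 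The main difficulty lies in the third paragraph above: one must track the moving-target corrections with sharp enough constants so that they coalesce into a single prefactor $A'$ in the final statement, while simultaneously maintaining uniform Lyapunov control on $\E_{p_k}[\|X_k\|^2]$ and not losing the contraction factor $e^{-\alpha_{k+1} h_{k+1}}$ when it has to act through the moving-target correction.
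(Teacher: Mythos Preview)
Your proposal is correct and follows essentially the same architecture as the paper's proof: track $\KL(p_k,\mu_k)$ via a one-step Vempala--Wibisono contraction (cited directly as \cite[Lemma~3]{vempala2019rapid} in the paper), absorb the moving-target shift $\KL(p_{k-1},\mu_k)-\KL(p_{k-1},\mu_{k-1})$ using Lipschitz bounds plus uniform second-moment control (the paper's Lemma~\ref{lemma:boundedness_discrete_case}, which is exactly your discrete Lyapunov argument), unroll, and finally convert to $\KL(p_k,\pi)$ via Lemma~\ref{lemma:tempering_log_normalization}. The one minor difference is in how the log-normalizer increment is handled: you propose the exact integral identity $\log(c_k/c_{k+1}) = \int_{\lambda_k}^{\lambda_{k+1}} \E_{\mu_\lambda}[\log(\pi/\nu)]\,d\lambda$, whereas the paper avoids the integral by applying Jensen's inequality directly to $-\log \E_{\mu_k}[(\nu/\pi)^{\lambda_k-\lambda_{k-1}}]$, which collapses the correction to $(\lambda_k-\lambda_{k-1})\big(\E_{\mu_k}[\log(\pi/\nu)] - \E_{p_{k-1}}[\log(\pi/\nu)]\big)$ evaluated at a single temperature; both routes land on the same moment bound and the same constant $A'$ (so your $A'/2$ should in fact be $A'$).
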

The proof of \Cref{th:annealed_langevin_discrete_time} can be found in \Cref{app:sec:discrete_time}.
The first three terms are the discrete-time equivalent of those obtained in the continuous-time setup of Theorem~\ref{th:continuous_time_upper_bound}. 
The novelty is the fourth term which is the bias from the discretization: it becomes null as the step sizes $h_k$ tend to zero. Otherwise, it involves the geometry of the intermediate target distributions via their smoothness constants $L_k$, additionally to their inverse log-Sobolev constants $\alpha_k$. 
Again, when there is no tempering, i.e.  setting $
\lambda_k \equiv 1$, since the second and third term cancel, we recover the known upper bound on the convergence of ULA~\citep[Th. 2]{vempala2019rapid} up to a multiplicative constant.
Similarly to the continuous case, we derive in~Appendix~\ref{app:ssec:discre_time_precision}
sufficient conditions for converging to the target with a given precision $\KL(p_k, \pi) < \eps$.

Understanding the upper bounds in this section
mainly involves two key points: the geometry of the moving targets via their inverse log-Sobolev constants $\alpha_t$, and the tempering schedule $\lambda(\cdot)$.
These will be the focus of the next sections.

\section{Analysis and optimization of the upper-bounds}
\label{sec:evolution_geometric}

In this section we explore the continuous-time upper bound
from Theorem~\ref{th:continuous_time_upper_bound}. We first present in section~\ref{subsec:exponentially_worse_poincare} a simple example where the log-Sobolev constants of the intermediate
distributions along the geometric mean path can be exponentially worse than those of the target and proposal.
Motivated by this result,
we then conduct a detailed study of the
optimal tempering schedule in section~\ref{subsec:log_concave} in the setting
where both $\nu$ and $\pi$ are strongly log-concave.

\subsection{Geometric tempering can exponentially
worsen functional inequalities}
\label{subsec:exponentially_worse_poincare}

Because of the fundamental role that log-Sobolev constants
play in governing the convergence of Langevin dynamics without
tempering, it is natural that our upper bounds for
tempered Langevin dynamics
depend on the log-Sobolev constants of the intermediate distributions.
Nonetheless, the size of these log-Sobolev constants
is crucial for ensuring rapid convergence of tempered Langevin dynamics.
We are thus led to ask a fascinating yet, to the best
of our knowledge,
new question: how do the log-Sobolev constants
along the geometric tempering path depend on the
proposal and target distributions?

\begin{wrapfigure}[15]{r}{0.43\textwidth}
\vspace{-.8cm}
\begin{center}
\centerline{\includegraphics[width=0.5\columnwidth]{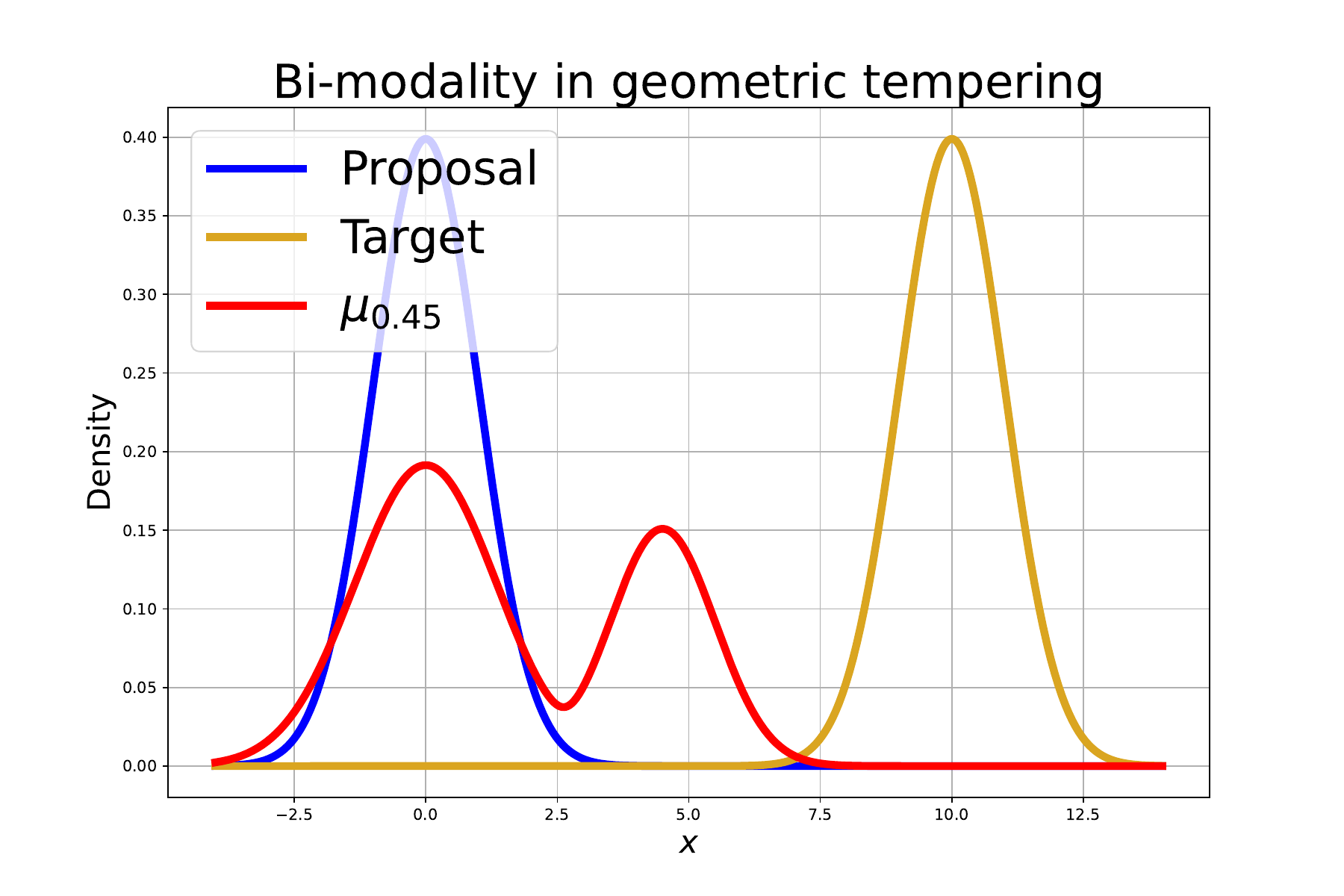}}
\vspace{-.4cm}
\caption{Bi-modal intermediate
distribution 
at $\lambda = 0.45$ with $\nu, \pi$ as in~\eqref{eqn:unimodal_small} for $m = 10$.
\label{fig:interpolating_sequences}}
\end{center}
\vspace{-.6cm}
\end{wrapfigure}

Developing a complete answer to this question is an interesting direction
for future work, but here we provide a simple example which shows that,
in general, the geometric path can actually make functional
inequalities {\it exponentially worse}. As is commonly done in practice, we take the 
proposal distribution $\nu$ to be Gaussian~\citep{blackjax2020github,zhang2021qis,thin2021annealedula}.
For the target, we take a parameter $m > 0$ and put
\begin{equation}\label{eqn:unimodal_small}
\tg := (1 - e^{-m^2/4}) \mathcal{N}(m, 1) +e^{-m^2/4}u_m,
\end{equation}
where $u_m$ is the smoothed
uniform distribution on $I_m := [-m, 2m]$
with density proportional to $e^{-\frac12 d(x, I_m)^2}$.
Note that without the small mixing with the smoothed uniform
distribution in~\eqref{eqn:unimodal_small},
the target $\tg$ would be strongly log-concave,
and the geometric path would remain well-conditioned.
The next result shows that while this small mixing does not
greatly worsen the log-Sobolev constant of $\ppsal$,
it does cause the geometric tempering
to have exponentially worse log-Sobolev constant,
in fact even Poincaré constant.

\begin{theorem}\label{th:exponentially_worse_poincare}
Let $\ppsal := \mathcal{N}(0, 1)$ and $\tg$ be as defined in~\eqref{eqn:unimodal_small}
for $m \geqslant 10$.
Then $C_{LS}(\ppsal) = 1$ and $C_{LS}(\tg) \leqslant 
81m^5$,
yet for all $\lambda \in [\frac{1}{2}, 1]$,
\begin{align*}
C_P(\itp_{\lambda}) \geqslant \frac{1}{4\cdot 10^{4}m} 
e^{m^2(1 - \lambda)/100} - m^2.
\end{align*}
Since $C_{LS}(\cdot) \geqslant C_P(\cdot)$, 
the above holds, in particular, with $C_{LS}(\itp_{\lambda})$
on the left-hand side.
\end{theorem}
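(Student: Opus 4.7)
My plan breaks the theorem into two parts. The log-Sobolev upper bounds are secondary: $C_{LS}(\ppsal) = 1$ is the standard Gaussian log-Sobolev inequality, and $C_{LS}(\tg) \leq 81 m^5$ I would prove by viewing $\tg$ as the mixture $(1 - e^{-m^2/4}) \mathcal{N}(m,1) + e^{-m^2/4} u_m$ of two log-concave components: $\mathcal{N}(m,1)$ has constant $1$, and $u_m$ is log-concave (the potential $\frac{1}{2} d(x, I_m)^2$ is convex) with effective diameter $3m$, hence $C_{LS}(u_m) = O(m^2)$. A mixture log-Sobolev inequality or a direct perturbation-of-log-concave argument then delivers a polynomial bound; the specific exponent $5$ is not essential and would follow by tracking constants.

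The substantive part is the exponential Poincar\'e lower bound on $\itp_\lambda$, which I plan to prove via a test-function argument. The first task is to understand the potential $U_\lambda(x) := (1-\lambda)x^2/2 + \lambda V_\tg(x)$, where $V_\tg = -\log \tg$. Comparing the Gaussian component $\phi(x-m) \approx 1/\sqrt{2\pi}$ near $x = m$ against the smoothed-uniform contribution $e^{-m^2/4}/(3m)$ throughout $I_m$, one sees that $\tg$ is Gaussian-dominated only on $|x - m| \lesssim m/\sqrt{2}$ and uniform-dominated (so $V_\tg \approx m^2/4$) elsewhere in $I_m$. Substituting into $U_\lambda$ produces, for $\lambda \in [\frac{1}{2}, 1]$, two local wells: a Gaussian-type well near $x = \lambda m$ at depth $\lambda(1-\lambda)m^2/2$, and a second well near $x = 0$ at depth $\lambda m^2/4 + O(\log m)$. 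Between them, the barrier near $x = m(1 - 1/\sqrt{2})$ sits at height $\lambda m^2/4 + c(1-\lambda)m^2$ for an absolute $c > 0$, i.e.\ $\Omega((1-\lambda) m^2)$ above the shallow well.

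Given this landscape, I take $f$ to be a smooth cutoff with $f = 1$ on $(-\infty, b]$, $f = 0$ on $[b + \delta, \infty)$, $|f'| \lesssim 1/\delta$, with $b$ at the barrier and $\delta$ of order $1$. Three estimates close the proof: (i) a lower bound on $\itp_\lambda((-\infty, b))$ by integrating $e^{-U_\lambda}$ on a unit neighborhood of $x=0$ and dividing by $Z_\lambda$; (ii) an upper bound on $Z_\lambda$ obtained by decomposing into contributions from the two wells and showing the Gaussian well dominates for $\lambda \geq \frac{1}{2}$, giving $Z_\lambda \lesssim e^{-\lambda(1-\lambda)m^2/2}$; (iii) an upper bound on $\itp_\lambda([b, b+\delta])$ from the barrier height. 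Applying $C_P(\itp_\lambda) \geq \Var_{\itp_\lambda}(f) / \|\nabla f\|^2_{L^2(\itp_\lambda)}$, the factors $e^{-\lambda(1-\lambda)m^2/2}$ cancel and the remaining $e^{c(1-\lambda)m^2}$ from the barrier yields the stated exponential rate in $(1-\lambda)m^2$.

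The main obstacle will be making the transition between Gaussian- and uniform-dominated regimes of $V_\tg$ quantitative enough to pin down a clean constant in the exponent, and ensuring that $\Var_{\itp_\lambda}(f)$ is non-negligible uniformly over $\lambda \in [\frac{1}{2}, 1]$: as $\lambda \to 1$, the shallow well's mass collapses like $e^{-\lambda(2\lambda-1)m^2/4}/m^\lambda$, eventually making the bound vacuous, which is consistent with the subtracted $-m^2$ term in the statement. The explicit prefactor $1/(4\cdot 10^4 m)$ and the conservative exponent $1/100$ should then come out of loose polynomial bookkeeping on the mode and barrier integrals.
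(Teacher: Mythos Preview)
Your proposal is correct and follows essentially the same route as the paper. The paper also uses a piecewise-linear indicator across the barrier region near $m(1-1/\sqrt{2})$ (taking $a = 1/\sqrt{2}$ in a parametrized family), bounds the mass on the left well, the right well, and the barrier interval, and divides variance by Dirichlet energy; the log-Sobolev bound on $\tg$ likewise goes through a mixture inequality (Schlichting) combined with $C_{LS}(u_m) = O(m^2)$ via Holley--Stroock. The only organizational differences are that the paper uses a transition interval of width $\Theta(m)$ rather than $O(1)$ and keeps track of the right-well mass explicitly rather than absorbing it into the $Z_\lambda$ estimate, and that the subtracted $-m^2$ in the statement actually arises from the $-3\mu_\lambda(I)$ cross term in the variance bound rather than from the left-well mass collapsing as $\lambda \to 1$; none of this affects the substance of your argument.
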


The proof of \Cref{th:exponentially_worse_poincare} can be found in \Cref{app:subsec:unimodal_lower_bounds}.
This example is plotted in Figure~\ref{fig:interpolating_sequences};
the intuition is that while
both distributions are unimodal and thus
well-conditioned,
the small mixing with the uniform measure creates multi-modality
in the geometric tempering.
The proof relies on careful analysis to characterize
this multi-modality, and then uses a test function which simply linearly interpolates
between the modes to witness
the exponentially large Poincaré constant.

Theorem~\ref{th:exponentially_worse_poincare}
rules out the possibility that the geometric tempering
generically improves, or even preserves, functional inequalities. Since our upper bounds
depend on the log-Sobolev constants of the geometric
tempering, %through Assumption~\ref{assumption:log_sobolev},
they also can suffer from such exponentially
poor conditioning. But since log-Sobolev inequalities
only govern worst-case convergence, it is
still at least {\it a priori} possible
that geometric tempering nonetheless achieves
fast convergence.
We investigate
this question further in section~\ref{sec:lower_bounds},
and here instead continue exploring
our upper bounds.

\subsection{Optimal tempering in the strongly log-concave case}
\label{subsec:log_concave}

Other than the log-Sobolev constants $\alpha_t$, the main input to our upper bounds is the tempering schedule
$\lambda$, and in this section we use our upper
bounds to investigate the optimal schedule, an important
question in practice~\citep{grosse2013annealing,song2020scorerecommendations}.
Both to make our analysis tractable and because 
of the degeneracy in the non log-concave case
identified in the previous section, we here restrict
to the strongly log-concave setting.
Specifically, we assume that both the proposal $\ppsal$ and the target $\tg$
are log-concave with strong-concavity parameters $\alpha_{\tg}, \alpha_{\ppsal} > 0$.
In this case,
$\alpha_t \geqslant(1-\lambda_t) \alpha_\nu + \lambda_t \alpha_\pi$. 

We start by reformulating the continuous time result
that we obtained in \Cref{th:continuous_time_upper_bound}, in the case where both the proposal $\ppsal$ and target $\tg$ are log-concave. 

\begin{corollary}             
    \label{cor:loose_upper_bound}
    Suppose $\ppsal$ and $\tg$
    satisfy Assumptions~\ref{assump:reg} and~\ref{assump:lipschitz_dissipative},
   are $\alpha_{\ppsal}$ and $\alpha_{\tg}$-strongly log-concave respectively 
    so that
    $\alpha_t \geqslant \lambda_t \alpha_\pi + (1 - \lambda_t) \alpha_\nu$,
    and that the process is initialized at the proposal distribution $p_0 = \nu$.
    Then for all $t\ge 0$, we have
    \begin{align}
    \label{eq:looser_upper_bound}
    \KL(p_t, \pi) 
    \leqslant AG_t(\lambda), \quad \quad
    G_t(\lambda) := 1 - 2\int_0^t \lambda_s \alpha_s \exp\big(
    -2 \int_s^t \alpha_v \ud v  
    \big)\ud s,
    \end{align}
    where $A$ is as in Theorem~\ref{th:continuous_time_upper_bound}.
    Suppose additionally that $\alpha_\pi \geqslant \alpha_\nu$. Then
    $G_t(\lambda)$ is minimized by the vanilla
    Langevin scheme $\lambda(t)\equiv 1$.
\end{corollary}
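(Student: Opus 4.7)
The corollary has two assertions: (i) the bound $\KL(p_t,\pi) \leq AG_t(\lambda)$, and (ii) that $\lambda(t) \equiv 1$ minimizes $G_t$. For (i), I would start from Theorem~\ref{th:continuous_time_upper_bound}. Since $p_0 = \nu$ and the schedule satisfies $\lambda_0 = 0$ (implicit, so that $\mu_0 = \nu = p_0$), the first term in~\eqref{eq:continuous_time_rate} vanishes, leaving $A(1 - \lambda_t) + A\int_0^t \dot\lambda_s e^{-2\int_s^t\alpha_v\ud v}\ud s$. The main step is the derivative identity $\frac{\ud}{\ud s}\bigl[e^{-2\int_s^t \alpha_v \ud v}\bigr] = 2\alpha_s e^{-2\int_s^t\alpha_v\ud v}$. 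Integration by parts in the integral (using $\lambda_0 = 0$ and $e^{-2\int_t^t \alpha_v \ud v} = 1$) yields $\int_0^t \dot\lambda_s e^{-2\int_s^t\alpha_v\ud v}\ud s = \lambda_t - 2\int_0^t \lambda_s\alpha_s e^{-2\int_s^t\alpha_v\ud v}\ud s$. Substituting back, the linear $\lambda_t$ terms cancel and one obtains $AG_t(\lambda)$ exactly.

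For (ii), direct computation gives $G_t(1) = 1 - 2\alpha_\pi\int_0^t e^{-2\alpha_\pi(t-s)}\ud s = e^{-2\alpha_\pi t}$, so the task reduces to showing $G_t(\lambda) \geq e^{-2\alpha_\pi t}$ for every admissible $\lambda$. I would reverse the integration-by-parts identity from (i) to express $G_t(\lambda) = 1 - \lambda_t + \int_0^t \dot\lambda_s e^{-2\int_s^t \alpha_v \ud v}\ud s$, a sum of non-negative terms. The hypotheses $\alpha_\pi \geq \alpha_\nu$ and $\lambda_v \in [0,1]$ give the interpolation bound $\tilde\alpha_v := \lambda_v\alpha_\pi + (1-\lambda_v)\alpha_\nu \leq \alpha_\pi$, hence $e^{-2\int_s^t \tilde\alpha_v \ud v} \geq e^{-2\alpha_\pi(t-s)}$. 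Since $\dot\lambda_s \geq 0$, this yields $G_t(\lambda) \geq 1 - \lambda_t + \int_0^t \dot\lambda_s e^{-2\alpha_\pi(t-s)}\ud s$. A further integration by parts reduces the right-hand side to $1 - 2\alpha_\pi\int_0^t \lambda_s e^{-2\alpha_\pi(t-s)}\ud s$, and bounding $\lambda_s \leq 1$ gives the desired $e^{-2\alpha_\pi t}$.

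The main subtlety is that the integration-by-parts form of $G_t$ is monotone \emph{decreasing} in the $\alpha_v$'s (each term being weighted by $e^{-2\int \alpha}$), so the log-concavity hypothesis $\alpha_v \geq \tilde\alpha_v$ cannot be substituted into $G_t$ to yield a lower bound directly. The argument in (ii) therefore implicitly treats $G_t$ with the interpolation $\tilde\alpha_v$ in place of the possibly-larger $\alpha_v$ — the worst-case $G_t$ consistent with strong log-concavity, which is tight for e.g.\ Gaussian proposal and target. Under this reading the minimization over $\lambda$ is well-posed and the minimizer is $\lambda \equiv 1$ with value $e^{-2\alpha_\pi t}$.
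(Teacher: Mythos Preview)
Your approach matches the paper's almost exactly: integration by parts to collapse the three terms of Theorem~\ref{th:continuous_time_upper_bound} into $AG_t(\lambda)$, then the same IBP--compare--reverse-IBP sequence for the optimality claim. Two points deserve comment.

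\textbf{The $\lambda_0=0$ assumption.} You assume $\lambda_0=0$ so that $\mu_0=\nu=p_0$ and $(u_1)$ vanishes. The paper does \emph{not} make this assumption: it instead bounds $\KL(\nu,\mu_0)\leq \lambda_0(\KL(\nu,\pi)+\KL(\pi,\nu))\leq \lambda_0 A$, so that $(u_1)\leq \lambda_0 A\exp(-2\int_0^t\alpha_v\,\ud v)$. The integration by parts in $(u_3)$ then produces a boundary term $-\lambda_0 A\exp(-2\int_0^t\alpha_v\,\ud v)$ which cancels with this. This matters because the very schedule claimed to be optimal, $\lambda\equiv 1$, has $\lambda_0=1$; under your assumption it is not even admissible, so strictly speaking your part~(i) does not establish the bound for vanilla Langevin. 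The fix is the one-line estimate above.

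Correspondingly, in part~(ii) your IBP identity should read $G_t(\lambda)=1-\lambda_t+\lambda_0\Phi_0+\int_0^t\dot\lambda_s\Phi_s\,\ud s$ with $\Phi_s:=\exp(-2\int_s^t\alpha_v\,\ud v)$. The extra term is nonnegative and is reabsorbed exactly by the reverse IBP, so your inequality chain goes through unchanged once it is restored.

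\textbf{The $\alpha_v$ vs.\ $\tilde\alpha_v$ point.} Your observation is well taken. The paper's proof of part~(ii) uses $\Phi_s\geq e^{-2(t-s)\alpha_\pi}$, which requires $\alpha_v\leq\alpha_\pi$; this holds for the interpolated $\tilde\alpha_v=(1-\lambda_v)\alpha_\nu+\lambda_v\alpha_\pi$ but not necessarily for the true inverse log-Sobolev constant. The paper is indeed working with $\alpha_v=\tilde\alpha_v$ throughout Section~\ref{subsec:log_concave} (this is made explicit in the proof of Proposition~\ref{prop:optimal_tempering}, where the inverse map $\lambda_s=\frac{\alpha_\nu-\alpha_s}{\alpha_\nu-\alpha_\pi}$ is used), so your reading is the intended one.
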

The proof of Corollary~\ref{cor:loose_upper_bound} is in Appendix~\ref{app:ssec:loose_upper_bound} and relies on integration by parts. This new expression allows us to optimize on the schedule $\lambda$ independently of the \emph{unknown} quantity $A$. As one could expect, the corollary above states that when the target $\pi$ is already better conditioned than the proposal $\nu$, so $\alpha_\pi \geqslant \alpha_\nu$, there is no need to temper and the optimal tempering scheme is given by vanilla Langevin $\lambda \equiv 1$. In the next proposition, we show on the contrary that if $\pi$ is too poorly conditioned with respect to $\nu$, then one should indeed use a custom tempering scheme other than Langevin.

\begin{proposition}
\label{prop:optimal_tempering}
    Suppose $\alpha_\pi < \alpha_\nu$.
    Then the functional $G_t(\lambda)$ is minimized for the scheme 
    \begin{align}
    \lambda(s) = \min \bigg(\frac{\alpha_\nu}{\alpha_\nu - \alpha_\pi}\frac{1 + \alpha_\nu s}{2 + \alpha_\nu s}, 1 \bigg).
    \label{eq:optimal_schedule}
    \end{align}
    In particular, the optimal schedule does not
    depend on the horizon $t$, and when
    $\alpha_\pi \geq \alpha_\nu/2$,
    vanilla Langevin $\lambda(t)\equiv 1$ is optimal.
    \end{proposition}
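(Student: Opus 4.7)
My plan is to apply calculus of variations to the functional $G_t(\lambda)$ from Corollary~\ref{cor:loose_upper_bound}, using the explicit bound $\alpha_s = \alpha_\nu + \Delta \lambda_s$ valid under strong log-concavity of $\nu$ and $\pi$, where $\Delta := \alpha_\pi - \alpha_\nu < 0$. After substitution, $G_t$ depends on $\lambda$ alone. I would compute its pointwise first variation $\delta G_t/\delta\lambda(s_0)$ for $s_0 \in (0, t)$. The derivative has two contributions: a direct one from the factor $\lambda_s \alpha_s$, and a nonlocal one from $\alpha_v$ inside the exponential, for which I would use $\partial_{\lambda_{s_0}} \int_s^t \alpha_v\,dv = \Delta \, \mathbf{1}[s \leq s_0]$. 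Setting the result to zero yields the nonlocal Euler--Lagrange equation
\begin{align*}
(\alpha_\nu + 2\Delta\lambda_{s_0})\, e^{-2\int_{s_0}^t \alpha_v\,dv} = 2\Delta\int_0^{s_0} \lambda_u \alpha_u \, e^{-2\int_u^t \alpha_v\,dv}\,du.
\end{align*}

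The key step is then to differentiate this integral equation in $s_0$: the right-hand side becomes $2\Delta \lambda_{s_0} \alpha_{s_0} e^{-2\int_{s_0}^t \alpha_v dv}$, while the left-hand side picks up $2\Delta \dot\lambda_{s_0}$ plus a term from differentiating the exponential. Dividing through by the common exponential yields the pointwise ODE $\dot\lambda_{s_0} = -(\alpha_\nu + \Delta\lambda_{s_0})^2/\Delta$; in particular, setting $y_s := \alpha_s = \alpha_\nu + \Delta\lambda_s$, this becomes $\dot y_s = -y_s^2$, whose general solution is $y_s = 1/(s+C)$. To fix $C$, I would evaluate the original integral equation at $s_0 = 0$, where the right-hand side vanishes, forcing $\alpha_\nu + 2\Delta\lambda_0 = 0$, so $\alpha_0 = \alpha_\nu/2$ and thus $C = 2/\alpha_\nu$. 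Inverting the relation $\lambda_s = (\alpha_s - \alpha_\nu)/\Delta$ with $\alpha_s = \alpha_\nu/(2+\alpha_\nu s)$ recovers the claimed schedule $\lambda(s) = \frac{\alpha_\nu}{\alpha_\nu - \alpha_\pi}\cdot\frac{1+\alpha_\nu s}{2+\alpha_\nu s}$, which is nondecreasing and manifestly independent of the horizon $t$.

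Finally, I would handle the constraint $\lambda_s \in [0, 1]$. The unconstrained formula satisfies $\lambda_0 \leq 1$ iff $\alpha_\pi \leq \alpha_\nu/2$. When $\alpha_\pi \geq \alpha_\nu/2$, the unconstrained optimum violates feasibility already at $s=0$; the boundary derivative $\delta G_t/\delta \lambda_0 \propto -(\alpha_\nu + 2\Delta\lambda_0)$ is negative for every $\lambda_0 \in [\alpha_\nu/(-2\Delta), 1]$, so $G_t$ decreases in $\lambda_0$ along the feasible interval, forcing $\lambda_0 = 1$ at the minimum; by monotonicity this gives $\lambda \equiv 1$, recovering vanilla Langevin. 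In the complementary regime $\alpha_\pi < \alpha_\nu/2$, the formula is valid until it reaches $1$ at time $s^\star = (\alpha_\nu - 2\alpha_\pi)/(\alpha_\nu \alpha_\pi)$, after which we cap at $1$, giving the $\min(\cdot,1)$ in the statement. I expect the main obstacle to be verifying rigorously that this critical schedule is a global minimizer rather than a saddle point, under the constraints $\lambda \in [0,1]$ and $\dot\lambda \geq 0$; I would address this either through a second-variation computation or via a direct comparison argument using the integration-by-parts identity $G_t(\lambda) = 1 - \lambda_t + \lambda_0 e^{-2\int_0^t \alpha_v dv} + \int_0^t \dot\lambda_s \, e^{-2\int_s^t \alpha_v dv}\,ds$, which reduces the problem to a more transparent, essentially linear optimization in $\dot\lambda$ once the coupling between $\alpha_v$ and $\lambda$ is handled carefully.
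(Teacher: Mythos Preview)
Your Euler--Lagrange computation is correct and genuinely different from the paper's route: the pointwise first variation, the differentiation of the resulting integral equation to the Riccati ODE $\dot y_s=-y_s^2$ for $y_s=\alpha_s$, and the boundary condition at $s_0=0$ all check out and lead directly to the claimed schedule. This is a considerably shorter path to the \emph{candidate} optimizer than what the paper does, and it makes the horizon-independence transparent.

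The paper instead changes variables to $\Phi_s:=\exp\bigl(-\int_s^t\alpha_u\,du\bigr)$, under which the objective becomes (up to an affine map) the \emph{quadratic} functional $\int_0^t\dot\Phi_s^2\,ds+\tfrac{\alpha_\nu}{2}\Phi_0^2$ subject to $\Phi_t=1$, $\alpha_\pi\Phi\le\dot\Phi\le\alpha_\nu\Phi$, and a log-concavity constraint $\ddot\Phi\Phi\le\dot\Phi^2$ encoding $\dot\lambda\ge0$. The last two constraints are non-convex, so the paper relaxes to $\Phi\ge0$, $\dot\Phi\ge\alpha_\pi\Phi$, $\Phi_t=1$ in $W^{1,2}$, proves existence, solves the relaxed convex problem by a case analysis (the minimizer is exponential, linear, or linear-then-exponential), and verifies the relaxed optimum satisfies the original constraints. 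Translating back via $\alpha_s=\dot\Phi_s/\Phi_s$ recovers~\eqref{eq:optimal_schedule}.

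The real gap in your proposal is the one you already identify: global optimality. Neither of your two fixes closes it. A second-variation argument yields only local optimality, and $G_t$ is not convex in $\lambda$ (the exponential contains $\lambda$), so a critical point need not be a global minimum. Your integration-by-parts identity is correct but is still nonlinear in $\lambda$ through $\alpha_v$ in the exponent; it does not become ``essentially linear'' until you absorb the exponential into a new unknown --- and that is exactly the $\Phi$ substitution, which is the paper's key idea. Likewise, your argument for the regime $\alpha_\pi\ge\alpha_\nu/2$ via the sign of $\delta G_t/\delta\lambda_0$ is only a first-order condition at one point and does not by itself force $\lambda\equiv1$ globally. If you want to keep your Euler--Lagrange derivation for the formula, the cleanest way to certify global optimality is to borrow the $\Phi$ change of variable for that single step.
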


Proposition~\ref{prop:optimal_tempering} is proven in Appendix~\ref{app:ssec:proof_of_optimal_tempering_formula}.
Hence when the target is sufficiently peakier than the proposal $\alpha_\pi \geq \alpha_\nu/2$, tempering does \textit{not} improve
convergence beyond that of vanilla Langevin. Conversely, when the target is flat enough $\alpha_\pi < \alpha_\nu/2$, then tempered Langevin with the schedule in~\eqref{eq:optimal_schedule} \textit{does} improve convergence over vanilla Langevin. While the optimal schedule is provided analytically in~\eqref{eq:optimal_schedule}, it cannot be computed exactly in practice given that the constant $\alpha_\pi$ that describes the geometry of the target distribution is unknown. On the other hand,
in the limit of a flat and log-concave target $\alpha_\pi \rightarrow 0$, the optimal schedule tends to $\lambda(s) = 1 - \frac{1}{2 + \alpha_\nu s}$, which can be implemented in practice.

A natural question follows: can other schedules, which may not be optimal but are feasible to implement in general, actually improve convergence beyond that of vanilla Langevin? A simple example is
the linear schedule. Here, we obtain an explicit convergence rate that does not improve on standard Langevin at large times, but that is faster than standard Langevin at small times for small $\alpha_{\pi}$.
\begin{proposition}
    \label{prop:linear_tempering}
 Assume $\alpha_\nu > \alpha_\pi$. 
  Let $t > 0$ until which  the continuous process~    \ref{eq:continuous_time_dynamic} is run. Then, with the linear tempering scheme $\lambda(s) \equiv \frac{s}{t}$ defined on [0,t], $G_t$ in the upper-bound of Corollary~\ref{cor:loose_upper_bound} writes
    $$
    G_t(\lambda) = \sqrt{\frac{\pi}{4t(\alpha_\nu - \alpha_\pi)}}\biggl\{\erfcx\biggl(\alpha_\pi \sqrt{\frac{t}{\alpha_\nu - \alpha_\pi}}\biggr) - e^{-(\alpha_\nu + \alpha_\pi)t}\erfcx\biggl(\alpha_\nu \sqrt{\frac{t}{\alpha_\nu - \alpha_\pi}}\biggr)\biggr\},
    $$
    where $\erfcx$ is the complementary scaled error function
    $
    \erfcx(x) = e^{x^2}(1 - \erf(x))
    $.
    In particular, as $t \to \infty$,
    we have $G_t(\lambda) \sim \frac{1}{2 \alpha_\pi t}$.
\end{proposition}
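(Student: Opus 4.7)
The plan is to start from the corollary's definition $G_t(\lambda) = 1 - 2\int_0^t \lambda_s \alpha_s \exp(-2\int_s^t \alpha_v \ud v)\ud s$ with $\lambda_s = s/t$ and $\alpha_s$ taken to be the lower bound $(1-\lambda_s)\alpha_\nu + \lambda_s\alpha_\pi = \alpha_\nu - (\alpha_\nu - \alpha_\pi)s/t$. The first useful move is to recognize that $2\alpha_s\exp(-2\int_s^t \alpha_v \ud v)$ is exactly the derivative of $s \mapsto \exp(-2\int_s^t \alpha_v \ud v)$, which allows integration by parts against $\lambda_s$. Since $\lambda_0 = 0$, $\lambda_t = 1$ and $\dot\lambda_s = 1/t$, the boundary terms collapse to $1$ and we obtain the clean reformulation
\begin{equation*}
G_t(\lambda) = \frac{1}{t}\int_0^t \exp\Bigl(-2\int_s^t \alpha_v \ud v\Bigr)\ud s.
\end{equation*}

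Next I would compute the inner integral explicitly. Because $\alpha_v$ is affine in $v$ the result is a quadratic in $s$, and setting $\Delta := \alpha_\nu - \alpha_\pi$ I would complete the square to rewrite the exponent as a Gaussian centered at $\alpha_\nu t/\Delta$ with variance $t/(2\Delta)$, producing a pre-factor of the form $\exp(\alpha_\pi^2 t/\Delta)$ after using the identity $\alpha_\nu^2 - \Delta(\alpha_\nu+\alpha_\pi) = \alpha_\pi^2$. The substitution $u = \sqrt{\Delta/t}(s - \alpha_\nu t/\Delta)$ then reduces the integral over $s \in [0,t]$ to a Gaussian integral between the endpoints $-\alpha_\nu\sqrt{t/\Delta}$ and $-\alpha_\pi\sqrt{t/\Delta}$, which I would express using the error function.

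The final manipulation is a bookkeeping step converting $\erf$ values into $\erfcx$ values. Using $\erfcx(x) = e^{x^2}(1-\erf(x))$ on each term and exploiting the identity $\alpha_\nu^2 - \alpha_\pi^2 = \Delta(\alpha_\nu+\alpha_\pi)$, the $\exp(\alpha_\pi^2 t/\Delta)$ prefactor exactly cancels the $e^{x^2}$ factors, leaving the advertised combination $\erfcx(\alpha_\pi\sqrt{t/\Delta}) - e^{-(\alpha_\nu+\alpha_\pi)t}\erfcx(\alpha_\nu\sqrt{t/\Delta})$, with the overall constant $\sqrt{\pi/(4t\Delta)}$ arising from the Jacobian $\sqrt{t/\Delta}$ of the substitution and the $\sqrt{\pi}/2$ from the $\erf$ conversion.

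For the asymptotic $G_t(\lambda) \sim 1/(2\alpha_\pi t)$ as $t\to\infty$, I would use the standard expansion $\erfcx(x) = 1/(x\sqrt{\pi}) + O(x^{-3})$ for large $x$ on the first term; the second term is negligible since $e^{-(\alpha_\nu+\alpha_\pi)t}$ decays exponentially while $\erfcx(\alpha_\nu\sqrt{t/\Delta})$ decays only as $1/\sqrt{t}$. The leading behavior then simplifies cleanly to $\sqrt{\pi/(4t\Delta)} \cdot \sqrt{\Delta/(\pi t)}/\alpha_\pi = 1/(2\alpha_\pi t)$. The main obstacle is the bookkeeping in the $\erf$-to-$\erfcx$ conversion; the identity $\alpha_\nu^2 - \Delta(\alpha_\nu+\alpha_\pi) = \alpha_\pi^2$ is what makes the prefactors collapse into the advertised form and would be the one step warranting careful verification.
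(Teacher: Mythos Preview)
Your proposal is correct and follows essentially the same route as the paper: integrate by parts to obtain $G_t(\lambda)=\frac{1}{t}\int_0^t \exp\bigl(-2\int_s^t \alpha_v\,\ud v\bigr)\ud s$, evaluate the inner integral to get a quadratic exponent in $s$, and reduce to an error-function integral. The one minor difference is that the paper stops at the equivalent $\erf$ form $\sqrt{\pi/(4t\Delta)}\,e^{\alpha_\pi^2 t/\Delta}\bigl(\erf(\alpha_\nu\sqrt{t/\Delta})-\erf(\alpha_\pi\sqrt{t/\Delta})\bigr)$ and does the asymptotics there, whereas you carry out the explicit conversion to $\erfcx$ (which the statement requires) and read off the asymptotic directly from $\erfcx(x)\sim 1/(x\sqrt{\pi})$; your bookkeeping via $\alpha_\nu^2-\Delta(\alpha_\nu+\alpha_\pi)=\alpha_\pi^2$ is exactly what is needed.
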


\begin{wrapfigure}[14]{r}{.5\textwidth}
\vspace{-1.4cm}
\begin{center}
\centerline{\includegraphics[width=0.5\columnwidth]{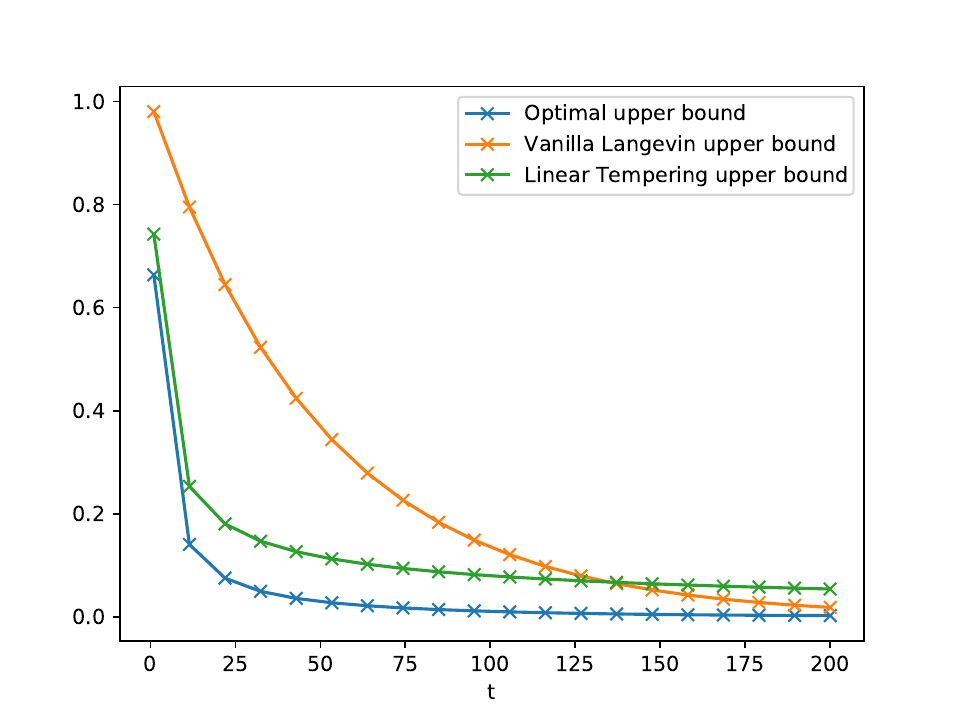}}
\vspace{-.4cm}
\caption{Value of the upper-bound $G$ uling the optimal tempering, linear tempering, and standard Langevin.}
%Optimal tempering converges faster than standard Langevin.}}
\label{fig:optimal_schedule}
\end{center}
\end{wrapfigure}

The proof of Proposition~\ref{prop:linear_tempering} appears in Appendix~\ref{app:ssec:continuous_time_linear_tempering}.

Finally, we compare in Figure~\ref{fig:optimal_schedule} the value of the upper-bound $G$ at different time horizons $t$ using
the optimal tempering scheme given in \eqref{eq:optimal_schedule}, the linear tempering scheme $\lambda(s) \equiv \frac{s}{t}$ and vanilla Langevin $\lambda(s) \equiv 1$, for $\alpha_\pi = 0.01$ and $\alpha_\nu = 1.0$. We observe that the optimal schedule always yields a lower value of $G$ (as it should) and that it provides a clear advantage over vanilla Langevin at short horizons $t$. However, as the horizon grows, this edge is eventually lost. Similarly, we observe that the linear tempering schedule improves over vanilla Langevin at short time horizons yet is eventually beaten as the horizon grows.

\section{Exponentially
slow convergence for tempered
Langevin dynamics}
\label{sec:lower_bounds}

The upper bounds in
Theorem~\ref{th:annealed_langevin_discrete_time}
show that,
so long as the log-Sobolev constants
remain well-behaved along the geometric tempering,
Langevin with moving target can successfully sample
from the target distribution.
While this assumption holds in favorable
cases where the proposal and target are both
strongly log-concave, it may fail even when
both distributions are well-conditioned yet not
log-concave,
as demonstrated in Theorem~\ref{th:exponentially_worse_poincare}.
Nevertheless, such
poor functional inequalities
do not necessarily rule out fast convergence of the geometric
tempering because
functional inequalities only govern mixing in the worst case,
and so it is still possible {\it a priori} 
that our upper bounds are loose in such cases. The purpose of this section is to develop
rigorous lower bounds for the geometric tempering in two
simple examples where the log-Sobolev constants
of the intermediate distributions are poor.

\paragraph*{Setup.}
Throughout this section, we let the proposal
distribution
be the standard Gaussian $\ppsal := \mathcal{N}(0, 1)$,
as is common for the geometric tempering~\citep{blackjax2020github,dai2020smcreview,zhang2021qis,thin2021annealedula}.
We work in the setting
of a discrete temperature schedule but with continuous
time inner Langevin iterations.
In particular, we take as fixed
a discrete temperature 
schedule $\lambda_0 = 0 < \lambda_1 < \cdots < \lambda_{K - 1}
< \lambda_K = 1$
and a sequence of times $T_1, \ldots, T_K$ that the
inner Langevin iteration is run for.
We then
put $p^0_{T_0} := \ppsal$ and define
inductively $p^k_t$ to be the law at time $t$
of Langevin initialized
at $p^{k - 1}_{T_{k - 1}}$
and run towards
$\mu_k$.
Our goal is then to study the convergence
of the final output $p^K_{T_K}$ towards $\tg$,
as a function of the temperature and time sequences,
as well as the target $\tg$.

\paragraph*{Bi-modal target.}
Since annealing is designed to
overcome multi-modality and associated poor mixing
in the target, we begin by studying a
toy model of multi-modality.
Specifically, we take a parameter
$m > 0$, and consider
\begin{equation}
\label{eqn:bimodal_target}
\ppsal := \mathcal{N}(0, 1), \quad \quad 
\tg := \frac12 \mathcal{N}(0, 1) + \frac12
\mathcal{N}(m, 1).
\end{equation}
It can be checked that $\tg$,
as well as some of the intermediate
distributions in the geometric tempering,
have log-Sobolev constant exponential in
the mode separation $m$. However, as we mentioned above, it remains
possible {\it a priori} that the geometric tempering
avoids this exponentially poor conditioning by following non-worst
case distributions.
The following result shows that this is not the
case, and indeed that the
total time spent on Langevin dynamics
must be exponential in $m$ to ensure convergence.

\begin{theorem}\label{th:bimodal_lower_bound}
Suppose $\ppsal, \tg$ are as
in~\eqref{eqn:bimodal_target} for $m \geqslant 11$.
Then
$$
\mathrm{TV}(p^K_{T_K}, \tg) \geqslant 
\frac{1}{20} -
16\cdot e^{-m^2/64} \cdot \sum_{k = 1}^K T_k.
$$
\end{theorem}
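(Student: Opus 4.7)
The plan is to reduce the theorem to controlling the deviation in path-space between the tempered Langevin diffusion and an Ornstein--Uhlenbeck (OU) reference process driven by the same Brownian motion. A direct computation using the identity $\TV(p, \alpha p + (1-\alpha)q) = (1-\alpha)\TV(p, q)$ gives
\begin{align*}
\TV(\ppsal, \tg) = \tfrac{1}{2}\TV(\mathcal{N}(0,1), \mathcal{N}(m,1)) = \tfrac{1}{2}(1 - 2\Phi(-m/2)) \geq \tfrac{1}{2} - e^{-m^2/8},
\end{align*}
so by the triangle inequality it suffices to upper-bound $\TV(p^K_{T_K}, \ppsal)$. Setting $T_{\mathrm{tot}} := \sum_{k=1}^K T_k$, the tempered iterates can be realized as a single continuous diffusion $(X_t)_{t \in [0, T_{\mathrm{tot}}]}$ with piecewise-constant-in-time drift $-\nabla V_{\itp_{k(t)}}(x) = -x + \lambda_{k(t)} m w(x)$, where a direct calculation shows that $w(x) \in [0,1]$ is the posterior weight of the second mode and satisfies the crucial pointwise bound $w(x) \leq e^{m(x-m/2)}$.

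I would then couple $X_t$ to the stationary OU process $dY_t = -Y_t\, dt + \sqrt{2}\, dW_t$ with $Y_0 = X_0 \sim \ppsal$ and the same Brownian motion, so that $Y_t \sim \ppsal$ for every $t \geq 0$. By the data-processing inequality, $\TV(p^K_{T_K}, \ppsal) \leq \TV(\mathrm{Law}(X_{[0,T_{\mathrm{tot}}]}), \mathrm{Law}(Y_{[0,T_{\mathrm{tot}}]}))$. Since Novikov's condition is trivial ($w \leq 1$ so the drift gap is bounded), Girsanov's theorem yields
\begin{align*}
\KL(\mathrm{Law}(Y), \mathrm{Law}(X)) = \tfrac{1}{4}\int_0^{T_{\mathrm{tot}}} \mathbb{E}[\lambda_{k(t)}^2 m^2 w(Y_t)^2]\, dt.
\end{align*}

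The quantitative heart is that $w(Y_t)$ is exponentially small when $Y_t$ lies in the Gaussian bulk: combining $w \leq 1$ with the pointwise bound $w(y) \leq e^{-m^2/4}$ valid for $y < m/4$ and a standard Gaussian tail estimate,
\begin{align*}
\mathbb{E}[w(Y_t)^2] \leq \mathbb{E}[w(Y_t)] \leq \mathbb{P}(Y_t \geq m/4) + e^{-m^2/4} \leq 2 e^{-m^2/32}.
\end{align*}
Plugging in, $\KL(\mathrm{Law}(Y), \mathrm{Law}(X)) \leq \tfrac{m^2 T_{\mathrm{tot}}}{2}\, e^{-m^2/32}$, and Pinsker's inequality yields $\TV \leq \tfrac{m}{2}\sqrt{T_{\mathrm{tot}}}\, e^{-m^2/64}$. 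Combined with the first paragraph,
\begin{align*}
\TV(p^K_{T_K}, \tg) \geq \tfrac{1}{2} - e^{-m^2/8} - \tfrac{m}{2}\sqrt{T_{\mathrm{tot}}}\, e^{-m^2/64},
\end{align*}
which yields the theorem's form after absorbing the $e^{-m^2/8}$ term into the $\tfrac{1}{20}$ constant (using $m \geq 11$) and converting $\sqrt{T_{\mathrm{tot}}}$ into $T_{\mathrm{tot}}$ via a case split on whether $T_{\mathrm{tot}}$ is above or below a threshold polynomial in $m$.

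The main obstacle is precisely this last conversion: the Pinsker bound is naturally $\sqrt{T_{\mathrm{tot}}}$, while the statement is linear in $T_{\mathrm{tot}}$. A sharper alternative that bypasses the square root is to directly analyze $M(t) := p^X_t([m/2, \infty))$ using the Fokker--Planck equation and bound the instantaneous boundary flux across $\{x = m/2\}$ by $O(e^{-m^2/64})$ uniformly in time; integrating this flux estimate then yields the linear-in-$T_{\mathrm{tot}}$ growth directly. Controlling the flux, in turn, reduces to a density-at-the-boundary estimate of Gaussian type along the tempered trajectory, which is where the coupling to OU above reappears in a slightly more delicate form.
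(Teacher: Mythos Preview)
Your Girsanov--Pinsker approach is correct and in fact proves a sharper inequality than the stated theorem; the ``obstacle'' you flag in the last paragraph is illusory. Once you have
\[
\TV(p^K_{T_K},\tg)\;\geq\;\tfrac12 - e^{-m^2/8}-\tfrac{m}{2}\,e^{-m^2/64}\sqrt{T_{\mathrm{tot}}},
\]
the theorem follows by an elementary quadratic inequality: writing $c=e^{-m^2/64}$ and $u=\sqrt{T_{\mathrm{tot}}}$, the difference between your bound and the claimed one is
\[
\tfrac{9}{20}-e^{-m^2/8}-\tfrac{m}{2}cu+16cu^2,
\]
which is minimized at $u=m/64$ with value $\tfrac{9}{20}-e^{-m^2/8}-\tfrac{m^2}{256}e^{-m^2/64}$. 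Since $x\mapsto \tfrac{x}{256}e^{-x/64}$ is decreasing for $x>64$, this is at least $\tfrac{9}{20}-e^{-15}-\tfrac{121}{256}e^{-121/64}\approx 0.38>0$ for all $m\geq 11$. No case split or flux argument is needed; your $\sqrt{T}$ bound is simply stronger and remains nonvacuous up to $T_{\mathrm{tot}}\sim e^{m^2/32}$ rather than $e^{m^2/64}$.

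This is a genuinely different route from the paper. The paper does not couple to a reference diffusion; instead it builds a piecewise-linear test function $\psi$ ramping from $0$ to $1$ across $I=[m/4,3m/4]$, telescopes $\tg(\psi)-p^K_{T_K}(\psi)$ over the $K$ stages, and controls each increment $p^{k-1}_{T_{k-1}}(\psi)-p^k_{T_k}(\psi)$ by writing $\partial_t\langle\psi,p^k_t\rangle=\langle\mathcal L_{\mu_k}\psi,p^k_t\rangle$ and observing that $\mathcal L_{\mu_k}\psi$ is supported on $I$ with size $O(m)$ there. Cauchy--Schwarz then bounds the time-derivative by $\sqrt{\mu_k(I)}\,\sqrt{\chi^2(p^k_t,\mu_k)+1}$, and $\mu_k(I)\leq 6e^{-m^2/32}$ uniformly in $k$ (this is where the exponential smallness enters in the paper, versus your $\E_\nu[w^2]$). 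Integrating in $t$ yields the linear dependence on $\sum T_k$ directly. Your ``flux across $\{x=m/2\}$'' idea is essentially this argument.

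The trade-off: your Girsanov argument is shorter and gives a quantitatively stronger conclusion here, but it leans on the special structure that $\ppsal$ is a mixture component of $\tg$ so that the drift gap $\lambda_k m w(x)$ is explicit and small on the $\ppsal$-bulk. The paper's test-function framework (their general Theorem in the appendix) is what they reuse, essentially unchanged, for the unimodal example of Theorem~\ref{th:unimodal_lower_bound}, where no such clean coupling is available.
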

The proof of Theorem~\ref{th:bimodal_lower_bound}
can be found in Appendix~\ref{app:subsec:bimodal_lower_bounds}.
Several remarks are in order.
First, we emphasize that this result has no dependence whatsoever
on the tempering schedule.
Second, because of classical inequalities between divergences~\citep[Section 7.6]{fdivnotespolyanskiy}, the same result 
% of course 
also holds with $\sqrt{\frac{1}{2}\KL(\rho_T,\mu_k)},
\sqrt{\frac{1}{2}\chi^2(\rho_T, \mu_k)}$
on the left-hand side.
And finally, we remark that other
works have studied related tempering algorithms
with a uniform, rather than Gaussian, proposal, and shown
that in this case tempered dynamics
 can successfully sample from
mixtures of Gaussians, so long as 
each mixture component has the same variance~\citep{woodard2009simulatedtemperingspectralbis,ge2018simulatedtempering}.
Theorem~\ref{th:bimodal_lower_bound}
shows that, in contrast, tempered Langevin dynamics with
Gaussian proposal can fail for mixtures of Gaussians with
identical variance, and
therefore additionally suggests an unfavorable property of Gaussian proposals as compared with uniform.

\paragraph*{Well-conditioned target.}
The previous result
establishes a simple case where tempering is a natural
approach, yet the geometric tempering suffers from
exponentially poor performance. 
Since the target distribution in that
case has poor functional inequalities,
a simpler sanity check for the geometric
tempering is to establish its convergence when the target
is actually well-conditioned.
We now analyze an example similar to that of Theorem~\ref{th:exponentially_worse_poincare},
where the target has decent log-Sobolev inequalities,
and show that, nonetheless, the geometric tempering suffers from exponentially poor
convergence, at least until the end of the tempering
scheme. For some $m > 0$, put
\begin{equation}\label{eqn:target_mixture}
\ppsal := \mathcal{N}(0, 1), \quad \quad
\tg := \frac{1}{2}\mathcal{N}(m, 1) + 
\frac12 u_m,
\end{equation}
where, as in section~\ref{subsec:exponentially_worse_poincare},
$u_m$ is the probability distribution with density proportional
to $e^{-\frac{1}{2}d^2(x, I_m)}$ for $I_m = [-m, 2m]$.
Similar to Theorem~\ref{th:exponentially_worse_poincare},
although $\pi$ has a reasonable log-Sobolev
constant, the geometric tempering
is partially bimodal.
The next result shows that the geometric tempering suffers from exponentially
slow convergence, at least until the tempering is
almost at $1$.

\begin{theorem}\label{th:unimodal_lower_bound}
Suppose $\ppsal, \tg$ are as in~\eqref{eqn:target_mixture}
for $m \geqslant 4$. Then $C_{LS}(\tg) \leqslant 324m^3$,
yet for all $k \in [K]$
$$
\mathrm{TV}(p^k_{T_k}, \tg)
\geqslant \frac15 
- \delta_k
- 10m \cdot \sqrt{\delta_k}
\cdot \sum_{i = 1}^k T_i,
$$
for
$
\delta_k := 6m^3 e^{- (1 - \lambda_k) m^2/10}.
$
\end{theorem}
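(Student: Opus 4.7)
The strategy parallels the proof of Theorem~\ref{th:bimodal_lower_bound}, adapted to the unimodal well-conditioned target setting. The plan is to identify a witness set $B$ on which the target $\tg$ has constant-order mass while both the proposal $\ppsal$ and every intermediate $\itp_k$ with $\lambda_k < 1$ place only exponentially small mass, and then show that Langevin dynamics cannot transport enough mass into $B$ to close the gap within the allotted time budget. A natural choice is an interval around the Gaussian mode of $\tg$, e.g.\ $B := [m-1, m+1]$, on which $\tg(B)$ is bounded below by a constant (at least $1/5 + \delta_k$) while $\ppsal(B)$ is negligible for $m \geq 4$. The conclusion then follows from $\TV(p^k_{T_k}, \tg) \geq \tg(B) - p^k_{T_k}(B)$ upon bounding $p^k_{T_k}(B) \leq \delta_k + 10 m \sqrt{\delta_k}\sum_{i=1}^{k} T_i$.

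First, I would establish the polynomial bound $C_{LS}(\tg) \leq 324 m^3$ by decomposing $\tg = \tfrac12 \mathcal{N}(m, 1) + \tfrac12 u_m$ as a mixture of two log-concave components whose supports substantially overlap (the Gaussian mean $m$ lies inside $I_m = [-m, 2m]$). Each component has polynomial log-Sobolev constant (namely $1$ for the Gaussian and $O(m^2)$ for the smoothed uniform), and a classical mixture log-Sobolev bound yields the claimed polynomial dependence in $m$. Next, I would bound $\itp_k(B) \leq \delta_k$ by writing
\[
\itp_k(B) = \frac{\int_B e^{-(1 - \lambda_k) x^2/2}\,\tg(x)^{\lambda_k}\,\mathrm{d}x}{\int_{\R} e^{-(1 - \lambda_k) x^2/2}\,\tg(x)^{\lambda_k}\,\mathrm{d}x},
\]
bounding the numerator by the suppression factor $e^{-(1 - \lambda_k) m^2/2}$ coming from $V_\ppsal(x) = x^2/2$ near $x = m$, times a polynomial in $m$, and the denominator from below by restricting to a neighborhood of $0$ where both factors are of constant order (the uniform component of $\tg$ gives $\tg(x)^{\lambda_k} \gtrsim (1/m)^{\lambda_k}$ there).

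The core technical step is a metastability bound: during the $k$-th Langevin phase with stationary target $\itp_k$, the mass of $p_t$ in $B$ can change by at most $O(m\sqrt{\itp_k(B)}\, T_k)$ in total. I would choose a $1$-Lipschitz test function $\phi$ with $\phi \equiv 1$ on $B$ and $\phi \equiv 0$ outside a slight enlargement $\tilde B$, and differentiate $\int \phi\,\mathrm{d}p_t = \int L_k \phi\,\mathrm{d}p_t$, where $L_k = \Delta - \nabla V_{\itp_k}\cdot\nabla$ is the Langevin generator. Since $L_k\phi$ is supported on the boundary strip $\tilde B \setminus B$ with magnitude $O(m)$ there, a change of measure from $p_t$ to $\itp_k$ together with Cauchy-Schwarz and control on the Radon-Nikodym weight along the trajectory converts the boundary mass $\itp_k(\tilde B \setminus B) \lesssim \itp_k(B)$ into the $\sqrt{\delta_k}$ flux factor. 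Iterating across the $k$ phases and using the monotonicity $\delta_i \leq \delta_k$ for $i \leq k$ (since $\lambda_i$ is non-decreasing) yields $p^k_{T_k}(B) \leq \ppsal(B) + \itp_k(B) + 10m\sqrt{\delta_k}\sum_{i=1}^{k} T_i$, whence the claimed bound.

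The main obstacle is this metastability step: because Theorem~\ref{th:exponentially_worse_poincare} shows that $C_{LS}(\itp_k)$ can be exponentially large in $(1 - \lambda_k) m^2$, one cannot invoke any global mixing rate for $\itp_k$. The flux bound must instead be purely local, depending only on the smallness of $\itp_k(B)$ and not on any functional inequality, and careful handling of the change of measure between $p_t$ and $\itp_k$ on the boundary strip is the technical crux.
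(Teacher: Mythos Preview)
Your approach is correct and matches the paper's framework closely: both argue via a Lipschitz test function and bound the flux across each Langevin phase using the generator, Cauchy--Schwarz, and a uniform-in-time $\chi^2$ control; the paper packages this as a general TV lower bound (Theorem~\ref{th:general_lower_bound}) and then specializes. The one substantive difference is the location of the witness interval: you put $B=[m-1,m+1]$ at the Gaussian mode of $\pi$ and use a bump function, whereas the paper takes the \emph{barrier} region $I=[m(1-a)/2,\,m(1-a)]\approx[m/2,\,m-1]$ (with $a=\sqrt{2\log 2}/m$) and a monotone ramp $\psi$ going $0\to 1$ across $I$, so that $\psi\equiv 1$ on the whole half-line to the right and thus captures all of $\pi$'s mass near and beyond the mode. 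Their wider interval makes $\psi'=O(1/m)$, which cancels the $O(m)$ score bound; your $O(1)$-width boundary strip has $\phi'=O(1)$ but compensates through the even smaller $\mu_i$-mass at the mode itself (exponent $-(1-\lambda_i)m^2/2$ versus $-(1-\lambda_i)m^2/8$ at the barrier), so either accounting fits inside the stated $\delta_k$. The step you correctly flagged as the crux---controlling the Radon--Nikodym weight---is handled in the paper by a short recursive $\chi^2$ lemma (Lemma~\ref{lem:chi2_recursive}): from the pointwise bound $\nu\le 8m\,\pi$ (since $\pi\ge\tfrac12 u_m\ge\tfrac{1}{8m}\nu$) together with monotonicity of $\chi^2(\cdot,\mu_i)$ along Langevin toward $\mu_i$, one obtains $\chi^2(p_t^i,\mu_i)+1\le(8m)^{\lambda_i}\le 8m$ uniformly in $t$ and $i$, which is exactly the ingredient your sketch needs to close. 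For the log-Sobolev bound on $\pi$, the paper likewise uses a mixture inequality (due to Schlichting) applied to $\tfrac12\mathcal{N}(m,1)+\tfrac12 u_m$, after bounding $C_{LS}(u_m)=O(m^2)$ via Holley--Stroock, just as you outlined.
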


The proof of Theorem~\ref{th:unimodal_lower_bound}
can be found in Appendix~\ref{app:subsec:unimodal_lower_bounds}.
This result demonstrates
a simple case where geometric tempering suffers
from exponentially slower convergence
than Langevin dynamics,
and to the best of our knowledge is the first result
of this kind for the geometric tempering in the literature.
As for Theorem~\ref{th:bimodal_lower_bound},
the statement holds
with $\sqrt{\frac{1}{2}\KL(\rho_T,\mu_k)},
\sqrt{\frac{1}{2}\chi^2(\rho_T, \mu_k)}$
on the left-hand side.
We finally remark that Theorems~\ref{th:bimodal_lower_bound}
and~\ref{th:unimodal_lower_bound} are both
consequences of a more general TV lower
bound, Theorem~\ref{th:general_lower_bound}
in Appendix~\ref{app:subsec:general_lower_bound}, which may
be of further interest.

\section{Conclusion}
\label{sec:discussion}

We provided the first convergence analysis of tempered
Langevin dynamics,~\eqref{eq:continuous_time_dynamic},
in the literature, both for 
continuous (Theorem~\ref{th:continuous_time_upper_bound}), and discrete (Theorem~\ref{th:annealed_langevin_discrete_time}), time. Our bounds are naturally
driven by the log-Sobolev constants of the geometric tempering path,
and we identified in Theorem~\ref{th:exponentially_worse_poincare} a surprising degeneracy of the geometric
tempering where it can make these constants exponentially
worse than that of the target.
Restricting to the strongly log-concave setting,
we rigorously established the optimal tempering
schedule for our upper bounds in Proposition~\ref{prop:optimal_tempering}, and identified a regime
where it is strictly distinct from vanilla Langevin.
Finally, we developed rigorous lower bounds 
proving exponentially slow convergence
for a bimodal target
in Theorem~\ref{th:bimodal_lower_bound}, and 
even demonstrated a novel failure 
of the geometric tempering 
for 
a uni-modal target in Theorem~\ref{th:unimodal_lower_bound}, where it has exponentially worse
performance than vanilla Langevin.
Interesting questions for future work include developing
a more complete understanding of the log-Sobolev constants
along the geometric tempering,
particularly for the uniform proposal,
as well as identifying alternative paths for Langevin
which have more favorable properties.
In this connection, we mention some recent works
which modify the geometric path so that mode weights are kept constant along the path~\citep{bhatnagar2015simulatedtempering,tawn2018temperingschedule}; it would be interesting to extend our 
analysis to these algorithms.

% Acknowledgements

\paragraph{Acknowledgements}
A. Vacher, O. Chehab, and A. Korba were supported by funding from the French ANR JCJC WOS granted to A. Korba.

% Bibliography
\vskip 0.2in
\bibliography{references}

% Appendix
\newpage

\appendix
\clearpage

\section*{Appendix}

The paper and appendix are organized as follows.

\renewcommand{\contentsname}{}
\vspace{-1cm}
\tableofcontents
\newpage

\section{Convergence of continuous-time tempered Langevin dynamics}
\label{app:sec:continuous_time}

In this section, we give
the proof of our continuous-time
upper bound, Theorem~\ref{th:continuous_time_upper_bound}.
In Appendix~\ref{sec:continuous_well_posed}
we discuss the well-posedness of the continuous time
dynamics. The proof
of Theorem~\ref{th:continuous_time_upper_bound}
is then given in Appendix~\ref{app:ssec:continuous_time_upper_bound}.
In the next section, Appendix~\ref{sec:convergence_precision_form},
we derive from Theorem~\ref{th:continuous_time_upper_bound}
guarantees for convergence to a given precision.
In Appendix~\ref{app:ssec:deferred_proofs_continuous_time}
we give the proofs of the lemmas we used in the proof of Theorem~\ref{th:continuous_time_upper_bound}.

\subsection{Well-posedness of continuous tempered Langevin dynamics}\label{sec:continuous_well_posed}
Notice that in \Cref{eq:continuous_time_dynamic}, the drift $b(t,x)=\nabla \log \mu_t(x)$ is time-dependent and the diffusion coefficient is constant.
Still, existence and uniqueness of solutions can be guaranteed under mild assumptions. For instance, if $\nabla \log \nu$ and $\nabla \log \pi$ are $L_{\nu}$ and $L_{\pi}$-Lipschitz respectively, the drift is also Lipschitz with bounded constant $L\le \max(L_{\nu},L_{\pi})$ 
% \ak{cite Lemma 1?}
(since the dynamic $\lambda_t$ is bounded between 0 and 1),  hence satisfy the usual linear
growth condition in the second argument. In this case, there exists a unique, continuous strong solution $(X_t)_{t\ge 0}$ adapted to the filtration generated by the Brownian motion which satisfies \Cref{eq:continuous_time_dynamic} \citep{ito1951stochastic,gikhman2004theory}. 
Strong uniqueness also means that if two processes satisfy this equation with the same initial conditions, their trajectories are almost surely indistinguishable.
The law $(p_t)_{t\ge0}$ of \Cref{eq:continuous_time_dynamic} satisfies a Fokker-Planck equation \citep{bogachev2022fokker} that can be written
\begin{align}
    \label{eq:fokker_planck_tempered}
    \frac{\partial p_t}{\partial t} = \div(p_t \nabla \log \Big(\frac{p_t}{\mu_t}\Big))
    \enspace .
\end{align}
Generally, proving the existence and uniqueness of solutions for FPEs is more difficult than for SDEs, but 
one can also get under mild assumptions existence and uniqueness for solutions of \Cref{eq:fokker_planck_tempered}, under the same Lipschitzness assumptions on the score of proposal and target distributions, and using that the tempering dynamics are bounded in $[0,1]$~\citep[Proposition 2]{bris2008existence}. 

\subsection{Proof of \Cref{th:continuous_time_upper_bound}}
\label{app:ssec:continuous_time_upper_bound}

In this section, we will prove Theorem~\ref{th:continuous_time_upper_bound}
on the convergence rate of tempered Langevin dynamics.
The proof is in two steps: first, we compute how the process $p_t$ tracks the moving target $\mu_t$, then we compute how well the moving target $\mu_t$ tracks the final target $\pi$. 

\paragraph{Step 1: contraction of $\KL(p_t, \mu_t)$ over time.}  
Using the chain rule, we have
\begin{align*}
    \frac{d}{dt} \KL(p_t, \mu_t)
    &=
    \int \log\Big(\frac{p_t}{\mu_t}\Big) \frac{\partial p_t}{\partial t} 
    + 
    \int \frac{-p_t}{\mu_t}\frac{\partial \mu_t}{\partial t}
    := a + b.
\end{align*}
The first term $a$ involves the Langevin dynamics given by the Fokker-Planck equation~\eqref{eq:fokker_planck_tempered}. Hence, using
integration by parts and the inverse log-Sobolev
constants defined in~\eqref{eqn:alpha_t}, we
obtain:
\begin{align}
    a 
    =
    \int \log \Big( \frac{p_t}{\mu_t} \Big) 
    \div \Big( p_t \nabla \log \Big(\frac{p_t}{\mu_t}\Big) \Big)
    &=
    - \int p_t \Big\| \nabla \log \Big(\frac{p_t}{\mu_t} \Big)\Big\|^2 \nonumber \\
    &=
    - \FD(p_t, \mu_t)
    \leq -2 \alpha_t \KL(p_t, \mu_t). \label{eq:a_final}
\end{align}
The second term $b$ is specific to the tempering scheme: it involves the tempering dynamics $\dot{\mu}_t$, which is zero when there is no tempering and are here determined by the tempering rule $\mu_t = c_{\lambda_t} \nu^{1 - \lambda_t} \pi^{\lambda_t}$, where $c_{\lambda_t} = 1 / \int \nu^{1 - \lambda_t} \pi^{\lambda_t}$ is a normalizing factor, that we will denote $c_t$ in this proof to alleviate notation. We can compute the (log) tempering dynamics 
\begin{align*}
    \frac{\partial \log \mu_t}{\partial t}  
    &=
    \frac{\partial}{\partial t}
    \Big(
    \lambda_t \log 
    \frac{\pi}{\nu}
    + 
    \log \nu
    - 
    \log \int \nu^{1 - \lambda_t} \pi^{\lambda_t}
    \Big)
    =
    \dot{\lambda}_t \log \frac{\pi}{\nu}
    -
    \frac{
    \frac{\partial}{\partial t} \int \nu^{1 - \lambda_t} \pi^{\lambda_t}
    }{
    \int \nu^{1 - \lambda_t} \pi^{\lambda_t}
    }
    \\
    &=
    \dot{\lambda}_t \log \frac{\pi}{\nu}
    -
    \dot{\lambda}_t
    \int \log \frac{\pi}{\nu}
    \frac{
    \nu^{1 - \lambda_t} \pi^{\lambda_t}
    }{
    \int \nu^{1 - \lambda_t} \pi^{\lambda_t}
    }
    =
    \dot{\lambda}_t \Big(
    \log \frac{\pi}{\nu}
    -
    \E_{\mu_t} \Big[
    \log \frac{\pi}{\nu} 
    \Big]
    \Big)
\end{align*}
so that the second term becomes
\begin{align*}
    b 
    &=
    \int p_t \frac{-\partial \log \mu_t}{\partial t}
    =
    \dot{\lambda}_t \Big(
    \E_{\mu_t} \Big[ \log \frac{\pi}{\nu} \Big]
    -
    \E_{p_t} \Big[ \log \frac{\pi}{\nu}
    \Big]
    \Big)
    \enspace .
\end{align*}
To control this term we prove the following Lemma in Appendix~\ref{app:ssec:deferred_proofs_continuous_time}.
\begin{lemma}[Discrepancy between the laws of the sampling process and tempering path]
    \label{lem:difference_control}
    We have
    \begin{align*}
         \E_{\mu_t} \Big[ \log \frac{\pi}{\nu} \Big]
        -
        \E_{p_t} \Big[ \log \frac{\pi}{\nu}
        \Big] \leqslant 
        2(L_\pi + L_\nu) \Big( \frac{2(d + b_\nu + b_\pi)}{a_\nu \land a_\pi} +  \E_{p_0}[ \| x \|^2 ] \Big).
    \end{align*}
\end{lemma}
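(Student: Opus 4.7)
The plan is to reduce the discrepancy to a pair of differences of expected potentials, bound each using the Lipschitz gradient centered at a minimizer, and then control the second moments of $\mu_t$ and $p_t$ via dissipativity. Since $\nu \propto e^{-V_\nu}$ and $\pi \propto e^{-V_\pi}$, we have $\log(\pi/\nu)(x) = V_\nu(x) - V_\pi(x) + \log(Z_\nu/Z_\pi)$; the additive constant cancels in the difference $\E_{\mu_t}[\log(\pi/\nu)] - \E_{p_t}[\log(\pi/\nu)]$, so it suffices to bound $|\E_{\mu_t}[V_\nu] - \E_{p_t}[V_\nu]|$ and $|\E_{\mu_t}[V_\pi] - \E_{p_t}[V_\pi]|$ separately.

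For the first of these, since $V_\nu$ has $L_\nu$-Lipschitz gradient (hence is continuous) and is coercive (an easy consequence of dissipativity via radial integration of $\langle \nabla V_\nu(x), x\rangle \geq a_\nu\|x\|^2 - b_\nu$), $V_\nu$ admits a global minimizer $x_*^\nu$ at which $\nabla V_\nu(x_*^\nu)=0$. The descent lemma then yields $0 \leq V_\nu(x) - V_\nu(x_*^\nu) \leq \frac{L_\nu}{2}\|x-x_*^\nu\|^2$, so that $|\E_{\mu_t}[V_\nu] - \E_{p_t}[V_\nu]| \leq \frac{L_\nu}{2}\bigl(\E_{\mu_t}[\|x-x_*^\nu\|^2] + \E_{p_t}[\|x-x_*^\nu\|^2]\bigr)$. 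Using $\|x-x_*^\nu\|^2 \leq 2\|x\|^2 + 2\|x_*^\nu\|^2$ and reading the bound $\|x_*^\nu\|^2 \leq b_\nu/a_\nu$ off dissipativity evaluated at $x_*^\nu$, we arrive at $|\E_{\mu_t}[V_\nu] - \E_{p_t}[V_\nu]| \leq L_\nu\bigl(\E_{\mu_t}[\|x\|^2] + \E_{p_t}[\|x\|^2] + 2b_\nu/a_\nu\bigr)$, with the symmetric estimate holding for $V_\pi$.

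It remains to control the second moments uniformly in $t$. For the tempered density $\mu_t \propto e^{-V_t}$ with $V_t := (1-\lambda_t)V_\nu + \lambda_t V_\pi$, integration by parts gives $\E_{\mu_t}[\langle \nabla V_t, x\rangle] = d$, and since $V_t$ inherits dissipativity as a convex combination with constants $a_\nu \wedge a_\pi$ and $b_\nu + b_\pi$, this yields $\E_{\mu_t}[\|x\|^2] \leq (d+b_\nu+b_\pi)/(a_\nu \wedge a_\pi)$. For the SDE law $p_t$, applying It\^o's formula to $\|X_t\|^2$ along~\eqref{eq:continuous_time_dynamic} produces the differential inequality $\frac{d}{dt}\E_{p_t}[\|x\|^2] \leq -2(a_\nu \wedge a_\pi)\E_{p_t}[\|x\|^2] + 2(d+b_\nu+b_\pi)$, and Gr\"onwall gives $\E_{p_t}[\|x\|^2] \leq \E_{p_0}[\|x\|^2] + (d+b_\nu+b_\pi)/(a_\nu \wedge a_\pi)$ for all $t \geq 0$. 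Substituting these moment bounds into the two Lipschitz estimates and summing gives $(L_\nu+L_\pi)\bigl(\frac{2(d+b_\nu+b_\pi)}{a_\nu \wedge a_\pi} + \E_{p_0}[\|x\|^2]\bigr) + 2L_\nu b_\nu/a_\nu + 2L_\pi b_\pi/a_\pi$; the main obstacle is then simply bookkeeping, which I would handle with the crude unifications $2L_\nu b_\nu/a_\nu + 2L_\pi b_\pi/a_\pi \leq 2(L_\nu+L_\pi)(b_\nu+b_\pi)/(a_\nu \wedge a_\pi)$ and $2d + 4(b_\nu+b_\pi) \leq 4(d+b_\nu+b_\pi)$ (which holds since $d \geq 0$), matching the stated form $2(L_\nu+L_\pi)\bigl(\frac{2(d+b_\nu+b_\pi)}{a_\nu \wedge a_\pi} + \E_{p_0}[\|x\|^2]\bigr)$.
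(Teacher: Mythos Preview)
Your proof is correct and follows essentially the same approach as the paper: both reduce to pointwise quadratic bounds on $V_\nu, V_\pi$ via the Lipschitz gradient and the location of their minimizers (the paper's Proposition~\ref{prop:pointwise_estimates}), then control $\E_{\mu_t}[\|x\|^2]$ by integration by parts plus dissipativity and $\E_{p_t}[\|x\|^2]$ by differentiating along the dynamics plus dissipativity (the paper's Lemma~\ref{lem:second_moments}). The only cosmetic differences are that the paper uses an asymmetric pairing ($V_\nu$ with $\mu_t$, $V_\pi$ with $p_t$) rather than bounding each $|\E_{\mu_t}[V] - \E_{p_t}[V]|$ symmetrically, and uses a $\max$ argument instead of Gr\"onwall for the $p_t$ moment; your slightly sharper descent constant $L/2$ offsets the extra terms from the symmetric split, and your final bound indeed sits below the stated one.
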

Let $A$ be the right-hand side in Lemma~\ref{lem:difference_control}, namely
\begin{align}
    A := 
        2(L_\pi + L_\nu) \Big( \frac{2(d + b_\nu + b_\pi)}{a_\nu \land a_\pi} +  \E_{p_0}[ \| x \|^2 ] \Big).
    \label{eq:def_a}
\end{align}
Then we obtain
\begin{align}
    \label{eq:b_final}
    b \leqslant A\dot{\lambda}_t.
\end{align}
Combining~\eqref{eq:a_final}
and~\eqref{eq:b_final} we obtain
\begin{align*}
    \frac{d}{dt} \KL(p_t, \mu_t)
    &\leq
    -2 \alpha_t
    \KL(p_t, \mu_t) 
    + 
    A\dot{\lambda}_t.
\end{align*}
To conclude this step, we apply
Gr{\"o}nwall's lemma~\citep[Lemma 1.1]{mischler2019gronwall}
to yield
\begin{align*}
    \KL(p_t,& \mu_t)
    \leq
    \exp( \int_0^t -2\alpha_s
    ds) 
    \KL(p_0, \mu_0) +
    A
    \int_0^t \dot{\lambda}_s 
    \exp(\int_s^t -2 \alpha_v dv) 
    ds.
\end{align*}

\paragraph{Step 2: compare $\KL(p_t, \mu_t)$ and $\KL(p_t, \pi)$.}
This measures how well the moving target $\mu_t$ tracks the final target $\pi$. We write
\begin{align*}
    \KL(p_t, \pi)
    & =
    \KL(p_t, \mu_t)
    +
    \KL(p_t, \pi)
    -
    \KL(p_t, \mu_t) \\
    &=
    \KL(p_t, \mu_t)
    +
    \E_{p_t} \Big[
    \log \frac{\mu_t}{\pi}
    \Big] 
    \\
    &= \KL(p_t, \mu_t)
    +(1 - \lambda_t)
    \E_{p_t} \Big[ \log(\frac{\nu}{\pi}) \Big]
    + \log c_t.
\end{align*}
We make the following observation on the log normalizing constants,
proved in Appendix~\ref{app:ssec:deferred_proofs_continuous_time}.
\begin{lemma}[Bounding the normalizing constant of the 
law of the tempering path]
    \label{lemma:tempering_log_normalization}
    We have
    \begin{align*}
        0
        \leq
        \log c_t
        \leq
        \min \Big(
        \lambda_t
        \KL(\nu, \pi),
        (1 - \lambda_t)
        \KL(\pi, \nu)
        \Big)\enspace .
    \end{align*}
\end{lemma}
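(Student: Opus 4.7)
The plan is to observe that $\log c_t = -\log Z_t$ where $Z_t := \int \nu^{1-\lambda_t}\pi^{\lambda_t}\,dx$, so the desired inequalities are equivalent to $\log Z_t \leq 0$ (lower bound on $\log c_t$) and $\log Z_t \geq -\min\!\bigl(\lambda_t \KL(\nu,\pi),\, (1-\lambda_t)\KL(\pi,\nu)\bigr)$ (upper bound on $\log c_t$). Both will follow from Jensen's inequality applied to the two symmetric expectation representations
\[ Z_t \;=\; \E_\nu\!\bigl[(\pi/\nu)^{\lambda_t}\bigr] \;=\; \E_\pi\!\bigl[(\nu/\pi)^{1-\lambda_t}\bigr], \]
which are meaningful since $\nu, \pi$ are both absolutely continuous by Assumption~\ref{assump:reg}.

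For the lower bound $\log c_t \geq 0$, I would use the first representation and apply Jensen's inequality to the map $x \mapsto x^{\lambda_t}$, which is concave on $\R_+$ because $\lambda_t \in [0,1]$; this gives $Z_t \leq (\E_\nu[\pi/\nu])^{\lambda_t} = 1$, hence $\log c_t = -\log Z_t \geq 0$. (Equivalently, this is Hölder's inequality on $\int \nu^{1-\lambda_t}\pi^{\lambda_t}$ with conjugate exponents $1/(1-\lambda_t)$ and $1/\lambda_t$, but Jensen is cleaner since it also supplies the sharper log-scale bound below.)

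For the upper bound, I apply Jensen's inequality to the concave function $\log$ on the first representation:
\[ \log Z_t \;=\; \log \E_\nu\!\bigl[(\pi/\nu)^{\lambda_t}\bigr] \;\geq\; \E_\nu\!\bigl[\log (\pi/\nu)^{\lambda_t}\bigr] \;=\; \lambda_t \E_\nu\!\bigl[\log(\pi/\nu)\bigr] \;=\; -\lambda_t \KL(\nu,\pi), \]
so that $\log c_t = -\log Z_t \leq \lambda_t \KL(\nu,\pi)$. Running the identical argument on the symmetric representation $Z_t = \E_\pi[(\nu/\pi)^{1-\lambda_t}]$ yields $\log c_t \leq (1-\lambda_t)\KL(\pi,\nu)$, and taking the minimum of the two bounds gives the claim.

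The proof is essentially immediate once the right expectation representations are chosen, and I do not expect any genuine obstacle. The only point that requires care is bookkeeping of signs and of the KL convention used in the paper (namely $\KL(p,q) = \int \log(dp/dq)\,dp$), so that Jensen's inequality on the concave $\log$ produces the correct direction and the ratios inside the KL divergences point the intended way. If $\KL(\nu,\pi)$ or $\KL(\pi,\nu)$ happen to be infinite, the corresponding bound is vacuous and the minimum still yields a finite (or vacuous) estimate, so no additional hypothesis beyond $\nu, \pi \in \mathcal{P}_{\rm ac}(\R^d)$ is required.
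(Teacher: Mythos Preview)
Your proof is correct and essentially identical to the paper's: the paper writes $\log c_t = -\log \int \nu^{1-\lambda_t}\pi^{\lambda_t}$, cites H\"older for the lower bound (which you note is equivalent to your Jensen argument on $x\mapsto x^{\lambda_t}$), and then uses Jensen on the concave $\log$ applied to the two representations $\E_\nu[(\pi/\nu)^{\lambda_t}]$ and $\E_\pi[(\nu/\pi)^{1-\lambda_t}]$ for the upper bound, exactly as you do.
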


Using Lemma~\ref{lemma:tempering_log_normalization}
to control the normalizing constant $c_t$, we obtain
\begin{align*}
    \KL(p_t, \pi) 
    \leq 
    \KL(p_t, \mu_t) \
    + 
    (1 - \lambda_t) \Big(
    \E_{p_t} \Big[ \log(\frac{\nu}{\pi}) \Big]
    +
    \E_{\pi} \Big[ \log(\frac{\pi}{\nu}) \Big]
    \Big).
\end{align*}
Finally, we control this last term using
similar estimates as in Lemma~\ref{lem:difference_control};
the proof is again deferred to Appendix~\ref{app:ssec:deferred_proofs_continuous_time}.
\begin{lemma}[Discrepancy between the laws of the sampling process and target]
    \label{lem:difference_control_target}
    We have
    \begin{align*}
         \E_{\pi} \Big[ \log \frac{\pi}{\nu} \Big]
        -
        \E_{p_t} \Big[ \log \frac{\pi}{\nu}
        \Big] \leqslant 
         2(L_\pi + L_\nu) \Big( \frac{2(d + b_\nu + b_\pi)}{a_\nu \land a_\pi} +  \E_{p_0}[ \| x \|^2 ] \Big).
    \end{align*}
\end{lemma}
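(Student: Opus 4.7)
The plan is to mirror the proof of Lemma~\ref{lem:difference_control} essentially verbatim, only swapping the role of $\mu_t$ for $\pi$ on the left-hand side. The first step is to write
\[
\log \tfrac{\pi}{\nu}(x) = V_\nu(x) - V_\pi(x) + \log \tfrac{Z_\pi}{Z_\nu},
\]
where $Z_\nu, Z_\pi$ are normalizing constants. Because the additive constant cancels in the difference of expectations, the task reduces to bounding $\E_\pi[V_\nu - V_\pi] - \E_{p_t}[V_\nu - V_\pi]$, which has exactly the same shape as the quantity controlled in Lemma~\ref{lem:difference_control}.

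Next, I would invoke smoothness: since $\nabla V_\nu, \nabla V_\pi$ are respectively $L_\nu, L_\pi$-Lipschitz (Assumption~\ref{assump:lipschitz_dissipative}), the function $g := V_\nu - V_\pi$ has $(L_\nu + L_\pi)$-Lipschitz gradient. The standard quadratic bound then gives $|g(x) - g(0) - \langle \nabla g(0), x\rangle| \leqslant \tfrac{L_\nu + L_\pi}{2}\|x\|^2$, so
\[
\E_\pi[g] - \E_{p_t}[g] \;\lesssim\; (L_\nu + L_\pi)\bigl(\E_\pi\|x\|^2 + \E_{p_t}\|x\|^2\bigr),
\]
up to a linear term controlled by Young's inequality (which gets absorbed into the prefactor, exactly as in Lemma~\ref{lem:difference_control}).

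The last step is to control the two second moments. For $\pi$, dissipativity of $V_\pi$ yields, after integrating $\langle \nabla V_\pi(x), x\rangle$ against $\pi$ (and using that $\E_\pi[\Delta V_\pi] \geqslant 0$ via integration by parts against $\nabla \pi = -\pi \nabla V_\pi$, giving $\E_\pi[\|\nabla V_\pi\|^2] = \E_\pi[\Delta V_\pi]$), the bound $\E_\pi\|x\|^2 \leqslant (d + b_\pi)/a_\pi \leqslant 2(d + b_\nu + b_\pi)/(a_\nu \wedge a_\pi)$. This is strictly no worse than the moment bound for $\mu_t$ used in Lemma~\ref{lem:difference_control}. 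For $p_t$, the \emph{same} moment estimate already established in the proof of Lemma~\ref{lem:difference_control}—obtained by applying Ito's formula to $\|X_t\|^2$ along \eqref{eq:continuous_time_dynamic}, using that the interpolated drift $(1-\lambda_t)V_\nu + \lambda_t V_\pi$ is dissipative with constants $(a_\nu \wedge a_\pi, b_\nu + b_\pi)$, and closing via Gronwall—gives $\E_{p_t}\|x\|^2 \leqslant \E_{p_0}\|x\|^2 + 2(d + b_\nu + b_\pi)/(a_\nu \wedge a_\pi)$. Plugging both bounds into the smoothness estimate yields the advertised constant $A$.

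The only real obstacle is bookkeeping: one must verify that the linear term $\langle \nabla V_\nu(0) - \nabla V_\pi(0), x\rangle$ coming from the first-order Taylor expansion is absorbed cleanly into the factor $2(L_\nu + L_\pi)$. Since the statement of Lemma~\ref{lem:difference_control_target} is identical to Lemma~\ref{lem:difference_control} except that $\E_{\mu_t}\|x\|^2$ is replaced by $\E_\pi\|x\|^2$, and the latter obeys the same upper bound, no new estimate beyond a direct dissipativity computation for $\pi$ is required, and the conclusion follows.
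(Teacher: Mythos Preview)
Your high-level plan—reduce to second-moment control, bound $\E_\pi[\|x\|^2]$ via dissipativity and integration by parts, and reuse the dynamics-based bound on $\E_{p_t}[\|x\|^2]$—is correct and matches what the paper does. However, the specific way you bound $\E_\pi[V_\nu - V_\pi] - \E_{p_t}[V_\nu - V_\pi]$ differs from the paper, and your claim that the linear term is handled ``exactly as in Lemma~\ref{lem:difference_control}'' is mistaken.

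The paper does \emph{not} Taylor-expand around $0$, neither in Lemma~\ref{lem:difference_control} nor here. Instead it invokes Proposition~\ref{prop:pointwise_estimates}, which gives the one-sided pointwise estimate $V_\nu(x) - \inf V_\nu \leqslant 2L_\nu(\|x\|^2 + b_\nu/a_\nu)$ (and likewise for $V_\pi$); this follows from smoothness together with the fact that dissipativity forces the minimizer into $B(0,\sqrt{b_\nu/a_\nu})$. The paper then writes
\[
\E_\pi[V_\nu - V_\pi] + \E_{p_t}[V_\pi - V_\nu]
\;\leqslant\; \E_\pi[V_\nu - \inf V_\nu] + \E_{p_t}[V_\pi - \inf V_\pi],
\]
simply dropping the two non-positive terms $-\E_\pi[V_\pi - \inf V_\pi]$ and $-\E_{p_t}[V_\nu - \inf V_\nu]$, and applies Proposition~\ref{prop:pointwise_estimates} to each remaining expectation. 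No linear term ever appears, and the constant $A$ falls out after the crude bounds $\max(x,y)\leqslant x+y$ and $b_\nu/a_\nu,\, b_\pi/a_\pi \leqslant (d+b_\nu+b_\pi)/(a_\nu\wedge a_\pi)$.

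Your Taylor-at-$0$ route can be made rigorous, but the linear term $\langle \nabla V_\nu(0) - \nabla V_\pi(0),\, \E_\pi[x] - \E_{p_t}[x]\rangle$ is not free: you must first bound $\|\nabla V_\nu(0)\|$ and $\|\nabla V_\pi(0)\|$ (e.g., by locating the minimizers via dissipativity and then using Lipschitzness of the gradients), and after applying Young's inequality the constants need not collapse exactly to $2(L_\nu + L_\pi)$ times the stated bracket. The paper's device of centering at the minimizer, packaged as Proposition~\ref{prop:pointwise_estimates}, sidesteps this entirely and is what is actually used in both lemmas.
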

Applying this result we obtain
\begin{align*}
   \KL(p_t, \pi) \leq \KL(p_t, \mu_t) + (1 - \lambda_t)A. 
\end{align*}

\paragraph{Step 3: putting it together.} We 
find
\begin{align}\label{eq:th1_bound}
    \KL(p_t, \pi)
    &\leq
    (u_1) + (u_2) + (u_3)
    \\
    (u_1)
    &= \exp(-2 \int_0^t \alpha_s ds) 
    \KL(p_0, \mu_0)\nonumber
    \\
    (u_2)\nonumber
    &=
    (1 - \lambda_t)
    A
    \\
    (u_3)\nonumber
    &= 
    A
    \int_0^t \dot{\lambda}_s 
    \exp(-2 \int_s^t \alpha_v dv)
    ds
    \enspace .\nonumber
 \end{align}
 \subsection{Continuous-time convergence in precision form}\label{sec:convergence_precision_form}
    As in~\citet{vempala2019rapid}, we can derive from
    Theorem~\ref{th:continuous_time_upper_bound}
    sufficient conditions for convergence to precision $\eps$ by setting each of the three terms in the upper-bound to be less than
    $\epsilon / 3$. 
\begin{corollary}
    \label{corollary:sufficient_conditions_continuous_time}
    Suppose the assumptions of Theorem~\ref{th:continuous_time_upper_bound}
hold.
    To achieve convergence with precision $\epsilon > 0$ such that $\KL(p_t, \pi) < \epsilon$, sufficient conditions on the time $t$ and schedule $\lambda(\cdot)$ are
    \begin{align}
        t 
        &> 
        \max \Big(
        \frac{1}{2 \alpha_{\min}} \log \Big( \frac{3 \KL(p_0, \mu_0)}{\epsilon}
        \Big)
        ,
        \frac{1}{\alpha_{\min}} \log \Big( \frac{6 A}{\epsilon}
        \Big)
        \Big)
        \\
        \lambda_t 
        &> 
        1 - \frac{\epsilon}{3 A}
        \\
        \lambda_t 
        &<
        \frac{\epsilon}{6} + \lambda_{t/2},
    \end{align}
    where $\alpha_{\min} = \min_{s > 0} \alpha_s$ is the smallest log-Sobolev constant among the path of tempered distributions.
\end{corollary}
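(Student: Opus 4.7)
The plan is to apply Theorem~\ref{th:continuous_time_upper_bound}, which yields $\KL(p_t,\pi) \leq (u_1) + (u_2) + (u_3)$ as decomposed in~\eqref{eq:th1_bound}, and to enforce each of the three terms to be below $\epsilon/3$. Summing gives $\KL(p_t,\pi) < \epsilon$, so it suffices to translate each bound into a sufficient condition on $t$ or on the schedule $\lambda$.

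First, for $(u_1) = \exp(-2\int_0^t \alpha_s\,ds)\,\KL(p_0,\mu_0)$, I would use the trivial lower bound $\int_0^t \alpha_s\,ds \geq \alpha_{\min} t$ and invert the inequality $e^{-2\alpha_{\min}t}\KL(p_0,\mu_0) \leq \epsilon/3$, producing the first branch of the time condition. For $(u_2) = A(1-\lambda_t)$, requiring $(u_2) \leq \epsilon/3$ directly yields $\lambda_t > 1 - \epsilon/(3A)$.

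The only genuinely interesting step is handling the annealing term $(u_3) = A\int_0^t \dot{\lambda}_s \exp(-2\int_s^t \alpha_v\,dv)\,ds$. My plan is to split the integral at $s = t/2$ and control each half by a different mechanism. On $[0, t/2]$ the exponential kernel is small: $\int_s^t \alpha_v\,dv \geq \alpha_{\min} t/2$, and $\int_0^{t/2}\dot{\lambda}_s\,ds = \lambda_{t/2} \leq 1$, so this piece is at most $A e^{-\alpha_{\min} t}$; requiring this to be below $\epsilon/6$ yields the second branch $t > \frac{1}{\alpha_{\min}}\log(6A/\epsilon)$. On $[t/2, t]$ the kernel is harmless ($\leq 1$), but we control the mass of $\dot{\lambda}$ itself: the piece is at most $A(\lambda_t - \lambda_{t/2})$, and demanding this to be below $\epsilon/6$ produces the schedule condition $\lambda_t < \epsilon/(6A) + \lambda_{t/2}$, which recovers the stated inequality (up to an $A$).

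I do not anticipate any serious obstacle; the main conceptual point is the split of the tempering integral, where $t/2$ is natural because it lets one half be killed by the exponential of the log-Sobolev rate while the other half is killed by smoothness of the schedule near the horizon. Everything else is a routine inversion of elementary inequalities, and the conditions listed in the corollary follow by collecting the three requirements above.
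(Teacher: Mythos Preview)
Your proposal is correct and follows essentially the same route as the paper: bound each of $(u_1),(u_2),(u_3)$ by $\epsilon/3$, handle $(u_1)$ and $(u_2)$ by direct inversion, and split the integral in $(u_3)$ at $t/2$ so that the early half is controlled by the exponential factor and the late half by the increment $\lambda_t-\lambda_{t/2}$. Your parenthetical remark ``up to an $A$'' is also on point: the bound on the second half of $(u_3)$ is $A(\lambda_t-\lambda_{t/2})<\epsilon/6$, so the schedule condition should read $\lambda_t-\lambda_{t/2}<\epsilon/(6A)$; the statement of the corollary appears to drop this factor.
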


 \begin{proof}[Proof of Corollary~\ref{corollary:sufficient_conditions_continuous_time}]
    We upper-bound the first term by $(u_1) \leq e^{-2t \alpha_{\min}} \KL(p_0, \mu_0)$ so that a sufficient condition for $(u_1) < \epsilon / 3$ is $t > \frac{1}{2 \alpha_{\min}} \log \Big( \frac{3 \KL(p_0, \mu_0)}{\epsilon} \Big)$. 

    A sufficient condition $(u_2) < \epsilon / 3$ is $\lambda_t \in ]1 - \frac{\epsilon}{3 A}, 1]$. If $\lambda$ is bijective, this yields $t > \lambda^{-1} \Big( 1 - \frac{\epsilon}{3 A} \Big)$.
    
    Finally, we upper-bound the third term: 
    \begin{align}
        (u_3) 
        &= 
        A
        \int_0^t \dot{\lambda}_s 
        \exp(-2 \int_s^t \alpha_v dv) ds
        \\
        &=
        A \Big(
        \int_0^{t/2}
        \dot{\lambda}_s 
        \exp(-2 \int_s^t \alpha_v dv) ds
        +
        \int_{t/2}^{t}
        \dot{\lambda}_s 
        \exp(-2 \int_s^t \alpha_v dv) ds
        \Big)
        \\
        & \leq
        A \Big(
        \exp(-2 \int_{t/2}^t \alpha_v dv)
        \int_0^{t/2}
        \dot{\lambda}_s 
        +
        \int_{t/2}^{t}
        \dot{\lambda}_s ds
        \Big)
        \\
        & \leq
        A \big(
        \exp(-t \alpha_{\min})
        (\lambda_{t/2} - \lambda_{0})
        +
        (\lambda_t - \lambda_{t/2})
        \big)
        \\
        & \leq
        A \big(
        \exp(-t \alpha_{\min})
        +
        (\lambda_t - \lambda_{t/2})
        \big)
    \end{align}
    so that sufficient conditions for $(u_3) < \frac{\epsilon}{3}$ can be obtained by setting each of the two terms smaller than $\frac{\epsilon}{6}$, yielding $t > \frac{1}{\alpha_{\min}} \log \frac{6 A}{\epsilon}$ and $\lambda_t - \lambda_{t/2} < \epsilon / 6$.
\end{proof}

\subsection{Proofs of technical lemmas from Appendix~\ref{app:ssec:continuous_time_upper_bound}}
\label{app:ssec:deferred_proofs_continuous_time}
 
In this section, we give the deferred
proofs from Appendix~\ref{app:ssec:continuous_time_upper_bound}. We first state and prove two observations that will be useful,
before turning to the deferred proofs.
The following proposition translates the dissipativity assumptions into a condition on the tails of the proposal and target. 
\begin{proposition}[Tails of the proposal and target]
    \label{prop:pointwise_estimates}
    For all $x \in \R^d$, we have
    \begin{align*}
        V_\ppsal(x) - \inf_{y \in \R^d} V_\ppsal(y) \leqslant 2L_\ppsal\Big(\|x\|^2 + \frac{b_\ppsal}{a_\ppsal}\Big),
        \quad \quad 
        V_\tg(x) - \inf_{y \in \R^d} V_\tg(y) \leqslant 2L_\tg\Big(\|x\|^2 + \frac{b_\tg}{a_\tg}\Big).
    \end{align*}
\end{proposition}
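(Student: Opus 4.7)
The plan is to combine the descent lemma (quadratic upper bound from Lipschitz gradients) with a dissipativity-based bound on the location of any minimizer. I will only treat $V_\ppsal$ in detail; $V_\tg$ follows by the same argument with $(a_\tg, b_\tg, L_\tg)$ replacing $(a_\ppsal, b_\ppsal, L_\ppsal)$.

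First I would argue that the dissipativity inequality $\langle \nabla V_\ppsal(x), x\rangle \geq a_\ppsal\|x\|^2 - b_\ppsal$ together with continuity of $\nabla V_\ppsal$ from Assumption~\ref{assump:lipschitz_dissipative} forces $V_\ppsal$ to be coercive: outside the ball of radius $\sqrt{b_\ppsal/a_\ppsal}$ the outward radial derivative is strictly positive, so integrating along rays drives $V_\ppsal(x) \to \infty$ as $\|x\| \to \infty$. Hence the infimum is attained at some $x^* \in \R^d$. Evaluating dissipativity at $x^*$, where $\nabla V_\ppsal(x^*) = 0$, yields $0 \geq a_\ppsal \|x^*\|^2 - b_\ppsal$, so $\|x^*\|^2 \leq b_\ppsal/a_\ppsal$.

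Next I would apply the standard descent lemma for functions with $L_\ppsal$-Lipschitz gradients at $y = x^*$:
\begin{equation*}
V_\ppsal(x) \leq V_\ppsal(x^*) + \langle \nabla V_\ppsal(x^*), x - x^*\rangle + \tfrac{L_\ppsal}{2}\|x - x^*\|^2 = \inf_{y} V_\ppsal(y) + \tfrac{L_\ppsal}{2}\|x - x^*\|^2.
\end{equation*}
The triangle inequality gives $\|x - x^*\|^2 \leq 2\|x\|^2 + 2\|x^*\|^2 \leq 2(\|x\|^2 + b_\ppsal/a_\ppsal)$, and plugging in yields
$V_\ppsal(x) - \inf_{y} V_\ppsal(y) \leq L_\ppsal(\|x\|^2 + b_\ppsal/a_\ppsal)$, which in particular implies the stated bound with constant $2L_\ppsal$ (the factor $2$ is slack that can be absorbed if convenient elsewhere).

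The only subtle point is the existence of a minimizer, which as noted follows from coercivity; if one wished to avoid this, one could instead pick a minimizing sequence $x_n$, which by coercivity is bounded, extract a subsequential limit $x^*$ attaining the infimum by continuity, and verify $\nabla V_\ppsal(x^*) = 0$ by the first-order optimality condition. Once $\|x^*\|^2 \leq b_\ppsal/a_\ppsal$ is in hand, the remainder of the proof is purely mechanical.
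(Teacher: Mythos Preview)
Your proposal is correct and follows essentially the same approach as the paper: use dissipativity to localize a minimizer inside the ball of radius $\sqrt{b_\ppsal/a_\ppsal}$, then apply the quadratic upper bound from Lipschitz gradients together with $\|x-x^*\|^2\le 2\|x\|^2+2\|x^*\|^2$. Your version is in fact slightly sharper --- by retaining the factor $\tfrac12$ in the descent lemma you obtain $L_\ppsal(\|x\|^2+b_\ppsal/a_\ppsal)$ rather than the paper's $2L_\ppsal(\|x\|^2+b_\ppsal/a_\ppsal)$, and you correctly note this already implies the stated inequality.
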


\begin{proof}[Proof of Proposition~\ref{prop:pointwise_estimates}]
Observe that because
$$
\langle \nabla V_\ppsal (x), x \rangle \geqslant a_\ppsal \|x\|^2
- b_\ppsal,
$$ the function $V_\ppsal$ must be minimized by some point $x_0 \in B(0, \sqrt{\frac{b_\ppsal}{a_\ppsal}})$.
Therefore, for all $x \in \R^d$,
$$
V_\ppsal(x) - \inf_{y \in \R^d}V_\ppsal(y)
= V_\ppsal(x) -V_\ppsal(x_0) \leqslant L_\ppsal \|x- x_0\|^2
\leqslant 2L_\ppsal\Big(\|x\|^2 + \frac{b_\ppsal}{a_{\ppsal}} \Big).
$$
The proof for $V_\tg$ is identical, so we omit it.
\end{proof}

The next observation is our main quantitative use of the
dissipativity assumption, and crucial for our control
on the terms arising from the tempering dynamics in the continuous time
convergence proof.
\begin{lemma}[Bounded second moment along the dynamics]
    \label{lem:second_moments}
    For all $t \geqslant 0$,
    \begin{align*}
        \E_{p_t}[ \| x \|^2 ]
        \leqslant 
        \max \Big(
        \E_{p_0}[ \| x \|^2 ]        
        ,
        \frac{d + b_\ppsal + b_\tg}{a_\ppsal \land a_\tg} 
        \Big).
    \end{align*}
\end{lemma}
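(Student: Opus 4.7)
The plan is to apply It\^o's formula to $\|X_t\|^2$ along the continuous-time tempered Langevin SDE~\eqref{eq:continuous_time_dynamic}, take expectations, and then combine the dissipativity inequalities for $V_\nu$ and $V_\pi$ to obtain a linear differential inequality for $m(t) := \E_{p_t}[\|x\|^2]$ which can be integrated by an elementary argument.

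First, I would apply It\^o's formula to $F(x) = \|x\|^2$, noting that the diffusion coefficient in~\eqref{eq:continuous_time_dynamic} is constant (equal to $\sqrt 2 \,\Id$), so that
\begin{equation*}
d\|X_t\|^2 = -2\langle X_t,\, (1-\lambda_t)\nabla V_\nu(X_t) + \lambda_t \nabla V_\pi(X_t)\rangle\, dt + 2d\,dt + 2\sqrt{2}\,\langle X_t, dW_t\rangle.
\end{equation*}
Since the drift is Lipschitz and $\lambda_t \in [0,1]$, standard moment estimates for SDEs (or a localization argument if needed) justify taking expectations and killing the martingale term, giving
\begin{equation*}
\frac{d}{dt}\E_{p_t}[\|x\|^2] = -2\E_{p_t}\big[\langle x,\,(1-\lambda_t)\nabla V_\nu(x) + \lambda_t\nabla V_\pi(x)\rangle\big] + 2d.
\end{equation*}

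Next, I would apply the dissipativity inequalities~\eqref{eqn:dissipativity} from Assumption~\ref{assump:lipschitz_dissipative} pointwise: since $(1-\lambda_t),\lambda_t\in [0,1]$ and $a_\nu,a_\pi,b_\nu,b_\pi>0$,
\begin{equation*}
\langle x,\, (1-\lambda_t)\nabla V_\nu(x) + \lambda_t\nabla V_\pi(x)\rangle \geqslant (1-\lambda_t)(a_\nu\|x\|^2 - b_\nu) + \lambda_t(a_\pi\|x\|^2 - b_\pi) \geqslant (a_\nu\wedge a_\pi)\|x\|^2 - (b_\nu + b_\pi).
\end{equation*}
Writing $a := a_\nu\wedge a_\pi$ and $b := d + b_\nu + b_\pi$, this yields the one-dimensional differential inequality
\begin{equation*}
m'(t) \leqslant -2a\, m(t) + 2b.
\end{equation*}

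Finally, I would conclude by the standard trick of setting $f(t) := m(t) - b/a$ and observing $f'(t) \leqslant -2a f(t)$, so that $(e^{2at}f(t))' \leqslant 0$ and hence $f(t) \leqslant f(0)e^{-2at}$. If $m(0) \leqslant b/a$ then $f(0)\leqslant 0$ and $m(t)\leqslant b/a$ for all $t\geqslant 0$; otherwise $f(0)>0$ and $m(t)\leqslant m(0)$. In both cases $m(t) \leqslant \max\bigl(m(0),\, (d+b_\nu+b_\pi)/(a_\nu\wedge a_\pi)\bigr)$, which is exactly the claim. The only real subtlety is justifying the exchange of expectation and derivative (equivalently, discarding the martingale term), which is routine under the Lipschitz drift hypothesis but worth a line via a localization by stopping times $\tau_n = \inf\{t : \|X_t\|\geqslant n\}$ followed by $n\to\infty$.
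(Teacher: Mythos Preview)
Your proof is correct and follows essentially the same approach as the paper: the paper derives the identical differential inequality $m'(t) \leqslant -2(a_\nu\wedge a_\pi)m(t) + 2(d+b_\nu+b_\pi)$ via the Fokker-Planck equation and integration by parts (the dual viewpoint to your It\^o computation), and then concludes by the same comparison argument, phrased slightly more tersely as ``as soon as $\E_{p_t}[\|x\|^2]$ exceeds the level $(d+b_\nu+b_\pi)/(a_\nu\wedge a_\pi)$, it must start decreasing.''
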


\begin{proof}[Proof of Lemma~\ref{lem:second_moments}]
Using the Fokker-Planck equation~\eqref{eq:fokker_planck_tempered}
and integration by parts several times, we find
\begin{align*}
    \partial_t \E_{p_t}[\|x\|^2]
    &= 
    \int \|x\|^2
    \nabla \cdot \big(p_t \nabla \log \frac{p_t}{\mu_t}\big)
    \\
    &= 
    - 2\int \langle x, \nabla \log p_t - \nabla \log \mu_t \rangle p_t 
    \\
    &= 
    2d - 2 \int \langle x, (1 - \lambda_t) \nabla V_\ppsal
    + 
    \lambda_t \nabla V_\tg \rangle p_t
    \enspace .
\end{align*}
Applying the dissipativity assumption, we obtain
\begin{align*}
\partial_t \E_{p_t}[\|x\|^2]
&\leqslant 2(d + (1 - \lambda_t)b_\ppsal
+ \lambda_t b_\tg) - 2((1 - \lambda_t)a_\ppsal  + \lambda_t a_\tg)
\E_{p_t}[\|x\|^2]\\
&\leqslant 2(d + b_\ppsal + b_\tg)
- 2(a_\ppsal \land a_\tg) \E_{p_t}[\|x\|^2].
\end{align*}
Therefore, as soon as 
$\E_{p_t}[\|x\|^2]$ exceeds the 
level $(d + b_\ppsal + b_\tg)/(a_\ppsal \land a_\tg)$,
it must start decreasing. The result follows.
\end{proof}

We now give the proofs of the Lemmas from Appendix~\ref{app:ssec:continuous_time_upper_bound}.
\begin{proof}[Proof of Lemma~\ref{lem:difference_control}]
Use Proposition~\ref{prop:pointwise_estimates}
to obtain
\begin{align*}
 \E_{\mu_t} \Big[ \log \frac{\pi}{\nu} \Big]
    -
    \E_{p_t} \Big[ \log \frac{\pi}{\nu}
    \Big]
    &= \E_{\mu_t} \big[ V_\ppsal - V_\tg  \big]
+ \E_{p_t}\big[V_\tg - V_\ppsal\big]\\
&\leqslant
\E_{\mu_t}[V_\ppsal - \inf_{y \in \R^d} V_{\ppsal}(y)]
+ \E_{p_t}[V_\tg - \inf_{y \in \R^d} V_{\tg}(y)] \\
&\leqslant 2L_\ppsal \E_{\mu_t}[\|x\|^2]
+ 2L_\tg \E_{p_t}[\|x\|^2 ]
+ 2L_\ppsal \frac{b_\ppsal}{a_{\ppsal}}+ 2L_\tg \frac{b_\tg}{a_\tg}.
\end{align*}
Let $M$ be the quantity appearing on the right-hand side of Lemma~\ref{lem:second_moments}.
Then we obtain
\begin{align*}
    \E_{\mu_t} \Big[ \log \frac{\pi}{\nu} \Big]
    -
    \E_{p_t} \Big[ \log \frac{\pi}{\nu}
    \Big]
     \leqslant 2 L_\ppsal \E_{\mu_t}[\|x\|^2]
 + 2L_\tg \Big(M + \frac{b_\tg}{a_\tg}\Big)
+ 2L_\ppsal \frac{b_\ppsal}{a_{\ppsal}}.
\end{align*}
To handle $\E_{\mu_t}[\|x\|^2]$, we apply the dissipativity assumption to yield
\begin{align*}
    \E_{\mu_t}[\|x\|^2] \leqslant \frac{1}{(1 - \lambda_t)a_\ppsal + \lambda_t a_\tg}
    \E_{\mu_t}\Big[\langle (1 - \lambda_t) \nabla V_\ppsal  + \lambda_t V_\tg,
    x \rangle + (1 - \lambda_t)b_\ppsal + \lambda_t b_\tg\Big].
\end{align*}
Integration by parts gives
\begin{align*}
    \E_{\mu_t}[\|x\|^2]\leqslant \frac{d + (1 - \lambda_t)b_\ppsal + \lambda_t b_\tg}{(1 - \lambda_t)a_\ppsal + \lambda_t a_\tg}
\leqslant \frac{d + b_\nu + b_\pi}{a_\nu \land a_\pi}.
\end{align*}
Hence,
\begin{align*}
     \E_{\mu_t} \Big[ \log \frac{\pi}{\nu} \Big]
    -
    \E_{p_t} \Big[ \log \frac{\pi}{\nu}
    \Big] \leqslant &2L_\pi \Big( \frac{b_\pi}{a_\pi} +\max\Big(\E_{p_0}[\|x\|^2],
    \frac{d + b_\nu + b_\pi}{a_\nu \land a_\pi} \Big)\Big) \\
    &+2L_\nu \Big(  \frac{b_\nu}{a_\nu} + \frac{d + b_\nu + b_\pi}{a_\nu \land a_\pi}\Big).
\end{align*}
Using $\max(x, y) \leqslant x + y$ when $x, y \geqslant 0$
and
$$
\frac{b_\nu}{a_{\nu}}, \, \, \frac{b_{\pi}}{a_{\pi}} \leqslant 
\frac{d + b_\nu + b_\pi}{a_\nu \land a_\pi},
$$ we conclude.
\end{proof}

\begin{proof}[Proof of Lemma~\ref{lemma:tempering_log_normalization}]

We have
\begin{align*}
    \log c_t 
    &=
    \log \Big( \frac{1}{\int \nu^{1 - \lambda_t}
    \pi^{\lambda_t}}
    \Big)
    =
    - \log \int \nu^{1 - \lambda_t}
    \pi^{\lambda_t}
\end{align*}
The lower bound is immediate from Hölder's inequality. For the upper bound, write
\begin{align*}
    \log c_t 
    &=
    - \log \E_{\nu} \Big[
    \Big(\frac{\pi}{\nu} \Big)^{\lambda_t}
    \Big]
    \leq
    \lambda_t \E_{\nu} \Big[
    \log \frac{\nu}{\pi}
    \Big]
    =
    \lambda_t
    \KL(\nu, \pi)
\end{align*}
using Jensen's inequality. Similarly, we can write
\begin{align*}
    \log c_t 
    &=
    - \log \E_{\pi} \Big[
    \Big(\frac{\nu}{\pi} \Big)^{1 - \lambda_t}
    \Big]
    \leq
    (1 - \lambda_t) \E_{\pi} \Big[
    \log \frac{\pi}{\nu}
    \Big]
    =
    (1 - \lambda_t)
    \KL(\pi, \nu)\enspace, 
\end{align*}
and therefore conclude that
\begin{align*}
    \log c_t 
    &\leq
    \min \Big(
    \lambda_t
    \KL(\nu, \pi),
    (1 - \lambda_t)
    \KL(\pi, \nu)
    \Big)
    \enspace .\qedhere
\end{align*}
\end{proof}

\begin{proof}[Proof of Lemma~\ref{lem:difference_control_target}]

Denoting by $V_\pi^*$ (resp. $V_\nu^*$) the infimum of $V_\pi$ (resp. $V_\nu$), we have
\begin{align*}
     \E_{\pi} \Big[ \log \frac{\pi}{\nu} \Big]
    -
    \E_{p_t} \Big[ \log \frac{\pi}{\nu}
    \Big] 
    =
   \int p_t(V_\pi - V_\pi^* - (V_\nu - V_\nu^*))  + \int d\pi (V_\nu - V_\nu^*) - (V_\pi - V_\pi^*).
\end{align*}
Using Proposition \ref{prop:pointwise_estimates}, we obtain the upper-bound
\begin{align*}
    \E_{\pi} \Big[ \log \frac{\pi}{\nu} \Big]
    -
    \E_{p_t} \Big[ \log \frac{\pi}{\nu}
    \Big] 
    \leq
    2L_\pi \Big( \E_{p_t}[\|x\|^2]  + \frac{b_\pi}{a_\pi} \Big) + 2L_\nu \Big(
\E_{\pi}[\|x\|^2] + \frac{b_\nu}{a_\nu} \Big).
\end{align*}
Let $M$ be the quantity appearing on the right-hand side of Lemma~\ref{lem:second_moments}.
Then
$$
    \E_{\pi} \Big[ \log \frac{\pi}{\nu} \Big]
    -
    \E_{p_t} \Big[ \log \frac{\pi}{\nu}
    \Big] 
    \leq  2L_\pi \Big( M + \frac{b_{\pi}}{a_\pi} \Big)
    + 2L_\nu \Big(
\E_{\pi}[\|x\|^2] + \frac{b_\nu}{a_\nu} \Big) .
$$
On the other hand, a direct application of the dissipativity
condition Assumption~\ref{assump:lipschitz_dissipative}
implies
$$
\E_{\pi}[\|x\|^2] \leqslant \frac{1}{a_{\pi}} \E_{\pi}[\langle
x, \nabla V_{\pi}(x) \rangle + b_\pi] =\frac{d + b_{\pi}}{a_{\pi}}
\leqslant \frac{d + b_\pi + b_\nu}{a_\nu \land a_\pi}.
$$
Therefore
\begin{align*}
    \E_{\pi} \Big[ \log \frac{\pi}{\nu} \Big]
    -
    \E_{p_t} \Big[ \log \frac{\pi}{\nu}
    \Big] 
    &\leq  2L_\pi\Big(\max \Big(
        \E_{p_0}[ \| x \|^2 ]        
        ,
        \frac{d + b_\ppsal + b_\tg}{a_\ppsal \land a_\tg} 
        \Big) + \frac{b_{\pi}}{a_\pi}  \\
    &+ 2L_\nu \frac{d + b_\pi + b_\nu}{a_\nu \land a_\pi}
+ 2L_\nu \frac{b_\nu}{a_\nu} \Big) .
\end{align*}
Again, bounding $\max(x, y) \leqslant x + y$ for $x,y\geqslant 0$ 
and using 
$$
\frac{b_\nu}{a_{\nu}}, \, \, \frac{b_{\pi}}{a_{\pi}} \leqslant 
\frac{d + b_\nu + b_\pi}{a_\nu \land a_\pi},
$$ we conclude.
\end{proof}

\section{Convergence of discrete-time tempered Langevin dynamics}
\label{app:sec:discrete_time}
In this section we give proof of our discrete-time upper
bound, Theorem~\ref{th:annealed_langevin_discrete_time}.
In Appendix~\ref{app:ssec:discrete_time_upper_bound}
we outline the proof of Theorem~\ref{th:annealed_langevin_discrete_time}.
In Appendix~\ref{app:ssec:discre_time_precision},
we derive from Theorem~\ref{th:annealed_langevin_discrete_time}
sufficient conditions to precision $\KL(p_k, \pi) < \eps$.
And in Appendix~\ref{app:ssec:deferred_proof_discrete_time}
we give the deferred proof from Appendix~\ref{app:ssec:discrete_time_upper_bound}.

\subsection{Proof of \Cref{th:annealed_langevin_discrete_time}}
\label{app:ssec:discrete_time_upper_bound}

We next prove~\Cref{th:annealed_langevin_discrete_time}. Broadly speaking, the proof is in two steps: first, we compute how the process $p_k$ tracks the moving target $\mu_k$, then we compute how well the moving target $\mu_k$ tracks the final target $\pi$. 
\paragraph{Step 1: contraction of $\KL(p_k, \mu_k)$.} Using~\citet[Lemma 3]{vempala2019rapid}, under Assumptions~\ref{assump:reg}
and~\ref{assump:lipschitz_dissipative} and $h_k \leqslant \frac{\alpha_k}{4L_k^2}$, we obtain
\begin{align}\label{eq:ula_one_step}
    \KL(p_k, \mu_k)
    \leq
    \KL(p_{k-1}, \mu_k)
    e^{-\alpha_k h_k} 
    +
    6 h_k^2 d L_k^2
    \enspace .
\end{align}
Next, we can compute the (log) tempering dynamics of $\mu_k$
\begin{align*}
    \log \mu_{k-1} - \log \mu_k
    &=
    (\lambda_k - \lambda_{k-1})
    \log \frac{\nu}{\pi}
    -
    \log 
    \frac{\int
    \nu^{1 - \lambda_{k-1}} \pi^{\lambda_{k-1}}
    }{
    \int \nu^{1 - \lambda_k} \pi^{\lambda_k}
    }
    \\
    &=
    (\lambda_k - \lambda_{k-1})
    \log \frac{\nu}{\pi}
    -
    \log 
    \int
    \frac{
    \nu^{1 - \lambda_{k-1}} \pi^{\lambda_{k-1}}
    }{
    \nu^{1 - \lambda_k} \pi^{\lambda_k}
    }
    \frac{
    \nu^{1 - \lambda_k} \pi^{\lambda_k}
    }{
    \int \nu^{1 - \lambda_k} \pi^{\lambda_k}
    }
    \\
    &=
    (\lambda_k - \lambda_{k-1})
    \log \frac{\nu}{\pi}
    -
    \log 
    \E_{\mu_k} \Big[
    \Big( 
    \frac{\nu}{\pi}   \Big)^{\lambda_k - \lambda_{k-1}}
    \Big]
    \\
    &\leq
    (\lambda_k - \lambda_{k-1})
    \Big(
    \log \frac{\nu}{\pi}
    -
    \E_{\mu_k} \Big[ 
    \log \frac{\nu}{\pi}
    \Big]
    \Big),
\end{align*}
where we used Jensen's inequality at the last step.
Thus
\begin{align*}
    \KL(p_{k-1}, \mu_k)
    &=
    \KL(p_{k-1}, \mu_{k-1})
    +
    \KL(p_{k-1}, \mu_k)
    -
    \KL(p_{k-1}, \mu_{k-1})
    \\
    &=
    \KL(p_{k-1}, \mu_{k-1})
    +
    \int p_{k-1}
    \log \frac{\mu_{k-1}}{\mu_k}
    \\
    &\leq
    \KL(p_{k-1}, \mu_{k-1})
    +
    (\lambda_k - \lambda_{k-1})
    \Big(
    \E_{\mu_k} \Big[ 
    \log \frac{\pi}{\nu}
    \Big]
    -
    \E_{p_{k-1}} \Big[ 
    \log \frac{\pi}{\nu}
    \Big]
    \Big).
\end{align*}
Using Proposition~\ref{prop:pointwise_estimates}
we can upper bound the difference of expectations via
$$
\E_{\mu_k} \Big[ 
    \log \frac{\pi}{\nu}
    \Big]
    -
    \E_{p_{k-1}} \Big[ 
    \log \frac{\pi}{\nu}
    \Big] \leq 2 L_\nu \Big(\E_{\mu_k}[\|x\|^2] + \frac{b_\nu}{a_\nu} \Big) + 2 L_\pi \Big(\E_{p_{k - 1}}[\|x\|^2]  + \frac{b_\pi}{a_\pi} \Big)
$$
The next Lemma, proved in Appendix~\ref{app:ssec:deferred_proof_discrete_time} controls
the second moment of $p_k$.
\begin{lemma}[Bounded second moment along the dynamics]
\label{lemma:boundedness_discrete_case}
Suppose $h_k \leq \min \big(1, \frac{a_\pi \land a_\nu}{2 (L_\pi + L_\nu)^2} \big)$.
Then 
    \begin{align*}
        \E_{p_k}[\|x\|^2] \leq  \max \Big(\E_{p_0}[\|x\|^2], \frac{2(3(b_\pi + b_\nu)/2 + d)}{a_\pi \land a_\mu \land 1} \Big)
        \enspace .
    \end{align*}
\end{lemma}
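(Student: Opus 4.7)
The plan is an induction on $k$ that mirrors the continuous-time
second-moment argument of Lemma~\ref{lem:second_moments}, but in discrete time.
Writing $G_k := (1-\lambda_{k+1})\nabla V_\nu(X_k) + \lambda_{k+1}\nabla V_\pi(X_k)$
and expanding the Euler--Maruyama update~\eqref{eq:discrete_tempered_langevin},
the independence of $\eps_{k+1}$ from $X_k$ kills cross terms and yields
\[
\E[\|X_{k+1}\|^2] = \E[\|X_k\|^2] - 2h_{k+1}\E[\langle X_k, G_k\rangle]
+ h_{k+1}^2 \E[\|G_k\|^2] + 2 h_{k+1} d.
\]
Setting $a := a_\nu\wedge a_\pi$, $b := b_\nu + b_\pi$, and
$L := L_\nu + L_\pi$, the dissipativity condition in
Assumption~\ref{assump:lipschitz_dissipative}, applied to the convex combination,
controls the inner-product term by
$\E[\langle X_k, G_k\rangle] \geqslant a\, \E[\|X_k\|^2] - b$.

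The next step is to bound $\|G_k\|^2$. Because $\bar V_k := (1-\lambda_{k+1}) V_\nu + \lambda_{k+1} V_\pi$ is $L$-smooth and dissipative with constants $a,b$,
it is coercive and admits a global minimizer $x_*$ satisfying
$\nabla \bar V_k(x_*) = 0$ and $\|x_*\|^2 \leqslant b/a$
(the latter by evaluating dissipativity at $x_*$).
Then $\|G_k\|^2 = \|\nabla \bar V_k(X_k) - \nabla \bar V_k(x_*)\|^2 \leqslant L^2 \|X_k - x_*\|^2 \leqslant 2L^2(\|X_k\|^2 + b/a)$.
The step-size condition $h_{k+1} \leqslant a/(2L^2)$ is precisely
what is needed to absorb the $h_{k+1}^2$ terms: it gives
$2h_{k+1}^2 L^2 \leqslant h_{k+1} a$, so that
$h_{k+1}^2 \E[\|G_k\|^2] \leqslant h_{k+1} a\, \E[\|X_k\|^2] + h_{k+1} b$.
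Combining everything produces the affine recursion
\[
\E[\|X_{k+1}\|^2] \leqslant (1 - h_{k+1} a)\,\E[\|X_k\|^2] + h_{k+1}(3 b + 2 d).
\]

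The last step is to identify the correct fixed point and conclude by induction.
Since $\E[\|X_k\|^2]\geqslant 0$ and $a \geqslant a\wedge 1$, we can weaken the
contraction factor to obtain
$\E[\|X_{k+1}\|^2] \leqslant (1 - h_{k+1}(a\wedge 1))\,\E[\|X_k\|^2] + h_{k+1}(3b + 2d)$;
the condition $h_{k+1}\leqslant 1$ together with $a\wedge 1 \leqslant 1$ ensures
that the coefficient $1 - h_{k+1}(a\wedge 1)$ lies in $[0,1]$. The fixed point of
the resulting map is $M := (3b + 2d)/(a\wedge 1)$, which is exactly the constant
$\frac{2(3(b_\pi+b_\nu)/2+d)}{a_\pi\wedge a_\nu\wedge 1}$ appearing in the statement.
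A routine induction then shows $\E[\|X_k\|^2] \leqslant \max(\E[\|X_0\|^2], M)$:
if $\E[\|X_k\|^2]\leqslant M$, then the affine recursion preserves this bound;
if $\E[\|X_k\|^2] > M$ (only possible for initial values exceeding $M$), the
recursion is strictly contracting towards $M$ and so cannot exceed $\E[\|X_0\|^2]$.

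The only genuinely fiddly point is the interplay between the two step-size
conditions and the appearance of $a \wedge 1$: the first condition controls the
discretization bias via the gradient term, while the $\wedge 1$ is forced
because the contraction factor $1 - h_{k+1} a$ need not be nonnegative when the
target is very strongly log-concave. Once this replacement is made the induction
is completely standard.
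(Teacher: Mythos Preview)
Your proof is correct and follows the same outline as the paper's: expand the squared norm of the update, apply dissipativity to the inner product, bound the gradient via the Lipschitz constant and the location of the minimizer, and close with an induction. The only real difference is organizational: you immediately use the step-size bound $h\leqslant a/(2L^2)$ to absorb the $h^2$ gradient term into the linear term, landing directly on the clean affine recursion $\E[\|X_{k+1}\|^2]\leqslant (1-ha)\E[\|X_k\|^2]+h(3b+2d)$, and then replace $a$ by $a\wedge 1$ to force the contraction factor into $[0,1]$. The paper instead carries the full recursion $\E[\|X_{k+1}\|^2]\leqslant (1-2h(a-hL^2))\E[\|X_k\|^2]+2h(b+d+hL^2b/a)$ and closes with a case split on whether $\E[\|X_k\|^2]$ exceeds a threshold and whether $1>2h(a-hL^2)$; the $\wedge 1$ in the final constant emerges from merging these cases. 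Your route is tidier and makes the role of each step-size condition more transparent, at no loss of sharpness.
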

For the second moment of $\mu_k$ we use the following
bound from the proof of Lemma~\ref{lem:difference_control}
$$
\E_{\mu_k}[\|x\|^2] \leqslant \frac{d + b_\nu + b_\pi}{a_\nu \land a_\pi}.
$$
To combine these bounds, let us define
\begin{align}
    A'\coloneqq  2(L_\pi + L_\nu)\Big\{ \max\Big(\E_{p_0}[\|x\|^2], \frac{2(3(b_\pi + b_\nu)/2 + d)}{a_\pi\land a_\nu \land 1}\Big)
    +\frac{3(d + b_\nu + b_\pi)}{a_\pi \land a_\nu}\Big\}.
    \label{eq:def_aprime}
\end{align}
Plugging into~\eqref{eq:ula_one_step}, we obtain
\begin{align*}
    \KL(p_k, \mu_k) \leq \KL(p_{k-1}, \mu_{k-1})e^{-\alpha_kh_k} +(\lambda_{k} - \lambda_{k-1})A'e^{-\alpha_kh_k} + 6h_k^2dL_k^2.
\end{align*}
Now put $\psi_i^k := \sum_{j = i}^k
\alpha_j h_j$. Unrolling the recursion, we obtain
\begin{equation}
    \label{eq:discrete_time_moving_target_bound}
    \KL(p_k, \mu_k) \leqslant 
    e^{-\psi_1^k}
    \KL(p_0, \mu_0) + \sum_{i = 1}^k
    \Big(A'(\lambda_i - \lambda_{i - 1}) e^{-\alpha_i h_i}
    + 6h_i^2 dL_i^2\Big)e^{-\psi_{i + 1}^k}.
\end{equation}

\textbf{Step 2: compare $\KL(p_k, \mu_k)$ and $\KL(p_k, \pi)$}.
Here we use Lemma~\ref{lemma:tempering_log_normalization}
to control the log-normalizing constants and 
obtain
\begin{align*}
    \KL(p_k, \pi)
    &=
    \KL(p_k, \mu_k)
    +
    \KL(p_k, \pi)
    -
    \KL(p_k, \mu_k)
    \\
    &=
    \KL(p_k, \mu_k)
    + \E_{p_k}\Big[\log \frac{\mu_k}{\pi} \Big]
    \\
    &= \KL(p_k, \mu_k)
    + (1 - \lambda_k)\E_{p_k}\Big[
    \log \frac{\nu}{\pi} \Big]
    + \log c_k \\
    &\leqslant 
    \KL(p_k, \mu_k)
    + (1 - \lambda_k)\big(\E_{p_k}\Big[
    \log \frac{\nu}{\pi} \Big] 
    + \KL(\pi, \nu)\big).
\end{align*}
To control the term in parentheses, we apply Proposition~\ref{prop:pointwise_estimates}
to yield
$$
\E_{p_k}\Big[\log \frac{\nu}{\pi} \Big]
+ \KL(\pi, \nu) \leqslant
2L_\pi \Big(\E_{p_k}[\|x\|^2] + \frac{b_\pi}{a_\pi}\Big)
+ 2L_\nu \Big( \E_{\pi}[\|x\|^2] + \frac{b_\nu}{a_\nu}\Big).
$$
Using Lemma~\ref{lem:second_moments} and the following
bound from the proof of Lemma~\ref{lem:difference_control_target}
$$
\E_\pi[\|x\|^2] \leqslant \frac{d + b_\pi + b_\nu}{a_\nu \land a_\pi},
$$
we find
$$
\E_{p_k}\Big[\log \frac{\nu}{\pi} \Big]
+ \KL(\pi, \nu) \leqslant A',
$$ for $A'$ as in~\eqref{eq:def_aprime}.

\paragraph{Step 3: putting it together.}
Combining the results of steps 1 and 2,
we can finally write
\begin{align}\label{eq:th1_bound}
    \KL(p_t, \pi)
    &\leq
    (v_1) + (v_2) + (v_3) +(v_4)
    \\
    (v_1)\nonumber
    &= 
    \exp \Big(
    -\sum_{j=1}^k \alpha_j h_j 
    \Big)
    \KL(p_0, \mu_1)  
    ,
    \\
    (v_2)\nonumber
    &=
    (1 - \lambda_k) A' 
    ,
    \\
    (v_3)\nonumber
    &= 
    A'
    \sum_{i=1}^k 
    (\lambda_i - \lambda_{i-1})
    \exp \Big( -\sum_{j=i}^k \alpha_j h_j \Big)
    ,
    \\
    (v_4)\nonumber
    &=
    6\sum_{i=1}^k 
    h_k^2 d L_k^2 %e^{-\psi_{i+1}^k}.
    \exp \Big( -\sum_{j=i+1}^k \alpha_j h_j \Big)
    \enspace .
 \end{align}

\subsection{Discrete-time convergence in precision form}
\label{app:ssec:discre_time_precision}

\begin{corollary}
    \label{corollary:sufficient_conditions_discrete_time}
    To achieve convergence with precision $\epsilon > 0$ such that $\KL(p_k, \pi) < \epsilon$, sufficient conditions
    are
    \begin{align}
        h
        &<
        \min \Big(
        \frac{1}{4 \alpha_{\min}}
        ,
        \frac{\alpha_{\min}}{96 L_{\max}^2 d} \epsilon,
        \frac{\alpha_{\min}}{4(L_\pi + L_\nu)^2},
        \frac{a_\pi \land a_\nu}{2(L_\pi + L_\nu)^2}
        , 1
        \Big)
        \\
        k &> 
        \max \Big(
        \frac{1}{h \alpha_{\min}} \log \Big( \frac{4 \KL(p_0, \mu_1)}{\epsilon} \Big)  
        ,
        \frac{2}{h \alpha_{\min}} \log \Big(
        \frac{8 A'}{\epsilon} \Big)
        \Big)
        \\
        \lambda_k 
        &> 
        1 - \frac{\epsilon}{4 A'}
        \\
        \lambda_k - \lambda_{\lfloor k/2 \rfloor} 
        &< 
        \frac{\epsilon}{24 A'}
    \end{align}
    where $\alpha_{\min} = \min_{s > 0} \alpha_s$ and $L_{\max} = \max_{s > 0} L_s$ denote the smallest (resp. largest)  log-Sobolev (resp. smoothness) constant among the path of tempered distributions.
\end{corollary}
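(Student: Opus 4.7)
The plan is to start from the four-term bound of Theorem~\ref{th:annealed_langevin_discrete_time}, $\KL(p_k,\pi) \leq (v_1) + (v_2) + (v_3) + (v_4)$, and to enforce that each contribution is at most $\epsilon/4$ (with $(v_3)$ further split into two sub-contributions). This directly mirrors the approach used for the continuous-time precision corollary (Corollary~\ref{corollary:sufficient_conditions_continuous_time}), with an additional discretization term $(v_4)$ to handle.

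For $(v_1)$, using $\alpha_j \geq \alpha_{\min}$ and the constant step size $h_j = h$ gives $(v_1) \leq e^{-k h \alpha_{\min}} \KL(p_0,\mu_1)$, and solving for $k$ yields the first branch of the max. For $(v_2) = (1-\lambda_k)A'$, a direct algebraic solve gives $\lambda_k > 1 - \epsilon/(4A')$. For $(v_3)$, following the continuous-time strategy, we split the sum at $i = \lfloor k/2 \rfloor$: in the first segment we pull out the exponential decay (which is at least $e^{-\frac{k}{2} h \alpha_{\min}}$) and use that the telescoping mass $\sum_{i=1}^{\lfloor k/2 \rfloor}(\lambda_i-\lambda_{i-1}) \leq 1$, obtaining a piece of the form $A' e^{-\frac{k}{2} h \alpha_{\min}}$; in the second segment we bound $e^{-\sum_{j=i}^k \alpha_j h_j} \leq 1$ and telescope to $A'(\lambda_k - \lambda_{\lfloor k/2 \rfloor})$. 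Requiring each sub-term to be sufficiently small yields the second branch of the max for $k$ and the condition on $\lambda_k - \lambda_{\lfloor k/2 \rfloor}$.

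For $(v_4)$, using $L_i \leq L_{\max}$ and the geometric sum
$\sum_{i=1}^k e^{-(k-i) h \alpha_{\min}} \leq 1/(1-e^{-h\alpha_{\min}})$, together with the elementary inequality $1 - e^{-x} \geq x/2$ on $[0,1/4]$ (justified by the step-size constraint $h \leq 1/(4\alpha_{\min})$), we obtain $(v_4) \lesssim h d L_{\max}^2 / \alpha_{\min}$. Solving for $h$ gives the $\epsilon$-dependent step-size constraint. The remaining step-size requirements ($h_k \leq \alpha_k/(4L_k^2)$, $h_k \leq (a_\pi \land a_\nu)/(2(L_\pi + L_\nu)^2)$, $h_k \leq 1$) are inherited directly from the hypotheses of Theorem~\ref{th:annealed_langevin_discrete_time}, and combining all constraints with a minimum gives the stated bound on $h$.

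The main obstacle here is really bookkeeping rather than analysis: the splitting of $(v_3)$ must be done with a split point consistent with the $k$-scaling used for $(v_1)$, and the constants chosen when partitioning $\epsilon$ among the four (or five, after sub-splitting $(v_3)$) terms propagate into all of the stated conditions, which is why the constants $4, 8, 24, 96$ appear in the statement rather than a single uniform factor.
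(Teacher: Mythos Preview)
Your proposal is correct and follows essentially the same approach as the paper: splitting the four terms $(v_1)$--$(v_4)$, making each at most $\epsilon/4$, halving the sum in $(v_3)$ at $\lfloor k/2\rfloor$, and summing the geometric series for $(v_4)$ under the step-size constraint $h\alpha_{\min}\leq 1/4$. The only cosmetic difference is that the paper uses the slightly sharper elementary bound $1-e^{-c}\geq \tfrac{3}{4}c$ on $[0,1/4]$ (rather than your $c/2$) when controlling $(v_4)$, which only affects harmless constants.
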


\begin{proof}[Proof of Corollary~\ref{corollary:sufficient_conditions_discrete_time}]
Again, as in~\citet{vempala2019rapid}, we can derive sufficient conditions for convergence with a given precision $\KL(p_k, \pi) < \epsilon$ by setting each of the four terms in the upper-bound inferior to $\epsilon / 4$. 

First, we upper-bound the fourth term:
\begin{align}
    (v_4) 
    & \leq 
    6h^2 d L_{\max}^2 \sum_{i=1}^k e^{-(k-i) h \alpha_{\min}}
    = 
    6 h^2 d L_{\max}^2 \frac{1 - e^{-kh\alpha_{\min}}}{1 - e^{-h \alpha_{\min}}}
    \\
    & \leq
    6 h^2 d L_{\max}^2 \frac{1}{1 - e^{-h \alpha_{\min}}}
    \leq
    6h^2 d L_{\max}^2 \frac{4}{3 h \alpha_{\min}}
    =
    \frac{8 h d L_{\max}^2}{\alpha_{\min}}
\end{align}
where in the last inequality, similarly to~\citet{vempala2019rapid}, we use that $1 - e^{-c} \geq \frac{3}{4}c$ for $0 < c = h \alpha_{\min} \leq \frac{1}{4}$ which holds assuming that $h \leq \frac{1}{4 \alpha_{\min}}$. 
Thus, a sufficient condition for $(v_4) \leq \epsilon / 4$ is $h \leq \frac{\alpha_{\min}}{32 L_{\max}^2 d} \epsilon$.

Next, we upper-bound the first term by $(v_1) \leq  e^{-k h \alpha_{\min}} \KL(p_0, \mu_0)$ so that a sufficient condition for $(v_1) < \epsilon / 4$ is $k > \frac{1}{h \alpha_{\min}} \log \Big( \frac{4 \KL(p_0, \mu_0)}{\epsilon} \Big)$. 

A sufficient condition $(v_2) < \epsilon / 4$ is $\lambda_k \in ]1 - \frac{\epsilon}{4 A'}, 1]$. 

Finally, we upper-bound the third term: 
    \begin{align}
        (v_3) 
        &:=
         A'
        \sum_{i=1}^k 
        (\lambda_i - \lambda_{i-1})
        \exp \Big( -h \sum_{j=i}^k \alpha_j \Big)
        \\
        &= 
         A'
        \Big(
        \sum_{i=1}^{\lfloor k/2 \rfloor} 
        (\lambda_i - \lambda_{i-1})
        \exp \Big( -h \sum_{j=i}^k \alpha_j \Big)
        +
        \sum_{i=\lfloor k/2 \rfloor}^{k} 
        (\lambda_i - \lambda_{i-1})
        \exp \Big( -h \sum_{j=i}^k \alpha_j \Big)
        \Big)
        \\
        &\leq 
         A'
        \Big(
        \exp \Big( -h \sum_{j=\lfloor k/2 \rfloor}^k \alpha_j \Big)
        \sum_{i=1}^{\lfloor k/2 \rfloor} 
        (\lambda_i - \lambda_{i-1})
        +
        \sum_{i=\lfloor k/2 \rfloor}^{k} 
        (\lambda_i - \lambda_{i-1})
        \Big)
        \\
        & \leq
         A' \biggr(
        \exp(-h k \alpha_{\min}/2)
        (\lambda_{\lfloor k/2 \rfloor} - \lambda_1)
        +
        (\lambda_k - \lambda_{\lfloor k/2 \rfloor})
        \big)
        \\
        & \leq
         A'
        \exp(-h k \alpha_{\min}/2)
        +
         A' 
        (\lambda_k - \lambda_{\lfloor k/2 \rfloor})
    \end{align}
    so that sufficient conditions for $(v_3) < \frac{\epsilon}{4}$ can be obtained by setting each of the two terms smaller than $\frac{\epsilon}{8}$, yielding $k > \frac{2}{\alpha_{\min} h} \log \frac{8 A'}{\epsilon}$ and $\lambda_k - \lambda_{\lfloor k/2 \rfloor} < \frac{\epsilon}{8 A'}$.
\end{proof}

\subsection{Technical lemma from Appendix~\ref{app:ssec:discrete_time_upper_bound}}
\label{app:ssec:deferred_proof_discrete_time}

In this section, we prove Lemma~\ref{lemma:boundedness_discrete_case}.

\begin{proof}[Proof of Lemma~\ref{lemma:boundedness_discrete_case}]
    We recall that the particles follow the recursion
    $$
    X_{k+1} = X_{k} - h_k\nabla V_{\lambda_{k}}(X_k) + \sqrt{2h_k} z_k,
    $$
    where $z_k$ follows a standard normal distribution on $\R^d$ and $X_k \sim p_k$. Hence, we have
    \begin{align}
        \E[\|X_{k +1}\|^2] & = \mathbb{E}[\| X_{k} -  h_k \nabla V_{\lambda_{k}}(X_k) + \sqrt{2h_k} z_k \|^2 ] \\
        & = \mathbb{E}[\| X_{k} -  h_k \nabla V_{\lambda_{k}}(X_k) \|^2 ] + 2dh_k \\
        & = \E[\|X_k\|^2] - 2 h_k \mathbb{E}[X_k^\top \nabla V_{\lambda_k}(X_k)] + h_k^2 \mathbb{E}[\| \nabla V_{\lambda_k}(X_k) \|^2] + 2 d h_k.
    \end{align}
    The dissipativity assumption yields $\mathbb{E}[X_k^\top \nabla V_{\lambda_k}(X_k)] \geq (a_\pi \land a_\nu) \E[\|X_k\|^2] - (b_\nu + b_\pi)$ and as in the proof of Proposition~\ref{prop:pointwise_estimates} 
    we have $\| \nabla V_{\lambda_k}(X_k) \|^2 \leq 2(L_\pi + L_\nu)^2(\| X_k \|^2 + \frac{b_\nu + b_\pi}{a_\mu \land a_\pi})$  so that
    \begin{align*}
    \E[\|X_{k + 1}\|^2] & \leq \E[\|X_k\|^2] - 2 h_k( (a_\pi \land a_\nu) \E[\|X_k\|^2] - (b_\nu + b_\pi)) + 2d h_k \\
    & ~~~ + 2h_k^2(L_\pi + L_\nu)^2 \Big(\E[\|X_k\|^2] + \frac{b_\nu + b_\pi}{a_\mu \land a_\pi} \Big).
    \end{align*}
Letting $L := L_\pi + L_\nu$, $a :=a_\nu\land a_\pi$ and $b:=b_\nu + b_\pi$ we obtain
\begin{align*}
    \E[\|X_{k + 1}\|^2] & \leq \E[\|X_k\|^2] - 2h_k \Big((a -h_k L^2)\E[\|X_k\|^2] - (b + d) - h_k \frac{Lb}{a} \Big) \\
    & = \E[\|X_k\|^2](1 - 2h_k(a - h_k L^2)) + 2 h_k \Big( b + d + h_k \frac{L^2b}{a} \Big).
\end{align*}
Now by induction, assume that $\E[\|X_k\|^2] \leq \max(\E_{p_0}[\|x\|^2]), \frac{2(3b/2+d)}{a\land 1})$. By construction on $h_k$, $a -h_k L^2 > 0$ hence if $\E[\|X_k\|^2] > \frac{b+d+h_kL^2b/a}{2h_k(a-h_kL^2)}$, then $\E[\|X_{k + 1}\|^2] < \E[\|X_k\|^2]$ and in particular $\E[\|X_{k + 1}\|^2] \leq \max(\E_{p_0}[\|x\|^2], \frac{2(3b/2+d)}{a\land 1})$. Now, if $\E[\|X_k\|^2] \leq \frac{b+d+h_kL^2b/a}{2h_k(a-h_kL^2)}$, then if $1  > 2h_k(a - h_k L^2)$ we have
\begin{align}
    \E[\|X_{k + 1}\|^2] & \leq \frac{b+d+h_kL^2b/a}{a - h_kL^2}(1 - 2h_k(a - h_kL^2)) + 2h_k(b+d+h_kL^2b/a) \\
    & =\frac{b+d+h_kL^2b/a}{a - h_kL^2} \\
    & \leq \frac{2(3b/2 +d)}{a},
\end{align}
by construction on $h_k$. Conversely, if $1  \leq 2h_k(a - h_k L^2)$, then
\begin{align*}
    \E[\|X_{k + 1}\|^2] & \leq 2h_k \Big(b + d + h_k \frac{L^2b}{a} \Big) \\
    & \leq 2(3b/2 +d).
\end{align*}
\end{proof}

\section{Optimization of continuous-time tempered Langevin Dynamics}
\label{app:sec:opt_upper_bounds}

\subsection{Proof of Corollary~\ref{cor:loose_upper_bound}}
\label{app:ssec:loose_upper_bound}

We will rewrite and bound the terms $(u_1)$, $(u_2)$ and $(u_3)$ in the upper bound of Theorem~\ref{th:continuous_time_upper_bound} provided in \eqref{eq:th1_bound}, so that the role of the tempering schedule $\lambda(\cdot)$ is more explicit, at the expense of a looser upper-bound. 

\paragraph{Rewriting $(u_3)$.} We begin with the third term, using integration
by parts to write
\begin{align*}
    (u_3) 
    &:= 
    A
    \int_0^t \dot{\lambda}_s 
    \exp(-2 \int_s^t \alpha_v dv) ds 
    \\
    &=
    A \Big(
    \lambda_t - \lambda_0 
    \exp(-2 \int_0^t \alpha_v dv)
    -
    2 
    \int_0^t \lambda_s  \alpha_s \exp(-2 \int_s^t \alpha_v dv) ds
    \Big)
    \enspace .
\end{align*}

\paragraph{Recalling $(u_2)$.} Recall that 
\begin{align*}
    (u_2) 
    &= 
    A (1 - \lambda_t) 
    \enspace .
\end{align*}

\paragraph{Rewriting $(u_1)$.} 
We recall that $(u_1) = \exp(-2 \int_0^t \alpha_v dv) \KL(p_0, \mu_0)$. Assuming that the process is initialized at the proposal distribution $p_0 = \nu$, we can use the tempering rule $\mu_t = c_t \nu^{1 - \lambda_t} \pi^{\lambda_t}$ to rewrite the $\KL$ term
$$
    \KL(\nu, \mu_0)
    = \lambda_0\KL(\nu, \pi) - \log c_0.
$$
By positivity of $\log c_0$, we can write $- \log c_0 \leq \lambda_0 \KL(\pi, \nu)$. In particular
\begin{align*}
    \KL(\nu, \mu_0) & \leq \lambda_0(\KL(\nu, \pi) + \KL(\pi, \nu) )\\
    & = \lambda_0\Big(\int [V_\pi(x) - V_\pi^* -(V_\nu(x) - V_\nu^*)] d\nu(x) + \int [V_\nu(x) - V_\nu^* -(V_\pi(x) - V_\pi^*)] d\pi(x) \Big) \\
    & \leq \lambda_0 A.
\end{align*}

\paragraph{Upper bounding the sum.} We can now combine upper bounds on $(u_1), (u_2)$ and $(u_3)$ and simplify the result, using that the terms $\lambda_0 A \exp(-2 \int_0^t \alpha_v dv)$ and $A\lambda_t$ cancel out, to finally yield
\begin{align}
    \KL(p_t, \pi) 
    \leqslant A \cdot G_t(\lambda), \quad \quad
    G_t(\lambda) := 1 - 2\int_0^t \lambda_s \alpha_s \exp \Big(
    -2 \int_s^t \alpha_v \ud v  
    \Big) \ud s.
\end{align}

\paragraph{Optimal schedule when $\alpha_\pi \geq \alpha_\nu$}
Now, let $\Phi_s := \exp\big(-2\int_s^t \alpha_u \ud u).$
We have
$$
G_t(\lambda) = 1 - 2 \int_0^t \lambda_s \alpha_s\Phi_s \ud s.
$$
Applying integration by parts, we find\begin{align*}G_t(\lambda) &= 1 - [\lambda_s \Phi_s]_0^t+ \int_0^t \dot{\lambda}_s \Phi_s \ud s =1
- \lambda_t  + \lambda_0 \Phi_0 + \int_0^t \dot{\lambda}_s \Phi_s\ud s.\end{align*}Since $\dot{\lambda}_s \geqslant 0$ and $\Phi_s \geqslant e^{-2(t - s)\alpha_\pi}$ (using the assumption $\alpha_{\tg}\ge \alpha_{\ppsal}$), it follows that$$G_t(\lambda) \geqslant 1 -
\lambda_t + \lambda_0 e^{-2t\alpha_\pi}+ \int_0^t \dot{\lambda}_s e^{-2(t - s) \alpha_\pi}\ud s.$$
Reversing the integration by parts, we obtain
$$
G_t(\lambda) \geqslant 1 - 2 \int_0^t
\lambda_s \alpha_\pi e^{-2(t - s) \alpha_\pi} \ud s \geqslant 1 - 2 \int_0^t
 \alpha_\pi e^{-2(t - s) \alpha_\pi} \ud s,
$$ where the last inequality uses $\lambda_s\le 1$. 
Recognizing the right-hand side as $G_t$ evaluated
at schedule $\lambda \equiv 1$, we conclude.

\subsection{Proof of Proposition \ref{prop:optimal_tempering}}
\label{app:ssec:proof_of_optimal_tempering_formula}

\begin{sproof}
We first make the change of variable $\Phi(s) = \exp\biggl(- \int_s^t \alpha_u du \biggr)$ so that the problem can be re-written as
$$\sup_{\Phi \in I_t} \frac{1}{\alpha_\nu - \alpha_\pi}\Big( \frac{\alpha_\nu}{2} - \int_{0}^t \dot{\Phi}^2_s ds 
    - \frac{\alpha_\nu}{2} \Phi_0^2 \Big),
$$
where $I_t$ is the set of strictly positive functions $\Phi: [0, t] \mapsto \mathbb{R}$ with weak second derivative and such that $\alpha_\pi \Phi \leq \dot{\Phi} \leq \alpha_\nu \Phi$ and $\Phi \ddot{\Phi} \leq \dot{\Phi}^2$ and $\Phi(t) = 1$. Unlike the previous objective $G(\cdot)$, note that our new problem has now a concave objective yet has non-convex (and non closed) constraints. Hence we make a convex relaxation and solve instead 
$$\sup_{\Phi \in J_t} \frac{1}{\alpha_\nu - \alpha_\pi}\Big( \frac{\alpha_\nu}{2} - \int_{0}^t \dot{\Phi}^2_s ds 
    - \frac{\alpha_\nu}{2} \Phi_0^2 \Big),
$$
where $J_t$ is the set of non-negative functions in $W^{1,2}([0,t])$ such that $\alpha_\pi \Phi \leq \dot{\Phi}$ and $\Phi(t) = 1$. After showing that an optimal solution $\Phi$ indeed exists, we explicitly describe it and show that it belongs to $I_t$ a.k.a. it verifies the original constraints. In order to come up with an explicit solution $\Phi$, we make a disjunction of cases on whether the constraint $\alpha_\pi \Phi \leq \dot{\Phi}$ is saturated: if it is always saturated, then $\Phi$ is exponential if not, then it must be linear on a maximal neighborhood around the non saturated point. We then prove that this maximal neighborhood is unique, pathwise connected and can only be of the form $[0, x_0]$ with $x_0 \leq t$ so that $\Phi$ is linear on $[0, x_0]$ and is then exponential on $[x_0, t]$. 
\end{sproof}

\begin{proof}[Proof of Prop.~\ref{prop:optimal_tempering}]
We first notice that minimizing $G_t$ defined in Corollary~\ref{cor:loose_upper_bound} over the set of admissible tempering schedules $\lambda$ 
    is equivalent to solving
$$
\sup_{\lambda \in \Lambda_t} \int_{0}^t \lambda_s \alpha_s \exp\Big( -2 \int_s^t\alpha_u \ud u \Big) \ud s,
$$
where $\Lambda_t$ is the set of functions from $[0, t]$ to $[0, 1]$ with non-negative weak derivative. 

\begin{proposition}\label{prop:convex_non_relaxed}
Suppose $\alpha_\pi < \alpha_\nu$ and
let $I_t$ be the set of %strictly
positive functions $\Phi$ with weak second derivative, such that $\Phi_t = 1$, ~$\alpha_\pi \Phi \leq \dot{\Phi} \leq \alpha_\nu \Phi $ and $\ddot{\Phi} \Phi - \dot{\Phi}^2$ is a non-positive measure. Then
\begin{equation}\label{eq:convex_non_relaxed}
    \sup_{\lambda \in \Lambda_t} \int_{0}^t \lambda_s \alpha_s \exp\Big( -2 \int\alpha_u \ud u \Big) \ud s = \sup_{\Phi \in I_t} \frac{1}{\alpha_\nu - \alpha_\pi}\Big( \frac{\alpha_\nu}{2} - \int_{0}^t \dot{\Phi}^2_s ds 
    - \frac{\alpha_\nu}{2} \Phi_0^2 \Big).    
    \end{equation}
\end{proposition}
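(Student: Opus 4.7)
The plan is to establish the proposition via an explicit change of variables between the two admissible sets. Starting from a schedule $\lambda\in\Lambda_t$, define $\alpha_s := (1-\lambda_s)\alpha_\nu + \lambda_s\alpha_\pi$ (interpreted as the strongly log-concave lower bound appearing in $G_t$), and set $\Phi(s) := \exp(-\int_s^t \alpha_u\,\ud u)$. Then $\Phi(t)=1$, $\Phi>0$, $\dot\Phi = \alpha_s\Phi$, and inverting the linear interpolation gives $\lambda_s = (\alpha_\nu - \alpha_s)/(\alpha_\nu - \alpha_\pi) = (\alpha_\nu - \dot\Phi/\Phi)/(\alpha_\nu - \alpha_\pi)$. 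Conversely, from any $\Phi\in I_t$ the formula $\lambda_s := (\alpha_\nu - \dot\Phi/\Phi)/(\alpha_\nu - \alpha_\pi)$ recovers a $\lambda\in \Lambda_t$. The first step of the proof is to check that this correspondence is a bijection between $\Lambda_t$ and $I_t$, which amounts to translating each constraint: $\lambda_s\in[0,1]$ becomes $\alpha_\pi\Phi\leq \dot\Phi\leq \alpha_\nu\Phi$, and the monotonicity $\dot\lambda_s\geq 0$ becomes $\dot\alpha_s\leq 0$, i.e.\ $\frac{d}{ds}(\dot\Phi/\Phi)\leq 0$, which is exactly $\ddot\Phi\Phi - \dot\Phi^2\leq 0$ as a non-positive measure.

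The second step is the objective rewriting, which is a short direct calculation. Using $\dot\Phi = \alpha_s\Phi$, we have
\begin{align*}
\lambda_s\alpha_s\Phi^2
&= \frac{1}{\alpha_\nu - \alpha_\pi}\bigl(\alpha_\nu\alpha_s\Phi^2 - \alpha_s^2\Phi^2\bigr)
= \frac{1}{\alpha_\nu - \alpha_\pi}\Bigl(\frac{\alpha_\nu}{2}\frac{d}{ds}\Phi^2 - \dot\Phi^2\Bigr),
\end{align*}
and integrating on $[0,t]$ with $\Phi(t)=1$ yields the stated RHS. Combined with $\exp(-2\int_s^t\alpha_v\,\ud v)=\Phi(s)^2$, this gives termwise equality of the two objectives for every $\lambda\leftrightarrow\Phi$ pair, which then transfers to the suprema once the bijection is established.

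The main obstacle is regularity: the weak derivatives of $\lambda$ and the second weak derivative of $\Phi$ must be handled carefully so that the constraint $\ddot\Phi\Phi - \dot\Phi^2\leq 0$ makes sense as a signed measure and matches the a.e.\ monotonicity $\dot\lambda_s\geq 0$. The plan here is to note that since $\alpha_\pi\Phi\leq\dot\Phi\leq\alpha_\nu\Phi$, the logarithmic derivative $\alpha_s = \dot\Phi/\Phi$ is essentially bounded, and $\Phi$ is Lipschitz and bounded away from zero on $[0,t]$; hence $\alpha_s\in L^\infty$ admits a distributional derivative whose sign is controlled by the distributional sign of $\ddot\Phi\Phi - \dot\Phi^2$, and one can pass between $\lambda\in W^{1,1}$ with $\dot\lambda\geq 0$ and $\Phi\in W^{2,1}_{\rm loc}$ satisfying the measure condition. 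Once this identification is in place, the remaining content of the proposition reduces to the algebraic identity above, and the equality of suprema follows immediately.
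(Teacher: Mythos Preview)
Your proposal is correct and follows essentially the same approach as the paper: define $\Phi(s)=\exp(-\int_s^t\alpha_u\,\ud u)$, check that the constraints $\lambda\in[0,1]$ and $\dot\lambda\geq 0$ translate to $\alpha_\pi\Phi\leq\dot\Phi\leq\alpha_\nu\Phi$ and $\ddot\Phi\Phi-\dot\Phi^2\leq 0$ respectively, rewrite the integrand via $\lambda_s\alpha_s\Phi^2=\frac{1}{\alpha_\nu-\alpha_\pi}\bigl(\alpha_\nu\dot\Phi\Phi-\dot\Phi^2\bigr)$ and integrate, then run the construction in reverse from $\Phi\in I_t$. Your discussion of the regularity bookkeeping is somewhat more explicit than the paper's, but the mathematical content is identical.
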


\begin{proof}
    Let $\lambda \in \Lambda_t$ and denote $\Phi_s:=\Phi(s) = \exp(-\int_s^t \alpha_u \ud u)$. The function $\Phi$ is strictly positive and such that $\Phi_t = 1$. Its first and second derivatives read
    $$
    \begin{cases}
       &\dot{\Phi}_s = \alpha_s \Phi_s, \\ 
       & \ddot{\Phi}_s = \dot{\alpha}_s\Phi_s + \frac{\dot{\Phi}^2_s}{\Phi_s}.
    \end{cases}
    $$
    In particular, since $\lambda \in [0, 1]$ and $\alpha_s=(1-\lambda_s)\alpha_{\ppsal} + \lambda_s\alpha_{\tg}$, we have $\alpha_s \in [\alpha_\pi, \alpha_\nu]$ and by positiveness of $\Phi$, it holds
    $$
    \alpha_\pi \Phi \leq \dot{\Phi} \leq \alpha_\nu \Phi.
    $$
    Similarly, since $\dot{\lambda}$ is a non-negative measure%\ak{same question why "measure"}
    , $\dot{\alpha} = (\alpha_\pi - \alpha_\nu) \dot{\lambda}$ is a non-positive measure and so is $\dot{\alpha} \Phi^2$. This implies in particular
    $$
    \ddot{\Phi} \Phi - \dot{\Phi}^2 =\dot{\alpha} \Phi^2  \leq 0.
    $$
    Hence we verified that $\Phi \in I_t$. Furthermore, it reads
    \begin{align*}
        \int_{0}^t \lambda_s \alpha_s \exp\Big( -2 \int_s^t\alpha_u \ud u \Big) \ud s & = \frac{1}{\alpha_\nu - \alpha_\pi} \int_{0}^t (\alpha_\nu - \alpha_s)\alpha_s \exp\Big(- 2 \int_s^t \alpha_u \ud u \Big)\ud s \\
        & = \frac{1}{\alpha_\nu - \alpha_\pi} \int_0^t \alpha_\nu \dot{\Phi}_s \Phi_s - \dot{\Phi}^2_s \ud s \\ 
        & = \frac{1}{\alpha_\nu - \alpha_\pi} \Big( \frac{\alpha_\nu}{2}[\Phi^2_s]_0^t - \int_0^t \dot{\Phi}^2_s \ud s \Big) \\
        & = \frac{1}{\alpha_\nu - \alpha_\pi} \Big( \frac{\alpha_\nu}{2}(1 - \Phi_0^2) - \int_0^t \dot{\Phi}^2(s) \ud s \Big).
    \end{align*}
    In particular, we have
    $$
    \sup_{\lambda \in \Lambda_t} \int_{0}^t \lambda_s \alpha_s \exp\Big( -2 \int\alpha_u du \Big) ds \leq \sup_{\Phi \in I_t} \frac{1}{\alpha_\nu - \alpha_\pi}\Big( \frac{\alpha_\nu}{2} - \int_{0}^t \dot{\Phi}^2_s ds - \frac{\alpha_\nu}{2} \Phi_0^2 \Big).
    $$
    Conversely, let $\Phi \in I_t$, defining $\alpha_s  =
    \partial_s \log \Phi = \dot{\Phi}/\Phi$ and $\lambda_s = \frac{\alpha_\nu - \alpha_s}{\alpha_\pi - \alpha_\nu}$ we have by construction that $\lambda \in \Lambda_t$. Furthermore, the previous computations show that 
    $$ \frac{1}{\alpha_\nu - \alpha_\pi} \Big( \frac{\alpha_\nu}{2}(1 - \Phi_0^2) - \int_0^t \dot{\Phi}^2_s \ud s \Big) =  \int_{0}^t \lambda_s \alpha_s \exp\Big( -2 \int\alpha_u \ud u \Big) ds.$$
    In particular, we recover the reverse inequality
    $$
    \sup_{\Phi \in I_t} \frac{1}{\alpha_\nu - \alpha_\pi}\Big( \frac{\alpha_\nu}{2} - \int_{0}^t \dot{\Phi}^2_s ds - \frac{\alpha_\nu}{2} \Phi_0^2 \Big) \leq
    \sup_{\lambda \in \Lambda_t} \int_{0}^t \lambda_s \alpha_s \exp\Big( -2 \int\alpha_u \ud u \Big) \ud s.
    \qedhere
    $$
\end{proof}

Hence the problem can be re-written as the minimization of the convex functional $\Phi \mapsto \int_0^t \dot{\Phi}^2_s ds + \frac{\alpha_\nu}{2}\Phi_0^2$ under the constraint $\Phi \in I_t$. As can be noted, the constraint $\ddot{\Phi} \Phi - \dot{\Phi}^2 \leq 0$ is non-convex and the constraints $\Phi > 0$, $\Phi$ has a weak second derivative are non-closed. Hence we are going to relax all of these constraints and solve instead
\begin{equation}
\label{eq:convex_relaxation}
\inf_{\Phi \in J_t}
~\int_0^t \dot{\Phi}^2_s \ud s + \frac{\alpha_\nu}{2} \Phi_0^2,
\end{equation}
where $J_t\subset W^{1,2}([0,t])$ is the set of positive functions such that $\Phi_t = 1, ~ \alpha_\pi \Phi \leq \dot{\Phi} $ almost everywhere. 
Note that we also dropped the constraint $\dot{\Phi} \leq \alpha_\nu \Phi$ for convenience of the proof. Nevertheless, we show that at the optimum, the solution of the problem above verifies all the initial constraints. 

\begin{proposition}Assume $\alpha_{\pi}<\alpha_{\nu}$.
    Then~\eqref{eq:convex_relaxation} admits a minimizer $\Phi$ which has the following expression:
    \begin{enumerate}
\item If $\alpha_\pi > \frac{\alpha_\nu}{2}$, then $\Phi_s = e^{\alpha_\pi(s-t)}$.
\item If $\frac{\alpha_\nu}{t \alpha_\nu+ 2} < \alpha_\pi \leq  \frac{\alpha_\nu}{2}$, then
\begin{equation*}
\Phi_s =
\begin{cases}
\alpha_\pi e^{\alpha_\pi(\frac{1}{\alpha_\pi} - \frac{2}{\alpha_\nu} -t )}s +\frac{2\alpha_\pi}{\alpha_\nu} e^{\alpha_\pi(\frac{1}{\alpha_\pi} - \frac{2}{\alpha_\nu} -t )} & s \leq \frac{1}{\alpha_\pi} - \frac{2}{\alpha_\nu}, \\
e^{\alpha_\pi(s-t)}  & s >  \frac{1}{\alpha_\pi} - \frac{2}{\alpha_\nu}.
\end{cases}
\end{equation*}
\item If $\alpha_\pi \leq \frac{\alpha_\nu}{t \alpha_\nu+ 2}$, then $\Phi_s = \frac{\alpha_\nu}{2 + t\alpha_\nu}s + (1 -   \frac{t\alpha_\nu}{2 + t\alpha_\nu})$.
\end{enumerate}
\end{proposition}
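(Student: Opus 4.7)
The plan is to first prove existence of a minimizer by the direct method of calculus of variations, then characterize its pointwise structure via Euler--Lagrange equations and a free-endpoint transversality condition, then locate the regime switch by a geometric argument on the inequality $\alpha_\pi \Phi \leq \dot{\Phi}$, and finally read off the explicit constants in the three announced cases.

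For existence, I would argue that the functional $F(\Phi) := \int_0^t \dot{\Phi}_s^2\,ds + \tfrac{\alpha_\nu}{2}\Phi_0^2$ is convex and weakly lower semicontinuous on $W^{1,2}([0,t])$, and that $J_t$ is convex and weakly closed: the endpoint condition $\Phi_t = 1$ passes to weak limits via the compact embedding $W^{1,2}([0,t]) \hookrightarrow C([0,t])$, the pointwise a.e.\ inequality $\alpha_\pi \Phi \leq \dot{\Phi}$ is preserved under weak $W^{1,2}$ limits, and nonnegativity is preserved as well. Combining $\Phi_t = 1$ with control of $\|\dot{\Phi}\|_{L^2}$ yields $W^{1,2}$ coercivity of $F$, hence a minimizer exists.

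For the structural step I would analyze the first-order optimality conditions. On any open subinterval where the constraint is inactive ($\alpha_\pi \Phi < \dot{\Phi}$), arbitrary smooth variations are admissible and the Euler--Lagrange equation reduces to $\ddot{\Phi} = 0$, so $\Phi$ is affine there; on any subinterval where the constraint is active, $\dot{\Phi} = \alpha_\pi \Phi$ forces $\Phi_s = Ce^{\alpha_\pi s}$. At $s=t$ we have the Dirichlet condition $\Phi_t = 1$, and at the free endpoint $s = 0$, provided the constraint is inactive there, a one-sided variation followed by integration by parts yields the transversality condition $\dot{\Phi}(0) = \tfrac{\alpha_\nu}{2}\Phi_0$. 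The crucial observation, and what I expect to be the main obstacle, is that the inactive region must be an initial interval $[0, x_0]$: an affine piece $\Phi_s = As + B$ with $A, B > 0$ satisfies $\alpha_\pi(As + B) \leq A$ only for $s \leq \tfrac{1}{\alpha_\pi} - \tfrac{B}{A}$, so linear growth cannot sustain the inequality past a threshold, which both forces a switch from affine to exponential and prevents any subsequent re-opening of the inactive region further to the right. Together with $C^1$ matching across any switch, this shows the minimizer has at most one transition, occurring at a unique $x_0$ (possibly degenerate at $0$ or at $t$).

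With the structure fixed, the constants fall out algebraically. On $[x_0, t]$ I take $\Phi_s = e^{\alpha_\pi(s - t)}$ (so $\Phi_t = 1$); on $[0, x_0]$ I take $\Phi_s = As + B$, and I impose $C^1$ matching at $x_0$, giving $A = \alpha_\pi e^{\alpha_\pi(x_0 - t)}$ and $A x_0 + B = e^{\alpha_\pi(x_0 - t)}$, together with the transversality $A = \tfrac{\alpha_\nu}{2}B$, which yields $\tfrac{B}{A} = \tfrac{2}{\alpha_\nu}$ and hence $x_0 = \tfrac{1}{\alpha_\pi} - \tfrac{2}{\alpha_\nu}$. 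Clipping $x_0$ to $[0, t]$ produces the three announced regimes: $\alpha_\pi > \alpha_\nu/2$ gives $x_0 \leq 0$ and a pure exponential; $\alpha_\pi \leq \alpha_\nu/(2 + t\alpha_\nu)$ gives $x_0 \geq t$ and a pure affine $\Phi$ whose coefficients I recover from $A t + B = 1$ together with $A = \tfrac{\alpha_\nu}{2}B$; the intermediate range yields the mixed formula exactly as stated. A final verification that the explicit candidate lies in $J_t$ and satisfies the KKT conditions closes the argument by convexity of $F$ and $J_t$.
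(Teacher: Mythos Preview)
Your proposal is correct and follows a genuinely different route from the paper. The paper never writes down Euler--Lagrange equations or a transversality condition: instead, at any point $x_0$ where the constraint is non-saturated it builds a linear competitor $\tilde\Phi$ on a maximal neighborhood, uses optimality plus the equality case of Jensen's inequality to force $\Phi$ itself to be affine there, and then argues case-by-case on the location of $x_0$ to conclude that the non-saturated set is a single interval of the form $[0,x_0]$. Having reduced to three candidate shapes (pure exponential, pure affine, affine-then-exponential), the paper then evaluates the objective on each and compares them by hand, optimizing a one-dimensional function $f(x_0)$ to locate the switch. Your approach is more direct: the free-endpoint transversality $\dot\Phi(0)=\tfrac{\alpha_\nu}{2}\Phi_0$ together with the $C^1$ matching immediately pins down $x_0=\tfrac{1}{\alpha_\pi}-\tfrac{2}{\alpha_\nu}$ without any comparison of forms, and the convexity of $F$ on the convex set $J_t$ lets you close by KKT sufficiency. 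This is cleaner, but it trades the paper's elementary competitor-and-Jensen step for two points you will need to justify carefully: that the $W^{1,2}$ minimizer is actually $C^1$ across the active/inactive switch (a Weierstrass--Erdmann-type corner condition, which here follows because the momentum $2\dot\Phi-\mu$ must be continuous and the multiplier $\mu$ has no atoms for this first-order constraint), and that ``no re-opening'' really follows from your geometric observation---note that with $C^1$ matching, an affine piece following an exponential one would have its strict-inactivity threshold $\tfrac{1}{\alpha_\pi}-\tfrac{B}{A}$ equal to the junction point itself, so the constraint is immediately violated to the right, which is exactly what you need.
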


\begin{proof}
We aim to solve the minimization problem defined in~\eqref{eq:convex_relaxation}.

\paragraph{Existence of a minimum.} 
Since the objective has compact sub-level sets and is continuous for the $W^{1,2}([0,t])$ norm, we can extract a minimizing sequence $(\Phi_n)_n $ in $J_t$ that converges in
$W^{1,2}$ towards some $\Phi$ in $W^{1,2}([0,t])$.
Because $W^{1,2}$ metrizes pointwise convergence,
we obtain that the infimum verifies two of the constraints: $\Phi_t = 1$ and $\Phi \geq 0$ almost everywhere. 

Then, since $\Phi_n \in J_t$, for $s_0,h>0$ such that $s_0, s_0+h \in [0,t]$  we have $0\le \alpha_{\pi}\Phi_n\le \dot{\Phi}_n$ and then: 
\begin{align*}
  \alpha_\pi (\Phi_n)_{s_0}\le   \frac{\alpha_\pi}{h} \int_{s_0}^{s_0+h} (\Phi_n)_s \ud s \leq  \frac{1}{h} \int_{s_0}^{s_0+h}(\dot{\Phi}_n)_s \ud s 
  = 
  \frac{(\Phi_n)_{s_0+h} - (\Phi_n)_{s_0}}{h}.
\end{align*}
Letting $n$ go to infinity and $h$ to zero, we obtain,
for almost all $s_0 \in [0, t]$
\begin{align*}
    \alpha_\pi \Phi_{s_0} \leq \dot{\Phi}_{s_0},
\end{align*}
hence $\Phi\in J_t$, showing $\Phi$ is a minimizer of \eqref{eq:convex_relaxation}. 

\paragraph{Explicit expression of the minimum.} %\ak{beware below in the writing we should talk about "a" minimizer $\Phi$}
We now derive an explicit expression for $\Phi$ based on whether or not there exists a point $x_0$ where the constraint
\begin{align*}
    \alpha_\pi \Phi \leq \dot{\Phi}  \, ,
\end{align*}
is not saturated.

\paragraph{Case 1: the inequality is saturated everywhere.} 
In the case where the inequality constraint above is saturated for $x$ almost everywhere, we obtain that the minimizer $\Phi$ is of the form $\Phi_s = K \exp(\alpha_\pi s)$ and using the fact that $\Phi_t = 1$, we recover that $\Phi_s = \exp((s-t) \alpha_\pi)$. 

\paragraph{Case 2: the inequality is not saturated saturated everywhere.} 
Now let us assume that there exists $x_0$ where the inequality is not saturated by the minimizer
\begin{align*}
    \alpha_\pi \Phi(x_0) < \dot{\Phi}(x_0).
\end{align*}
We can make a disjunction of cases with respect to $x_0$.

\paragraph{If $x_0=t$.} We will show that $\Phi$ is linear on $[0,t]$. We first define a neighborhood of $x_0$ of size $\epsilon$ that satisfies a certain inequality.
By continuity of $\Phi$ and the definition of the derivative, there exists an $\epsilon>0$ such that for all $\delta < \epsilon$
\begin{equation}
\label{eq:unsaturated_cons}
\alpha_\pi \Phi(t) < \frac{\Phi(t) - \Phi(t - \delta)}{\delta}.
\end{equation}
Let us now define the neighborhood size $\epsilon$ to be the largest $\epsilon > 0$ such that $t - \epsilon \geq 0$ and such that the inequality above holds for all $\delta < \epsilon$.

Next, we define a candidate $\Tilde{\Phi}$ that is linear in this neighborhood, and equal to the minimizer $\Phi$ outside the neighborhood. Let 
\begin{align*}
    \begin{cases}
    &\tilde{\Phi}(x) = \Phi(x) ~\text{if}~ x \notin [t -  \epsilon, t], \\
    &\tilde{\Phi}(x) = \frac{\Phi(t) - \Phi(t -  \epsilon)}{\epsilon} (x - t + \epsilon) + \Phi(t -  \epsilon)  ~\text{if}~ x \in  [t -  \epsilon, t].
    \end{cases}
\end{align*}
By construction, $\tilde{\Phi}$ verifies the constraints, i.e. $\tilde{\Phi}\in J_t$. Indeed, taking $\delta\to \epsilon$ in \eqref{eq:unsaturated_cons}, and using the monotonicity of $\tilde{\Phi}$, we get successively for almost every $x\in [0,t]$:
$$
\alpha_{\pi}\tilde{\Phi}(x)\le \alpha_{\pi}\tilde{\Phi}(t)\le \frac{\Phi(t)-\Phi(t-\epsilon)}{\epsilon}= \dot{\Phi}(x).
$$
Hence, $\tilde{\Phi}$ is a suitable candidate for solving Problem~\eqref{eq:convex_relaxation}. 

Finally, we show that the candidate we built $\Tilde{\Phi}$ achieves a smaller objective than a minimizer and must therefore be \textit{equal} to a minimizer $\Phi \in J_t$. By optimality,
\begin{align*}
    \int_0^t \dot{\Phi}^2(s) ds + \frac{\alpha_\nu}{2} \Phi(0)^2 
    \leq
    \int_0^t \dot{\tilde{\Phi}}^2(s) ds + \frac{\alpha_\nu}{2} \tilde{\Phi}(0)^2
\end{align*}
and since $\Phi$ and $\tilde{\Phi}$ coincide on $[0,t-\epsilon]$, it holds
\begin{align*}
    \int_{t - \epsilon}^{t} \dot{\Phi}^2(s) ds  
    \leq
    \frac{(\Phi(t) - \Phi(t - \epsilon))^2}{\epsilon}.
\end{align*}
The same inequality in the other direction is obtained by applying Jensen's inequality
\begin{align*}
    \int_{t - \epsilon}^{t} \dot{\Phi}^2(s) ds 
    & \geq 
    \epsilon \Big( \frac{1}{\epsilon} \int_{t - \epsilon}^{t} \dot{\Phi}(s) ds \Big)^2 
    = 
    \frac{(\Phi(t) - \Phi(t - \epsilon))^2}{\epsilon}.
\end{align*}
We are therefore in the equality case of Jensen with a strongly convex and smooth function, so it must hold for $s$ almost everywhere in the neighborhood $[t -  \epsilon, t]$ that
$$
\dot{\Phi}(s) = \frac{\Phi(t) - \Phi(t - \epsilon)}{\epsilon},
$$
and in particular $\Phi$ is linear in the neighborhood $[t -  \epsilon, t]$ and is of the form $\Phi(x) = a x + b$. 

Finally, we will show that the neighborhood covers the entire interval $[0, t]$, or in other words, that $\epsilon= t$. Suppose it is not the case and 
$\epsilon<t$. Then, by continuity of $\Phi$ at the neighborhood border $t - \epsilon$, %\anna{the constraint is saturated at $\epsilon$:}
%the minimizer 
%
\begin{align*}
    \alpha_\pi \Phi(t) = \frac{\Phi(t) - \Phi(t - \epsilon)}{\epsilon} 
    \enspace .
\end{align*}
Combined with the fact that $\Phi(t) = 1$, this implies $a = \alpha_\pi$ which is a contradiction with the inequality \eqref{eq:unsaturated_cons} that characterizes the neighborhood, and in particular yields $a > \alpha_\pi$. Hence, we must have 
$\epsilon = t$
and in particular, $\Phi$ is linear on the whole interval $[0,t]$.

\paragraph{If $x_0 \in ]0, t[$} As before we can define $\epsilon_+$ and $\epsilon_-$ as the largest $\epsilon >0$ such that $x_0 + \epsilon < t$ (resp. $x_0 - \epsilon > 0$) and $ \frac{\Phi(x_0 + \delta) - \Phi(x_0)}{\delta} > \alpha_\pi \Phi(x_0 + \delta)$ for all $\delta < \epsilon_+$ (resp. $ \frac{\Phi(x_0) - \Phi(x_0 - \delta)}{\delta} > \alpha_\pi \Phi(x_0)$ for all $\delta < \epsilon_-$). Again, we obtain that $\Phi$ must be linear on $[x_0, x_0+\epsilon_+]$ of the form $\Phi(x) = a_{+} x + b_+$ and linear on $[x_0 - \epsilon_-, x_0]$ of the form $\Phi(x) = a_- x + b_-$. Furthermore, since $\Phi$ is differentiable at $x_0$, its derivative must read
$$
\dot{\Phi}(x_0) = \lim_{h \to 0} \frac{\Phi(x_0 + h) - \Phi(x_0)}{h} = \lim_{h \to 0} \frac{\Phi(x_0) - \Phi(x_0 - h)}{h},
$$
which implies $a_+ = a_- \coloneqq a$ and by continuity of $\Phi$ at $x_0$, we also recover $b_+ = b_-$. In particular, $\Phi$ is linear across the whole interval $[x_0 - \epsilon_-, x_0 + \epsilon_+]$. We have now four possibilities: either a) $x_0 - \epsilon_- = 0$ and $x_0 + \epsilon_+ =t$, b) $x_0 - \epsilon_- > 0$ and $x_0 + \epsilon_+ < t$, c) $x_0 - \epsilon_- > 0$ and $x_0 + \epsilon_+ =t$, d) $x_0 - \epsilon_- = 0$ and  $x_0 + \epsilon_+ <t$. The case a) simply corresponds to the case where $\Phi$ is linear across the whole interval. The case b) implies by continuity that $a = \alpha_\pi \Phi(x_0) = \alpha_\pi \Phi(x_0 + \epsilon_+)$ which implies that $a=0$. This cannot hold as we initially assumed that $ \alpha_\pi \Phi(x_0) < \dot{\Phi}(x_0) = 0$ which would result in a violation of the positivity constraint. The case c) yields the contradiction $a= \alpha_\pi \Phi(x_0)$ and $a > \alpha_\pi \Phi(x_0)$. Finally, let us deal with case d). We prove that in this case, there exists no other point outside $[0, x_0 +\epsilon_+]$ such that the constraint $\alpha_\pi \Phi \leq \dot{\Phi}$  is not saturated ; in particular this implies that $\Phi$ is linear on $[0, x_0 + \epsilon_+]$ and of the form $K \exp(\alpha_\pi x)$ on $[x_0 + \epsilon_+, t]$. Let us assume that there exists $x_1$ outside of $[0, x_0 + \epsilon_+]$ such that 
$$
\alpha_\pi \Phi(x_1) < \dot{\Phi}(x_1).
$$
Just as before, this implies the existence of $\tilde{\epsilon}_+$ such that $\Phi$ is linear across $[0, x_1 + \tilde{\epsilon}_+]$ and such that
$$
\alpha_\pi \Phi(x_1 + \tilde{\epsilon}_+) \leq \frac{\Phi(x_1 +  \tilde{\epsilon}_+) - \Phi(x_1)}{\tilde{\epsilon}_+}.
$$
By linearity, if we take $\epsilon$ such that $x_0 + \epsilon < x_1 + \tilde{\epsilon}_+$, the right hand side is also given by $\frac{\Phi(x_0 + \epsilon) - \Phi(x_0)}{\epsilon}$ and since $\Phi$ is strictly increasing, the left-hand side can be lower-bounded by $\alpha_\pi \Phi(x_0 + \epsilon)$ which implies
$$
\alpha_\pi \Phi(x_0+\epsilon) < \frac{\Phi(x_0 + \epsilon) - \Phi(x_0)}{\epsilon},
$$ 
for all $\epsilon$ such that $x_0 + \epsilon < x_1 + \tilde{\epsilon}_+$. Since $x_1 + \tilde{\epsilon}_+ > x_0 + \epsilon_+$, this contradicts the definiton of $\epsilon_+$ which was chosen as the largest $\epsilon$ such that for all $\epsilon \leq \epsilon_+$, 
$$
\alpha_\pi \Phi(x_0+\epsilon) < \frac{\Phi(x_0 + \epsilon) - \Phi(x_0)}{\epsilon}.
$$ 
In particular, the constraint $\alpha_\pi \Phi \leq \dot{\Phi}$ is always saturated outside of $[0, x_0 + \epsilon_+]$. 

\paragraph{If $x_0 = 0$} As before, we obtain that $\Phi$ is either linear across the whole space $[0, t]$, either there exists $x_0$ such that $\Phi$ is linear across $[0, x_0]$ and of the form $K\exp(\alpha_\pi x)$  on $[x_0, t]$ and such that $\alpha_\pi \Phi(x_0) = \dot{\Phi}(x_0) $.

\paragraph{Summary of the minimizer candidates} We can now summarize the candidates we obtained for the minimizer function $\Phi$. It has the three following possible forms:
\begin{itemize}
    \item Form 1: $\Phi$ is an exponential function of the form $K\exp(\alpha_\pi x)$ on the entire interval $[0, t]$.

    \item Form 2: $\Phi$ is a linear function on the entire interval $[0, t]$.

    \item Form 3: $\Phi$ is linear function then an exponential function on the entire interval. 

    Specifically, there exists $x_0 \in ]0, t[$ such that $\Phi$ is linear on $[0, x_0]$ and of the form $K\exp(\alpha_\pi x)$  on $[x_0, t]$ and such that $\alpha_\pi \Phi(x_0) = \dot{\Phi}(x_0) $.

\end{itemize}

We shall determine which candidate $\Phi$ among the three is the true minimizer, by evaluating the objective function~\eqref{eq:convex_relaxation} for each and picking the candidate which achieves the lowest value.

\paragraph{Form 1.} Since $\Phi(t) = 1$, it implies that the function $\Phi$ is entirely determined and given by $\Phi(x) = \exp(\alpha_\pi (x-t))$. In this case, the value of the objective is given by 
\begin{align*}
\int_0^t \dot{\Phi}^2(s) ds + \frac{\alpha_\nu}{2}\Phi(0)^2 &= \alpha_\pi^2 \Big[\frac{e^{2\alpha_\pi(x-t)}}{2 \alpha_\pi}\Big]_0^t + \frac{\alpha_\nu}{2} e^{-2\alpha_\pi t},
 = \frac{\alpha_\pi}{2} + \frac{e^{-2\alpha_\pi t}}{2}(\alpha_\nu - \alpha_\pi).
\end{align*}
\paragraph{Form 2.} The function $\Phi$ is of the form $\Phi(x) = ax + b$. Since $\Phi(t) = 1$, the coefficient $b$ is given by $1 - at$. Furthermore, the constraint $\alpha_\pi \Phi \leq \dot{\Phi}$ is equivalent to solving
$a \geq \alpha_\pi$ and the constraint $\Phi \geq 0$ becomes $a\leq 1/t$. Hence, in the case where $1/t <  \alpha_\pi$, there is no linear admissible potential. In the case where, $\alpha_\pi \leq 1/t$, we need to solve the following one dimensional quadratic problem
$$
\inf_{1/t \geq a \geq \alpha_\pi} t a^2 + \frac{\alpha_\nu}{2}(1- at)^2.
$$
The objective can be rewritten as $\theta(a) = a^2(t + \frac{t^2\alpha_\nu}{2}) - a t \alpha_\nu  + \frac{\alpha_\nu}{2}$. The minimum of $\theta$ is given by
\begin{equation}
\label{eq:def_threshold}
a_* \coloneqq \frac{\alpha_\nu}{2 + t \alpha_\nu}
\end{equation}
Note that we always have $a_* \leq 1/t$, hence if $\alpha_\pi \leq a_*$ the minimum is attained for $a = a_*$ and is given by
\begin{align*}
\theta(a^*) & = \frac{\alpha_\nu}{2} - \frac{t \alpha_\nu^2}{4 + 2t \alpha_\nu} 
= \frac{\alpha_\nu}{2}(1 - \frac{t \alpha_\nu}{2 + t \alpha_\nu})
=
\frac{\alpha_\nu}{2 + t \alpha_\nu}. 
\end{align*}
 If $\alpha_\pi > a_*$, the minimum is attained for $a = \alpha_\pi$ and is given by
 \begin{align*}
 \theta(\alpha_\pi) & = t \alpha_\pi^2 + \frac{\alpha_\nu}{2}(1- \alpha_\pi t)^2 =  t \alpha_\pi^2 + \frac{\alpha_\nu}{2} - \alpha_\nu \alpha_\pi t + (t \alpha_\pi)^2  \frac{\alpha_\nu}{2} = t\alpha_\pi (\alpha_\pi - \alpha_\nu + t \frac{\alpha_\pi \alpha_\nu}{2}) + \frac{\alpha_\nu}{2}.
 \end{align*}
 
 \paragraph{Form 3.} There exists $0<x_0<t$ such that the function $\Phi$ is of the form $ax+b$ over $[0, x_0]$ and then is given by $e^{\alpha_\pi(x-t)}$ over $[x_0, t]$ and which is such that $\dot{\Phi}(x_0) = \alpha_\pi \Phi(x_0)$. The continuity of $\Phi$ implies that
 $ a x_0 + b = e^{\alpha_\pi(x_0-t)} $ and the previous equation gives $ a = (ax_0 + b) \alpha_\pi$. Both these equations uniquely determine $a$ and $b$ when $x_0$ is fixed as
 \begin{equation*}
 \begin{cases}
 & a = \alpha_\pi e^{\alpha_\pi(x_0-t)}, \\
 & b = e^{\alpha_\pi(x_0-t)}(1 - \alpha_\pi x_0),
 \end{cases}
 \end{equation*}
 and in particular the positivity constraint is equivalent to $x_0 \leq \frac{1}{\alpha_\pi}$ ; note that the constraint $\alpha_\pi \Phi \leq \dot{\Phi}$ is indeed respected over $[0, t]$. Hence, we must solve the following one dimensional problem
 $$
 \inf_{0\leq x_0 \leq \min(t, \frac{1}{\alpha_\pi})}  x_0  \alpha_\pi^2 e^{2\alpha_\pi(x_0-t)} + \frac{\alpha_\pi}{2} (1 - e^{2\alpha_\pi(x_0 - t)})  + \frac{\alpha_\nu}{2}e^{2\alpha_\pi(x_0-t)}(1 - \alpha_\pi x_0)^2.
 $$
 Let us remark that $x_0 = 0$ being an admissible candidate, the potential $\Phi(x) = e^{\alpha_\pi (x-t)}$ is indeed an admissible candidate and in particular, the value of the problem above is always lower than the one obtained with Form 1. Denoting $f$ the objective function, let us compute the derivative of $f$. For all $x_0$, we have
 \begin{align*}
 f'(x_0) & = \alpha_\pi^2 e^{2(x_0 - t)} + 2 x_0 \alpha_\pi^3 e^{2(x_0 - t)} - \alpha_\pi^2 e^{2(x_0 - t)} + \alpha_\pi \alpha_\nu (1 - x_0 \alpha_\pi)^2 e^{2(x_0 - t)} \\
 & ~~~ - \alpha_\nu \alpha_\pi (1 - x_0 \alpha_\pi) e^{2(x_0 - t)}, \\
 & = e^{2(x_0 - t)}\alpha_\pi \Big( 2x_0 \alpha_\pi^2 - \alpha_\nu(1-x_0\alpha_\pi)( 1 - 1 + x_0 \alpha_\pi) \Big), \\
 & = x_0e^{2(x_0 - t)}\alpha_\pi^2 (2 \alpha_\pi - \alpha_\nu (1 - x_0 \alpha_\pi)).
 \end{align*}
 The term  $(2 \alpha_\pi - \alpha_\nu (1 - x_0 \alpha_\pi))$ cancels for $x_0 = \frac{1}{\alpha_\pi} - \frac{2}{\alpha_\nu}$, hence if $\alpha_\pi \geq \frac{\alpha_\nu}{2}$, the derivative $f'$ is always non-negative for all $x_0 \geq 0$ hence the optimal choice of $x_0$ is given by $ x_0 =0$. Let us remark that this case implies $\alpha_\pi > a_*$ where $a_*$ is defined in \eqref{eq:def_threshold} and for which the optimal Form 2. is given by $\Phi(x) = \alpha_\pi x + (1 - t \alpha_\pi)$  which an admissibile candidate for Form 3. This proves that whenever $\alpha_\pi > \frac{\alpha_\nu}{2}$, the optimal potential is given by $\Phi(x) = e^{\alpha_\pi(s-t)}$.with $x_0 = t$. 
 If $ \alpha_\pi < \frac{\alpha_\nu}{2}$, then $f$ decreases between $0$ and $\frac{1}{\alpha_\pi} - \frac{2}{\alpha_\nu}$ and then increases. In particular, since $\frac{1}{\alpha_\pi} - \frac{2}{\alpha_\nu}< \frac{1}{\alpha_\pi}$, the minimum is attained for $x_0 = \min(t, \frac{1}{\alpha_\pi} - \frac{2}{\alpha_\nu})$. In the case where $t \leq \frac{1}{\alpha_\pi} - \frac{2}{\alpha_\nu}$ which is equivalent to $\alpha_\pi \leq a_*$ with $a_*$ defined in \eqref{eq:def_threshold}, the obtained potential is linear hence it is necessarily sub-optimal with respect to Form 2. which yields as an optimal potential 
 $$
 \Phi(x) = a_* x + (1 - a_* t).
 $$
Let us now place ourselves in the case $t > \frac{1}{\alpha_\pi} - \frac{2}{\alpha_\nu}$. In the case where $t > \frac{1}{\alpha_\pi} - \frac{2}{\alpha_\nu}$, which is equivalent to $\alpha_\pi > a_*$, the optimal potential yielded by Form 2. (if any) is given by $\Phi(x) = \alpha_\pi x + (1 - \alpha_\pi t) $ which is admissible for Form 3. by taking $x_0 = t$ hence, Form 3. is necessarily sub-optimal.
\\ \\
As a conclusion, we have obtained the following expressions for $\Phi$:
\begin{enumerate}
\item If $\alpha_\pi > \frac{\alpha_\nu}{2}$, we have $\Phi(x) = e^{\alpha_\pi(x-t)}$.
\item If $\frac{\alpha_\nu}{t \alpha_\nu+ 2} < \alpha_\pi \leq  \frac{\alpha_\nu}{2}$, then $\Phi$ is given by
\begin{equation*}
\begin{cases}
& \Phi(x) = \alpha_\pi e^{\alpha_\pi(\frac{1}{\alpha_\pi} - \frac{2}{\alpha_\nu} -t )}x +\frac{2\alpha_\pi}{\alpha_\nu} e^{\alpha_\pi(\frac{1}{\alpha_\pi} - \frac{2}{\alpha_\nu} -t )}~ \text{if} ~ x \leq \frac{1}{\alpha_\pi} - \frac{2}{\alpha_\nu}, \\
& \Phi(x) = e^{\alpha_\pi(x-t)} ~\text{if}~ x >  \frac{1}{\alpha_\pi} - \frac{2}{\alpha_\nu}.
\end{cases}
\end{equation*}
\item If $\alpha_\pi \leq \frac{\alpha_\nu}{t \alpha_\nu+ 2}$, then $\Phi(x) = \frac{\alpha_\nu}{2 + t\alpha_\nu}x + (1 -   \frac{t\alpha_\nu}{2 + t\alpha_\nu})$.
\end{enumerate}
Note that in any case, $\Phi$ is in fact continuously differentiable and verifies almost everywhere $\ddot{\Phi} \Phi \leq \dot{\Phi}^2$ as well as $\dot{\Phi} \leq \alpha_\nu \Phi$, hence it is a solution of the problem
$$
\inf_{\substack{\Phi \geq 0,  \ddot{\Phi} \Phi \leq \dot{\Phi}^2, \\ \Phi(t) = 1, ~ \alpha_\pi \Phi \leq \dot{\Phi} \leq \alpha_\nu \Phi.}} ~\int_0^t \dot{\Phi}^2(s) ds + \frac{\alpha_\nu}{2} \Phi(0)^2.
$$
\end{proof}
The optimal $\Phi$ is linked to the optimal tempering scheme $\lambda$ as $\lambda_s = \frac{\alpha_\nu - \alpha_s}{\alpha_\nu - \alpha_\pi}$ with $\alpha_s = \partial_s\log(\Phi)$. Hence we obtain
$$
\lambda_s = \min\Big(1, \frac{\alpha_\nu(\alpha_\nu s +1)}{(\alpha_\nu - \alpha_\pi)(\alpha_\nu s +2)} \Big).
$$
\end{proof}

\subsection{Proof of Proposition~\ref{prop:linear_tempering}}
\label{app:ssec:continuous_time_linear_tempering}

Reversing the integration by parts we have
\begin{align*}
G_t(\lambda) &= 1 - 2 \int_0^t \lambda_s \alpha_s \exp \big(
-2 \int_s^t \alpha_v \ud v \big) \ud s = \int_0^t \dot{\lambda}_s \exp\big(-2 \int_s^t
\alpha_v \ud v\big) \ud s.
\end{align*}
Hence
\begin{align*}
    G_t(\lambda)&=
    \int_0^t \dot{\lambda}_s 
    \exp\Big(-2 \int_s^t \alpha_v dv\Big)
    ds
    \\
    &= 
    \frac{1}{t} 
    \int_0^t \exp \Big( -2 \int_s^t (\frac{v}{t} (\alpha_\pi - \alpha_\nu) + \alpha_\nu) dv
    \Big)
    ds
    \\
    &= 
    \frac{1}{t} 
    \int_0^t \exp \Big( 
    -2 \frac{\alpha_\pi - \alpha_\nu}{t} \frac{1}{2} (t^2 - s^2)
    -2 (t - s) \alpha_\nu
    \Big)
    ds
    \\
    &=
    \frac{1}{t} \int_{0}^{t}
    \exp \Big(
    -\frac{\alpha_{\nu} - \alpha_\pi}{t} s^2
    + 
    2 \alpha_\nu s
    -
    t (\alpha_\nu + \alpha_{\pi})
    \Big)
    ds
    \enspace .
\end{align*}
Recall that for generic $a, b, c > 0$, we have
$$
\int \exp(-ax^2 + bx - c) dx = \frac{\sqrt{\pi}}{2 \sqrt{a}} \exp(b^2 / 4a - c) \erf((2ax - b) / (2 \sqrt{a})) + \mathrm{constant}.$$
 We may apply this formula since $\alpha_\pi < \alpha_\nu$.
 it yields the equality claim:
\begin{align*}
    G_t(\lambda)
    &=
    \frac{1}{\sqrt{t}}
    \frac{ \sqrt{\pi}}{2 \sqrt{\alpha_\nu - \alpha_\pi}} 
    \exp \Big(
    \frac{t \alpha_\pi^2}{\alpha_\nu - \alpha_\pi}
    \Big)
    \Big(
    \erf \Big( 
    \frac{\sqrt{t} \alpha_\nu}{\sqrt{\alpha_\nu - \alpha_\pi}}
    \Big)
    -
    \erf \Big( 
    \frac{\sqrt{t} \alpha_\pi}{\sqrt{\alpha_\nu - \alpha_\pi}}
    \Big)
    \Big)
    \enspace .
\end{align*}
Now, let us consider the
behavior of $G_t(\lambda)$ as $t\to \infty$.
Here, we Taylor expand to obtain
\begin{align*}
    &\erf \Big( 
    \frac{\sqrt{t} \alpha_\nu}{\sqrt{\alpha_\nu - \alpha_\pi}}
    \Big)
    -
    \erf \Big( 
    \frac{\sqrt{t} \alpha_\pi}{\sqrt{\alpha_\nu - \alpha_\pi}}
    \Big)
    \\
    &=
    \sqrt{\frac{\alpha_\nu - \alpha_\pi}{t \pi}}
    \Big(
    \frac{1}{\alpha_\pi}
    \exp \Big( 
    -\frac{t \alpha_\pi^2}{\alpha_\nu - \alpha_\pi}
    \Big)
    -
    \frac{1}{\alpha_\nu}
    \exp \Big( 
    -\frac{t \alpha_\nu^2}{\alpha_\nu - \alpha_\pi}
    \Big)
    \Big)
    \\
    &+
    O \Big( \frac{1}{t^{3/2}} \Big)
    \Big(
    \exp \Big( 
    -\frac{t \alpha_\nu^2}{\alpha_\nu - \alpha_\pi}
    \Big)
    -
    \exp \Big( 
    -\frac{t \alpha_\pi^2}{\alpha_\nu - \alpha_\pi}
    \Big)
    \Big)
    \enspace .
\end{align*}
Therefore
\begin{align*}
    G_t(\lambda)
    &=
    \frac{1}{2 t \alpha_\pi}
    - 
    \frac{1}{2 t \alpha_\nu} 
    \exp \Big(
    \frac{-t \alpha_\nu^2 - \alpha_\pi^2}{\alpha_\nu - \alpha_\pi}
    \Big)
    +
    O(\frac{1}{t}
    \exp \Big(
    \frac{-t \alpha_\nu^2 - \alpha_\pi^2}{\alpha_\nu - \alpha_\pi}
    \Big))
    \enspace .
\end{align*}
This yields the claimed asymptotic result.

 These results are numerically validated in Figure \ref{fig:linear_schedule_validation}.

\begin{figure}
    \centering
    \includegraphics[width=0.7\textwidth]{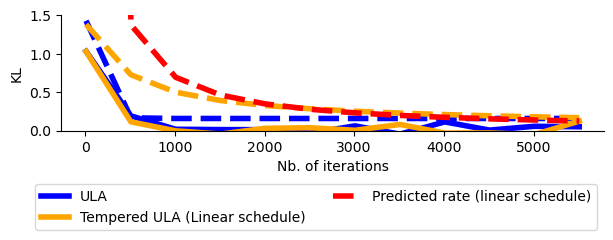}
  \caption{Numerical validation of the rate of convergence predicted by Proposition~\ref{prop:linear_tempering}. Dashed lines are our prediction from Proposition~\ref{prop:linear_tempering} and full lines are from simulations of the process using 10 000 particles. The proposal and target are two-dimensional Gaussians, with zero mean and covariance matrices that have a constant diagonal, equal to one for the proposal and 10 for the target. As expected, the predicted rate from Proposition~\ref{prop:linear_tempering} (in dashed red) matches the approximated upper bound from Theorem~\ref{th:annealed_langevin_discrete_time} (in dashed yellow) as well as particle-based simulations (full lines), for large values of time. 
  }
    \label{fig:linear_schedule_validation}
\end{figure}

\section{Lower bounds}
\label{app:sec:exponential_lower_bounds}

In this section, we give the proofs of
Theorems~\ref{th:exponentially_worse_poincare},~\ref{th:bimodal_lower_bound}
and~\ref{th:unimodal_lower_bound}.
In section~\ref{app:subsec:general_lower_bound}
we prove a general lower bound in total variation for
the setup we consider in Theorems~\ref{th:bimodal_lower_bound}
and~\ref{th:unimodal_lower_bound}.
We apply this result to the bimodal example in section~\ref{app:subsec:bimodal_lower_bounds}.
We then study the unimodal
examples from Theorems~\ref{th:exponentially_worse_poincare} and~\ref{th:unimodal_lower_bound}
in section~\ref{app:subsec:unimodal_lower_bounds}.
The bound on the log-Sobolev constant for the uni-modal example
is deferred to section~\ref{subsec:upper_bounds_log_sobolev_of_unimodal}.
Throughout, we use the notation introduced in section~\ref{sec:lower_bounds}.

\subsection{A general lower bound}
\label{app:subsec:general_lower_bound}

In this section we
state and prove the following general lower
bound for the geometric path; in the following
sections it is applied to obtain Theorems~\ref{th:bimodal_lower_bound}
and~\ref{th:unimodal_lower_bound}.

\begin{theorem}[Total variation
lower bounds for the geometric path]
\label{th:general_lower_bound}
Suppose $\ppsal, \tg \in \mathcal{P}_{\rm ac}(\R)$
have log-Lipschitz densities and there is an interval $I = [a, b] \subset \R$
and $\lambda_\star \in [0, 1]$, $\delta, B > 0$ such that:
\begin{enumerate}
    \item[1.] Bounded scores on $I$: both $|V_\nu'(x)|, |V_\pi'(x)| \leqslant B$ for $x \in I$.
    \item[2.] Small mass on $I$: for all
    $\lambda \in [0, \lambda_\star],$ we have $\mu_{\lambda}(I) \leqslant \delta$.
    %\item[3.] Comparable densities: $\ppsal(x) \leqslant B_1 \tg(x)$.
\end{enumerate}
Then for each $k \in [K]$ such that $\lambda_k \leqslant \lambda_\star$,
$$
 \mathrm{TV}(p^k_{T_k}, \tg)
    \geqslant 
\tg([b, \infty)) - \tg(I) - \ppsal([b, \infty)) - \delta 
- \frac{B}{b - a} \cdot \sqrt{\delta} \cdot \Big(\sum_{i = 1}^k T_i(\chi^2(p_0^i,
\mu_i) + 1)^{1/2}\Big).
$$
\end{theorem}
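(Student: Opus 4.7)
The plan is to reduce the TV lower bound to controlling the mass $p^k_{T_k}([b, \infty))$, exploiting that $\mu_\lambda(I) \leq \delta$ for $\lambda \leq \lambda_\star$ so the tempered Langevin dynamics cannot transport substantial mass through the bottleneck $I$. Taking $A = [b, \infty)$ in the TV formula gives $\mathrm{TV}(p^k_{T_k}, \tg) \geq \tg([b, \infty)) - p^k_{T_k}([b, \infty))$, so the goal becomes upper-bounding the second term.

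I would introduce a piecewise linear cutoff $\phi\colon \R \to [0, 1]$ that vanishes on $(-\infty, a]$, equals $(x - a)/(b - a)$ on $I$, and equals $1$ on $[b, \infty)$. Since $\phi \geq \mathbf{1}_{[b, \infty)}$, $p^k_{T_k}([b, \infty)) \leq \langle \phi, p^k_{T_k}\rangle$; and since $\phi \leq \mathbf{1}_{[a, \infty)}$, $\langle \phi, \ppsal\rangle \leq \ppsal([b, \infty)) + \ppsal(I) \leq \ppsal([b, \infty)) + \delta$ by assumption~2 at $\lambda = 0$. Telescoping with $p^0_{T_0} = \ppsal$ and $p^i_0 = p^{i-1}_{T_{i-1}}$ gives
\[
\langle \phi, p^k_{T_k}\rangle = \langle \phi, \ppsal\rangle + \sum_{i=1}^k \int_0^{T_i} \partial_s \langle \phi, p^i_s\rangle\,ds.
\]

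The core step is a per-iteration flux estimate. Writing Langevin toward $\mu_i$ in the gradient-flow form $\partial_s p^i_s = \partial_x(\mu_i \partial_x r^i_s)$ with $r^i_s := p^i_s/\mu_i$ and integrating by parts (boundary terms vanish since $\phi'$ is supported in $I$), one has $\partial_s \langle \phi, p^i_s\rangle = -\int \phi' \mu_i \partial_x r^i_s\,dx$. Using the identity $\mu_i \partial_x r^i_s = \partial_x p^i_s + p^i_s V_i'$ with $|V_i'| \leq B$ on $I$ (assumption~1), and carefully handling the $\partial_x p^i_s$ contribution, I would arrive at the pointwise bound $|\partial_s \langle \phi, p^i_s\rangle| \leq \frac{B}{b - a} p^i_s(I)$. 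To handle $p^i_s(I)$ uniformly in $s$, Cauchy--Schwarz in $L^2(\mu_i)$ gives $p^i_s(I) = \int_I \mu_i r^i_s\,dx \leq \sqrt{\mu_i(I)(\chi^2(p^i_s, \mu_i) + 1)}$; the standard monotonicity $\chi^2(p^i_s, \mu_i) \leq \chi^2(p^i_0, \mu_i)$ along Langevin toward $\mu_i$ combined with $\mu_i(I) \leq \delta$ from assumption~2 (since $\lambda_i \leq \lambda_k \leq \lambda_\star$) then yields $p^i_s(I) \leq \sqrt{\delta(\chi^2(p^i_0, \mu_i) + 1)}$, so
\[
\int_0^{T_i} |\partial_s \langle \phi, p^i_s\rangle|\,ds \leq \frac{B T_i \sqrt{\delta}}{b - a}\bigl(\chi^2(p^i_0, \mu_i) + 1\bigr)^{1/2}.
\]

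Summing over $i$ bounds $\langle \phi, p^k_{T_k}\rangle$ and hence $p^k_{T_k}([b, \infty))$; the small $-\tg(I)$ in the statement arises as a minor slack in the TV step, e.g.\ via the dual form $\mathrm{TV}(p^k_{T_k}, \tg) \geq \langle \phi, \tg\rangle - \langle \phi, p^k_{T_k}\rangle$ with the (trivially valid but loose) lower bound $\langle \phi, \tg\rangle \geq \tg([b, \infty)) - \tg(I)$. The hard part will be the per-step flux estimate: a naive integration by parts on $\int \phi' \partial_x p^i_s\,dx$ produces uncontrolled pointwise values $p^i_s(a)$, $p^i_s(b)$, so the clean derivation will rely on the gradient-flow rewrite $\mu_i \partial_x r^i_s$ combined with the $L^2(\mu_i)$ structure and the monotonicity of $\chi^2$ along the Langevin semigroup to convert the small-mass hypothesis on $\mu_i(I)$ into a small-mass bound on $p^i_s(I)$.
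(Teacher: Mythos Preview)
Your plan coincides with the paper's: the same piecewise-linear cutoff (the paper calls it $\psi$), the same telescoping over the tempering stages, and the same Cauchy--Schwarz in $L^2(\mu_i)$ paired with $\chi^2$-monotonicity along Langevin to convert $\mu_i(I)\le\delta$ into $p^i_s(I)\le\sqrt{\delta}\,(\chi^2(p^i_0,\mu_i)+1)^{1/2}$. Your TV reduction via $\mathrm{TV}\ge\pi([b,\infty))-p^k_{T_k}([b,\infty))\ge\pi([b,\infty))-\langle\phi,p^k_{T_k}\rangle$ is in fact slightly tighter than the paper's, which instead bounds $\pi(\phi)-p^k_{T_k}(\phi)\le\mathrm{TV}(p^k_{T_k},\pi)+\pi(I)$ and thereby picks up the extra $-\pi(I)$ in the statement.

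The one place you differ from the paper is exactly where you hesitate. Rather than decomposing $\mu_i\,\partial_x r^i_s=\partial_x p^i_s+p^i_s V_i'$ and confronting the boundary values $p^i_s(a),p^i_s(b)$, the paper moves the generator onto the test function by self-adjointness in $L^2(\mu_i)$: writing the density ratio $r^i_s=p^i_s/\mu_i$, which solves $\partial_s r^i_s=\mathcal{L}_{\mu_i}r^i_s$ with $\mathcal{L}_{\mu_i}f=f''-V_{\lambda_i}'f'$, one has $\partial_s\langle\phi,r^i_s\rangle_{L^2(\mu_i)}=\langle\mathcal{L}_{\mu_i}\phi,r^i_s\rangle_{L^2(\mu_i)}$. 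Since $\phi$ is linear on $I$ and constant elsewhere the paper takes $\phi''=0$, so $\mathcal{L}_{\mu_i}\phi=-V_{\lambda_i}'\phi'$ is supported on $I$ and bounded there by $B/(b-a)$, giving $\partial_s\langle\phi,p^i_s\rangle\le\tfrac{B}{b-a}\,p^i_s(I)$ directly. This ``put $\mathcal{L}$ on $\phi$ and use $\phi''=0$'' maneuver is the device you are reaching for in your last paragraph and is the only substantive difference between your sketch and the paper's argument.
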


This result will be typically be 
applied in conjuction with the following
lemma, which allows us to control the $\chi^2$ terms arising
above when we have pointwise comparisons
of one distribution to another.
\begin{lemma}\label{lem:chi2_recursive} Suppose that there exists
a $C >0$ such that in the support
of $\tg$, $\ppsal(x) \leqslant C\tg(x)$. Then
$$
\chi^2(p_0^k, \mu_k) \leqslant C^{\lambda_k} - 1.
$$
\end{lemma}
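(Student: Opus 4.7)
The plan is to strengthen the inductive statement by carrying along the normalizing constant $c_{\lambda_k}$ appearing in the representation $\mu_k = c_{\lambda_k}\ppsal^{1-\lambda_k}\tg^{\lambda_k}$. Concretely, I would prove by induction on $k$ the sharper inequality
\[
\int \frac{(p_0^k)^2}{\mu_k}\, dx \;\le\; \frac{C^{\lambda_k}}{c_{\lambda_k}}.
\]
The stated lemma follows from H\"older's inequality, which gives $\int \ppsal^{1-\lambda_k}\tg^{\lambda_k} \le (\int \ppsal)^{1-\lambda_k}(\int \tg)^{\lambda_k} = 1$, hence $c_{\lambda_k}\ge 1$ and therefore $C^{\lambda_k}/c_{\lambda_k} - 1 \le C^{\lambda_k}-1$.

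For the base case $k=1$, one has $p_0^1 = \ppsal$ by construction, so the left-hand side equals $c_{\lambda_1}^{-1}\int \ppsal\cdot(\ppsal/\tg)^{\lambda_1}\,dx$, and the pointwise hypothesis $\ppsal \le C\tg$ on $\mathrm{supp}(\tg)$ immediately yields the bound $C^{\lambda_1}/c_{\lambda_1}$.

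For the inductive step, I would combine two ingredients. First, during stage $k$ the law $p_t^k$ evolves by Langevin toward $\mu_k$, along which $\chi^2(\cdot,\mu_k)$ is non-increasing (a standard Fokker--Planck plus integration-by-parts computation gives $\tfrac{d}{dt}\int(p_t^k)^2/\mu_k = -2\int |\nabla(p_t^k/\mu_k)|^2\mu_k \le 0$), so $\int(p_{T_k}^k)^2/\mu_k \le \int(p_0^k)^2/\mu_k \le C^{\lambda_k}/c_{\lambda_k}$ by the induction hypothesis. Second, since $p_0^{k+1} = p_{T_k}^k$, I transfer from $\mu_k$ to $\mu_{k+1}$ using the pointwise identity
\[
\frac{\mu_k(x)}{\mu_{k+1}(x)} \;=\; \frac{c_{\lambda_k}}{c_{\lambda_{k+1}}}\left(\frac{\ppsal(x)}{\tg(x)}\right)^{\lambda_{k+1}-\lambda_k} \;\le\; \frac{c_{\lambda_k}}{c_{\lambda_{k+1}}}\,C^{\lambda_{k+1}-\lambda_k}
\]
on $\mathrm{supp}(\mu_{k+1})\subseteq \mathrm{supp}(\tg)$. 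Multiplying, the $c_{\lambda_k}$ factors telescope and the exponents of $C$ combine to $\lambda_{k+1}$, giving $\int(p_0^{k+1})^2/\mu_{k+1}\le C^{\lambda_{k+1}}/c_{\lambda_{k+1}}$, which closes the induction.

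The main obstacle is recognizing that the naive inductive statement $\chi^2(p_0^k,\mu_k)\le C^{\lambda_k}-1$ does \emph{not} close. As a function of $\lambda$, the normalizing constants $c_\lambda$ are log-concave on $[0,1]$ with $c_0 = c_1 = 1$, hence unimodal with maximum in the interior and not monotone; consequently $c_{\lambda_k}/c_{\lambda_{k+1}}$ can exceed $1$, which would break the transition step in a naive induction. Tracking the extra factor $1/c_{\lambda_k}$ inside the inductive statement lets these awkward ratios cancel exactly, and this factor is then discarded harmlessly at the very end using $c_{\lambda_k}\ge 1$.
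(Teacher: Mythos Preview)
Your proposal is correct and follows essentially the same approach as the paper's proof: both track the normalizing constants $c_{\lambda_k}$ through the recursion, use the pointwise bound $\ppsal \le C\tg$ to control the ratio $\mu_{k-1}/\mu_k \le (c_{\lambda_{k-1}}/c_{\lambda_k})\,C^{\lambda_k-\lambda_{k-1}}$, invoke the monotonicity of $\chi^2(\cdot,\mu_{k-1})$ under Langevin towards $\mu_{k-1}$, telescope to obtain $\int (p_0^k)^2/\mu_k \le C^{\lambda_k}/c_{\lambda_k}$, and finally discard the $c_{\lambda_k}$ via $c_{\lambda_k}\ge 1$. The only difference is cosmetic: you phrase it as an explicit induction on the strengthened inequality, while the paper unrolls the recursion directly; your remark that the naive inductive hypothesis fails to close is a nice piece of commentary that the paper leaves implicit.
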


\begin{proof}[Proof of Lemma~\ref{lem:chi2_recursive}]
Take $k \in [K]$. Write
$$
\chi^2(p_0^k, \mu_k)
= \int \frac{(p_0^k)^2}{\mu_k} - 1 
=  \int \frac{(p_0^k)^2}{\mu_{k - 1}}
\cdot \frac{\mu_{k - 1}}{\mu_k} - 1.
$$
Now, letting $c_i := c_{\lambda_i}$ for each $i \in \{0, \ldots, k\}$, we have
$$
\frac{\mu_{k - 1}}{\mu_k}
= \frac{c_{k - 1}}{c_k}
\cdot \ppsal^{\lambda_{k} - \lambda_{k - 1}}
\tg^{\lambda_{k - 1} - \lambda_k} 
\leqslant 
C^{\lambda_k - \lambda_{k - 1}}
\cdot \frac{c_{k - 1}}{c_k}.
$$
Hence
\begin{align*}
\chi^2(p_0^k, \mu_k)
&\leqslant 
C^{\lambda_k - \lambda_{k - 1}}
\cdot \frac{c_{k - 1}}{c_k}
\int \frac{(p_0^k)^2}{\mu_{k - 1}} - 1 \\
&=
C^{\lambda_k - \lambda_{k - 1}}
\cdot 
\frac{c_{k - 1}}{c_k}
\int \frac{(p_{T_{k - 1}}^{k - 1})^2}{\mu_{k - 1}} - 1.
\end{align*}
But since $\chi^2(\cdot, \mu_{k - 1})$
is non-increasing under Langevin dynamics towards $\mu_{k - 1}$,
we find that
$$
\chi^2(p_0^k, \mu_k)
\leqslant
C^{\lambda_k - \lambda_{k - 1}}
\cdot 
\frac{c_{k - 1}}{c_k}
\int \frac{(p_{0}^{k - 1})^2}{\mu_{k - 1}} - 1
$$
Proceeding recursively, we obtain
\begin{align*}
\chi^2(p_0^k, \mu_k)
&\leqslant C^{\lambda_k}\cdot \frac{c_0}{c_k}
\int \frac{(p_{0}^{0})^2}{\mu_0} - 1\\
&= C^{\lambda_k}\cdot \frac{c_{0}}{c_k} - 1
= C^{\lambda_k}\cdot \frac{1}{c_k}
- 1\leqslant C^{\lambda_k} - 1,
\end{align*}
where at the last step we use the fact
that for all $\lambda \in [0,1]$, Jensen's inequality implies that
$$
\frac{1}{c_{\lambda}} = 
\int \ppsal^{1 - \lambda}(x) \tg^{\lambda}(x)
\ud x 
\leqslant
1.
$$
    
\end{proof}

\begin{proof}
For convenience, let us put
$$
V_\lambda(x) := -\log \mu_{\lambda}(x), \quad \quad \lambda \in [0, 1].
$$

Define
$$
\psi(x) := \begin{cases}
0 & x < a\\
\frac{1}{b - a}(x - a) & x \in I\\
1 & x > b.
\end{cases}
$$
We write
$$
    \tg(\psi) - p_{T_k}^k(\psi) = 
    \tg(\psi) - \ppsal(\psi)
    + \sum_{i = 1}^i p_{T_{i - 1}}^{i - 1}(\psi)
    - p_{T_i}^{i}(\psi).
$$ 
Observe
$$
\tg(\psi) \geqslant \tg([b, \infty)).
$$ On the other hand,
$$
\ppsal(\psi) \leqslant \ppsal([a, \infty)) = 
\ppsal(I)  + \ppsal([b, \infty)) \leqslant \delta + \ppsal([b, \infty)).
$$ We thus find
$$
    \tg(\psi) - p_{T_k}^k(\psi) \geqslant 
\tg([b, \infty)) - \ppsal([b, \infty)) - \delta 
+ \sum_{i = 1}^k p_{T_{i - 1}}^{i - 1}(\psi)
    - p_{T_i}^{i}(\psi).
$$
To reduce to a lower bound in total variation,
we note that
$$
\tg(\psi) - p^k_{T_k}(\psi) \leqslant 
\tg([a, \infty)) - p^k_{T_k}([b, \infty))
\leqslant \mathrm{TV}(p^k_{T_k}, \tg) + \tg(I)
$$
We finally obtain the lower bound
\begin{equation}\label{eqn:general_lower_main}
    \mathrm{TV}(p^k_{T_k}, \tg)
    \geqslant 
\tg([b, \infty)) - \tg(I) - \ppsal([b, \infty)) - \delta 
+ \sum_{i = 1}^k p_{T_{i - 1}}^{i - 1}(\psi)
    - p_{T_i}^{i}(\psi).
\end{equation}

To conclude from~\eqref{eqn:general_lower_main}, we
study the above summands. Notice
that by definition, $p^{i - 1}_{T_{i - 1}}(\psi) = 
p^i_0(\psi)$, so that
these terms are the difference of the expectation
of $\psi$ at the start and end of Langevin dynamics
run towards $\mu_i.$
Hence,
$$
 p_{T_{k - 1}}^{k - 1}(\psi)
    - p_{T_k}^{k}(\psi) = - \int_0^{T_k}
    \partial_t \langle \psi , p_{t}^k \rangle_{L^2(\mu_k)} \ud t.
$$ Now, $p_t^k$ evolves by the
weighted Fokker-Planck operator
$$
\partial_t p_t^k
= \mathcal{L}_{\mu_k}p_t^k := \Delta p_t^k - \langle 
\nabla V_{\lambda_k}, \nabla p_t^k \rangle,
$$
so by self-adjointness of $\mathcal{L}_{\mu_k}$,
we obtain
\begin{align*}
\int_0^{T_k}
    \partial_t \langle \psi , p_{t}^k \rangle_{L^2(\mu_k)} \ud t
    &= 
\int_0^{T_k} \langle \psi , \mathcal{L}_{\mu_k}
p_{t}^k
    \rangle_{L^2(\mu_k)} \ud t\\
    &= \int_0^{T_k} \langle \mathcal{L}_{\mu_k}\psi,
p_{t}^k
    \rangle_{L^2(\mu_k)} \ud t.
\end{align*}
Notice that since $\psi$ is linear where it is
non-constant, we have
$$
\mathcal{L}_{\mu_k}\psi(x)= - \langle \nabla V_{\lambda_k}(x),
\nabla \psi(x) \rangle = 
\begin{cases} -\frac{1}{b - a} \partial_x V_{\lambda_k}(x)
& x \in I \\
0 & x \not \in I.
\end{cases}
 $$ Hence
 \begin{align*}
  \langle \mathcal{L}_{\mu_k}\psi ,
p_{t}^k
    \rangle_{L^2(\mu_k)}
    &=\frac{-1}{b - a} \int_a^b
\partial_x V_{\lambda_k}(x)p_t^k(x) \ud \mu_k(x) \\
    &\leqslant \frac{B}{b - a} \int_a^b p_t^k(x) \ud \mu_k(x).
\end{align*}
Now, we use Cauchy-Schwarz to observe that
\begin{align*}
\int_a^b p_t^k(x) \ud \mu_k(x)
&\leqslant \mu_k(I)^{1/2}
\Big(\int_a^b p_t^k(x)^2 \ud \mu_k(x) \Big)^{1/2}\\
&\leqslant \mu_k(I)^{1/2}
(\chi^2(p_t^k, \mu_k) + 1)^{1/2} \\
&\leqslant \sqrt{\delta} (\chi^2(p_t^k, \mu_k) + 1)^{1/2}\\
&\leqslant \sqrt{\delta} (\chi^2(p_0^k, \mu_k) + 1)^{1/2},\\
\end{align*}
where the last step follows because
$\chi^2(\cdot, \mu_k)$ is non-increasing
along Langevin dynamics towards $\mu_k$.
Hence
$$
 p_{T_{k - 1}}^{k - 1}(\psi)
    - p_{T_k}^{k}(\psi) = - 
\int_0^{T_k} \langle \mathcal{L}_{\mu_k}\psi ,
p_{t}^k
    \rangle_{L^2(\mu_k)} \geqslant 
    -\frac{BT_k}{b - a} 
    \sqrt{\delta} (\chi^2(p_0^k, \mu_k) + 1)^{1/2}.
$$
Plugging this inequality into~\eqref{eqn:general_lower_main}
yields the result.
\end{proof}

\subsection{Lower bounds for bimodal target}
\label{app:subsec:bimodal_lower_bounds}

In this section, we apply
the general lower bound
from Theorem~\ref{th:general_lower_bound}
to prove Theorem~\ref{th:bimodal_lower_bound},
and accordingly throughout take
$\ppsal$ and $\tg$
to be as defined there.
We begin by collecting some useful facts about the
geometric path between $\ppsal$ and $\tg$;
the proof is included
after that of Theorem~\ref{th:bimodal_lower_bound}.
\begin{proposition}[Useful facts about the
bi-modal example.]
\label{prop:bimodal_facts}
Let $\ppsal, \tg$ be as in Theorem~\ref{th:bimodal_lower_bound} for
$m \geqslant 11$.
Then
\begin{enumerate}
    \item[1.] for all $\lambda \in [0,1]$,
    the normalizing constant $c_\lambda \in [1,2]$.
    \item[2.] for each $\lambda \in [0,1]$,
    $$
        \mu_{\lambda}([m/4, 3m/4]) \leqslant 6e^{-m^2/32}.
    $$
    \item[3.] for all $k \in [K]$
    and $t\in [0, T_k]$,
    $$
    \chi^2(p_t^k, \mu_k)
    \leqslant 1.
    $$
\end{enumerate}
\end{proposition}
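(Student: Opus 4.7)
The three claims are largely independent, so I would attack them one at a time. Claim~1 follows from the observation that $\tg = \tfrac{1}{2}\ppsal + \tfrac{1}{2}\mathcal{N}(m,1)$ pointwise, and hence $\tg \geq \ppsal/2$ everywhere: Jensen's inequality applied to $1/c_\lambda = \int \ppsal^{1-\lambda}\tg^\lambda \leq 1$ gives $c_\lambda \geq 1$, while the pointwise lower bound gives $1/c_\lambda \geq 2^{-\lambda} \int \ppsal = 2^{-\lambda} \geq 1/2$, so $c_\lambda \leq 2$. For Claim~3, the same pointwise bound gives $\ppsal \leq 2\tg$, so Lemma~\ref{lem:chi2_recursive} applied with $C = 2$ yields $\chi^2(p_0^k, \mu_k) \leq 2^{\lambda_k} - 1 \leq 1$; this propagates to $p_t^k$ since $\chi^2(\cdot, \mu_k)$ is non-increasing along Langevin dynamics towards $\mu_k$.

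Claim~2 is the main technical obstacle. A naive pointwise bound of the form $\ppsal^{1-\lambda}\tg^\lambda(x) \leq \frac{1}{\sqrt{2\pi}} e^{-m^2/32}$ on $[m/4, 3m/4]$ multiplied by the interval length $m/2$ produces a spurious polynomial factor of $m$, which for large $m$ exceeds the constant $6$ allowed in the statement. To kill this factor I would use subadditivity $(a+b)^\lambda \leq a^\lambda + b^\lambda$ for $\lambda \in [0,1]$ to write $\tg^\lambda \leq 2^{-\lambda}(\ppsal^\lambda + \mathcal{N}(m,1)^\lambda)$, reducing the integral to $\int_{m/4}^{3m/4} \ppsal$ (directly controlled by Mills' ratio for the Gaussian tail) plus $\int_{m/4}^{3m/4}\ppsal^{1-\lambda}\mathcal{N}(m,1)^\lambda$. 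Completing the square identifies this second integrand as a Gaussian centered at $\lambda m$ multiplied by the scalar $e^{-\lambda(1-\lambda)m^2/2}$.

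I would then handle this Gaussian-tail integral by a case split on $\lambda$. For $\lambda \in [1/8, 7/8]$ the scalar factor satisfies $e^{-\lambda(1-\lambda)m^2/2}\leq e^{-7m^2/128}$, which for $m \geq 11$ is already much smaller than $e^{-m^2/32}$, while the Gaussian integral over any interval contributes at most $\sqrt{2\pi}$; for $\lambda \in [0, 1/8]$ the Gaussian sits at $\lambda m \leq m/8$, entirely to the left of $[m/4, 3m/4]$, so Mills' inequality on the right tail combined with the scalar factor gives a total exponent of $-m^2(1 + 8\lambda)/32 \leq -m^2/32$ with an absolute prefactor (since $1/4 - \lambda \geq 1/8$); and the range $\lambda \in [7/8, 1]$ follows from the symmetry $\int_{m/4}^{3m/4}\ppsal^{1-\lambda}\mathcal{N}(m,1)^\lambda = \int_{m/4}^{3m/4}\ppsal^\lambda \mathcal{N}(m,1)^{1-\lambda}$ obtained via the change of variables $y = m - x$. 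Combining these uniform-in-$\lambda$ bounds with $c_\lambda \leq 2$ from Claim~1 keeps the total constant below the claimed~$6$.
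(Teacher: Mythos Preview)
Your arguments for Claims~1 and~3 match the paper's essentially verbatim (the paper says ``Cauchy--Schwarz'' where you say ``Jensen'' for $1/c_\lambda \le 1$, but both justifications are valid).

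For Claim~2, your route is correct but considerably more laborious than the paper's. The paper observes that by H\"older's inequality,
\[
\int_{m/4}^{3m/4} \ppsal^{1-\lambda}\tg^\lambda \;\le\; \ppsal([m/4,3m/4])^{1-\lambda}\,\tg([m/4,3m/4])^{\lambda},
\]
and then uses the exact symmetry of this particular example: since $\tg = \tfrac12\mathcal{N}(0,1)+\tfrac12\mathcal{N}(m,1)$ and the interval $[m/4,3m/4]$ is symmetric about $m/2$, one has $\tg([m/4,3m/4]) = \ppsal([m/4,3m/4])$. The right-hand side then collapses to $\ppsal([m/4,3m/4])$ independently of $\lambda$, and a single Gaussian tail bound finishes. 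No subadditivity, no completing the square, no case split on $\lambda$ is needed. Your approach---splitting $\tg^\lambda$ via $(a+b)^\lambda \le a^\lambda+b^\lambda$, identifying the second piece as a shifted Gaussian times $e^{-\lambda(1-\lambda)m^2/2}$, and then treating three ranges of $\lambda$ separately---does work and the constants can be pushed through, but it is doing by hand what the H\"older-plus-symmetry trick accomplishes in one line. The tradeoff is that your argument would survive unequal mixture weights (where $\tg(I)\neq\ppsal(I)$), whereas the paper's shortcut is specific to the symmetric $\tfrac12$--$\tfrac12$ mixture.
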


\begin{proof}[Proof of Theorem~\ref{th:bimodal_lower_bound}]
We will apply Theorem~\ref{th:general_lower_bound}
with $I = [m/4, 3m/4]$. An easy
calculation shows that we may take $B = 2m$ for the first
assumption,
and Proposition~\ref{prop:bimodal_facts}
shows that $\lambda_0 = 1$ and $\delta = 6e^{-m^2/32}$ suffice
for the second assumption.
We thus obtain
$$
 \mathrm{TV}(p^k_{T_k}, \tg)
    \geqslant 
\tg([3m/4, \infty)) - \ppsal([3m/4, \infty)) - 12e^{-m^2/32}
- 4\sqrt{6}\Big(\sum_{i = 1}^k T_i(\chi^2(p_0^i,
\mu_i) + 1)^{1/2}\Big) e^{-m^2/64}.
$$
Applying Proposition~\ref{prop:bimodal_facts} to control
the $\chi^2$ terms yields
$$
 \mathrm{TV}(p^k_{T_k}, \tg)
    \geqslant 
\tg([3m/4, \infty)) - \ppsal([3m/4, \infty)) - 12e^{-m^2/32}
- 16\Big(\sum_{i = 1}^k T_i\Big) e^{-m^2/32}
$$
Note that by Mill's ratio, we can bound using
$m \geqslant 4$ that
$$
\ppsal([3m/4, \infty)) \leqslant e^{-9m^2/32} \leqslant e^{-m^2/32}.
$$ On the other hand,
\begin{align*}
\tg([3m/4, \infty)) &= \tg([m/2, \infty))
- \tg([m/2, 3m/4]) \\ 
&\geqslant 
\tg([m/2, \infty))
- \tg([m/4, 3m/4]) \\
&= \frac{1}{2} - \tg([m/4, 3m/4]) \\
&\geqslant \frac{1}{2} - 6e^{-m^2/32},
\end{align*} where at the last step we applied Proposition~\ref{prop:bimodal_facts}
one more time.
Plugging in terms yields
$$
\mathrm{TV}(p^k_{T_k}, \tg)
\geqslant \frac{1}{2} - 19 e^{-m^2/32}
- 16 \Big(\sum_{i = 1}^k T_i \Big) e^{-m^2/32}.
$$
Now, we use our assumption that $m\geqslant 11$
to observe that
$$
\frac{1}{2} - 19e^{-m^2/32} \geqslant \frac{1}{20},
$$ and finally conclude.
\end{proof}

\begin{proof}[Proof of Proposition~\ref{prop:bimodal_facts}]
{\it Proof of Fact 1.}
Note that Cauchy-Schwarz immediately implies
that for any $\lambda \in [0,1]$,
$$
\frac{1}{c_\lambda} = \int \ppsal^{1 - \lambda} \tg^\lambda
\leqslant 1.
$$ On the other hand, using the fact that $2 \tg \geqslant \ppsal$,
we have that
$$
\frac{1}{c_\lambda} = \int \ppsal^{1 - \lambda} \tg^\lambda
= \int \big(\frac{\tg}{\ppsal} \big)^\lambda \ppsal
\geqslant 2^{-\lambda} \geqslant \frac{1}{2}.
$$ The claim about the normalizing
constant follows.

{\it Proof of Fact 2.}
We use the previous part
to observe
\begin{align*}
\mu_{\lambda}([m/4, 3m/4])
&\leqslant 2 \int_{m/4}^{3m/4}
\ppsal^{1 - \lambda}(x) \tg^{\lambda}(x) \ud x \\
&\leqslant 2 \ppsal([m/4, 3m/4])^{1 - \lambda}
\tg([m/4, 3m/4])^{\lambda} \\
&= 2\ppsal([m/4, 3m/4]),
\end{align*}
where the last equality follows by symmetry.
We then observe that by Mills' ratio,
$$
\ppsal([m/4, 3m/4]) \leqslant\ppsal([m/4, \infty))
\leqslant 3e^{-m^2/32}.
$$ The result follows.

{\it Proof of Fact 3.} The result follows immediately
from Lemma~\ref{lem:chi2_recursive} once
we observe that $\ppsal \leqslant 2 \tg$.
\end{proof}

\subsection{Lower bounds for unimodal target}
\label{app:subsec:unimodal_lower_bounds}

In this section, we prove Theorem~\ref{th:exponentially_worse_poincare}
as well as Theorem~\ref{th:unimodal_lower_bound}.
Since these results concern the same 
distributions with different mixture weights,
in this section we generalize slightly to consider,
for some $a \in [\sqrt{2 \log 2}/m, 1]$, the target
\begin{equation}\label{eqn:general_mixture_target}
 \tg
= (1 - e^{-a^2m^2/2})\mathcal{N}(m, 1)
+ e^{-a^2m^2/2}u_m,
\end{equation}
By taking $a = 1/\sqrt{2}$, we
recover the setting of Theorem~\ref{th:exponentially_worse_poincare},
and by taking $a =\sqrt{2\log 2}/m$
we recover the setting of Theorem~\ref{th:unimodal_lower_bound}.
Note that when dealing with various
numerical constants, we will
frequently use the assumption $m\geqslant 10$
without comment.
For both results, the following facts
will be useful.

\begin{proposition}[Useful facts about the
unimodal target]
\label{prop:mass_in_the_middle}
Let $\ppsal := \mathcal{N}(0, 1)$
and $\tg$ be as in~\eqref{eqn:general_mixture_target}
for $m\geqslant 10$.
Then
\begin{enumerate}
\item[1.] For all $\lambda \in [0,1]$,
$$
\mu_{\lambda}([m(1 - a)/2, m(1 - a)])
 \leqslant  5a m^2 c_{\lambda} e^{-\lambda a^2 m^2/2 -(1 - \lambda)(1 - a)^2 m^2/8}.
$$
\item[2.] If $1 \geqslant a + 2/m$,
then for all $\lambda \in [0,1]$,
$$ 
\mu_{\lambda}((-\infty, m(1 - a)/2]) \geqslant \frac{c_{\lambda} e^{-\lambda a^2 m^2/2}}{10m}.
$$
\item[3.] For all $\lambda$ such that 
$1 \leqslant a + 2\lambda - 2/m$, we have
$$
\mu_{\lambda}([m(1 - a)/2, \infty)) 
\geqslant \frac{c_{\lambda}}{2} e^{-\frac12 \lambda(1 - \lambda)m^2}.
$$
\item[4.] For all $\lambda \in [0, 1]$
$$
\frac{1}{4(e^{-\lambda a^2 m^2/2} + e^{-\frac{1}{2}\lambda(1 - \lambda)m^2})}
\leqslant c_{\lambda} \leqslant 10me^{\lambda a^2 m^2/2}.
$$ 
\end{enumerate}
\end{proposition}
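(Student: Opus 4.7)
The plan is to exploit the explicit decomposition $\tg = (1-\gamma)\tilde{\tg} + \gamma u_m$, with $\gamma := e^{-a^2 m^2/2}$ and $\tilde{\tg} := \mathcal{N}(m,1)$, together with the closed form $\mu_\lambda(x) = c_\lambda\,\ppsal(x)^{1-\lambda}\tg(x)^{\lambda}$. All four facts will follow from pointwise upper or lower bounds on each of the two factors, followed by integration on a well-chosen region. The workhorse identity throughout is the Gaussian completion
\[
\ppsal(x)^{1-\lambda}\tilde{\tg}(x)^\lambda
= \frac{1}{\sqrt{2\pi}}\,e^{-(x-\lambda m)^2/2}\cdot e^{-\lambda(1-\lambda)m^2/2},
\]
so that $\ppsal^{1-\lambda}\tilde{\tg}^\lambda$ is, up to the scalar $e^{-\lambda(1-\lambda)m^2/2}$, the density of $\mathcal{N}(\lambda m, 1)$.

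For Fact~1, on $[m(1-a)/2, m(1-a)]$ I would note $|x| \geq m(1-a)/2$ to get $\ppsal(x)^{1-\lambda} \leq e^{-(1-\lambda)(1-a)^2 m^2/8}$, and $(x-m)^2 \geq a^2m^2$ together with $u_m \leq 1/Z_m$ to get $\tg(x) \leq e^{-a^2 m^2/2}$, hence $\tg(x)^{\lambda} \leq e^{-\lambda a^2 m^2/2}$. Multiplying, multiplying by $c_\lambda$, and integrating over an interval of length $m(1-a)/2 \leq m/2 \leq 5 a m^2$ (the last inequality valid since $am \geq \sqrt{2\log 2} \geq 1/10$) yields the bound. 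For Fact~2, I would use $\tg \geq \gamma u_m \geq \gamma/Z_m$ on $I_m$ and $\ppsal \geq (2\pi e)^{-1/2}$ on $[-1,1]$. The hypothesis $1 \geq a + 2/m$ rewrites as $m(1-a)/2 \geq 1$, so $[-1,1] \subset I_m \cap (-\infty, m(1-a)/2]$; integrating over $[-1,1]$ and using $Z_m = 3m + \sqrt{2\pi} \leq 4m$ for $m \geq 10$ then gives the claim.

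For Fact~3, the idea is to drop the $u_m$ component via $\tg \geq (1-\gamma)\tilde{\tg}$, apply the Gaussian completion identity above, and use that the hypothesis $1 \leq a + 2\lambda - 2/m$ is equivalent to $\lambda m \geq m(1-a)/2 + 1$; consequently the tail integral $\int_{m(1-a)/2}^{\infty} e^{-(x-\lambda m)^2/2}\,\mathrm{d}x$ is at least $\sqrt{2\pi}\,\Phi(1)$. Combining this with $(1-\gamma)^{\lambda} \geq 1/2$, which follows from $a \geq \sqrt{2\log 2}/m$ forcing $\gamma \leq 1/2$, gives the desired lower bound.

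For Fact~4, the \emph{lower} bound on $c_\lambda$ amounts to an \emph{upper} bound on $\int \ppsal^{1-\lambda}\tg^{\lambda}$: the subadditivity $(a+b)^{\lambda} \leq a^{\lambda} + b^{\lambda}$ for $\lambda \in [0,1]$ splits $\tg^{\lambda} \leq \tilde{\tg}^{\lambda} + \gamma^{\lambda} u_m^{\lambda}$, and the first term integrates to $e^{-\lambda(1-\lambda)m^2/2}$ by the Gaussian identity while the second is bounded by $\gamma^{\lambda} = e^{-\lambda a^2 m^2/2}$ via H\"older's inequality. The matching \emph{upper} bound on $c_\lambda$ is essentially a repeat of the Fact~2 computation: lower-bound $\int \ppsal^{1-\lambda}\tg^{\lambda}$ by restricting to $[-1,1]$ with $\tg \geq \gamma/Z_m$ and $\ppsal^{1-\lambda} \geq (2\pi e)^{-(1-\lambda)/2}$. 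The main obstacle throughout is not conceptual but arithmetic: tracking the numerical slack (the factors $4$, $5$, $10$ appearing in the statement) through each pointwise bound, H\"older step, and interval-length overestimate, and verifying that the standing hypotheses $m \geq 10$ and $a \geq \sqrt{2\log 2}/m$ always suffice to absorb it.
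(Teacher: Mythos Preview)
Your proposal is correct and, for Facts~1, 2, 3, and the upper bound on $c_\lambda$ in Fact~4, follows essentially the same line as the paper: bound each of $\ppsal^{1-\lambda}$ and $\tg^{\lambda}$ pointwise on a convenient interval and integrate. The paper restricts to $[-m(1-a)/2,\, m(1-a)/2]$ rather than $[-1,1]$ in Facts~2 and~4, but this is cosmetic, and your use of the Gaussian completion identity in Fact~3 is exactly what the paper does.

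The one genuine difference is the \emph{lower} bound on $c_\lambda$ in Fact~4. The paper proves it by splitting $\R$ into five regions---the two tails beyond $[-m,2m]$, the two ``flat'' pieces $[-m,m(1-a)]$ and $[m(1+a),2m]$, and the Gaussian core $[m(1-a),m(1+a)]$---and on each region applies the pointwise comparisons $g_m \leq 4m e^{-a^2m^2/2} u_m$ or $e^{-a^2m^2/2} u_m \leq \frac{1}{m} g_m$ to bound $\tg^{\lambda}$, then integrates. Your route via the elementary subadditivity $(x+y)^{\lambda} \leq x^{\lambda}+y^{\lambda}$, which splits $\tg^{\lambda}\leq \tilde{\tg}^{\lambda}+\gamma^{\lambda}u_m^{\lambda}$ globally and then handles each piece in one stroke (Gaussian completion for the first, H\"older for the second), is both shorter and sharper: it yields $c_\lambda^{-1} \leq e^{-\lambda(1-\lambda)m^2/2}+e^{-\lambda a^2 m^2/2}$ without the extra factor of $4$ that the paper's region-by-region bookkeeping incurs. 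So your argument for this part is a strict improvement over the paper's.
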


We also use the following.
\begin{lemma}[Log-Sobolev constants
of the unimodal target.]
\label{lem:log_sobolev_upper}
Let $\pi$ be as in~\eqref{eqn:general_mixture_target}
for $m \geqslant 10$.
Then
$$
C_{\mathrm{LS}}(\pi) \leqslant \frac{81a^2m^5}{1 - 2e^{-a^2 m^2/2}},
$$ with the convention that the right-hand side is $324m^3$
when $a = \sqrt{2\log 2}/m$.
\end{lemma}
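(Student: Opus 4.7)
The plan is to apply the Bobkov--G\"otze (Muckenhoupt-type) characterization of the log-Sobolev constant for one-dimensional measures. For any $\pi$ with density $f$ on $\R$ and median $m_\ast$, this gives
\[
C_{LS}(\pi) \;\leq\; C_0 \max(B_+, B_-), \quad
B_+ := \sup_{x > m_\ast} \pi([x,\infty))\,\log\tfrac{1}{\pi([x,\infty))}\int_{m_\ast}^x \tfrac{dt}{f(t)},
\]
with $B_-$ the symmetric quantity on $(-\infty, m_\ast]$ and $C_0$ an absolute constant. A direct check shows $\pi$ is unimodal with mode at $x=m$ (inside $I_m$ the uniform contribution is constant, so $\pi'$ inherits the sign of the Gaussian derivative; outside $I_m$ both summands decrease together), so only the two tail suprema need be controlled.

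First, I would lower bound the density: on $I_m$ the uniform contribution alone yields $f(x) \geq e^{-a^2 m^2/2}/(3m + \sqrt{2\pi}) \geq e^{-a^2 m^2/2}/(5m)$, while outside $I_m$ the density decays like a shifted unit-variance Gaussian. This controls $\int_{m_\ast}^x dt/f(t)$ by splitting at the edge of the Gaussian bulk near $m$: inside the bulk (width $O(1)$) the density is $\Theta(1)$ and the integral contributes $O(1)$; in the uniform-dominated portion of $I_m$ the integral is at most $5m\,|x-m|\,e^{a^2 m^2/2} \leq 5 m^2 e^{a^2 m^2/2}$; and outside $I_m$ the Gaussian tails keep further contributions bounded.

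Second, I would estimate the tail masses by splitting into the $\mathcal{N}(m,1)$ contribution (via Mill's ratio) and the $u_m$ contribution (by direct integration). The supremum $B_+$ is then attained near $x = 2m$, where the mass is $\Theta(e^{-a^2 m^2/2})$, the log-mass is $\Theta(a^2 m^2)$, and the reciprocal-density integral is $\Theta(m^2 e^{a^2 m^2/2})$; their product is $O(a^2 m^4)$. The remaining factor of $m$ and the denominator $1 - 2e^{-a^2 m^2/2}$ arise from tracking the median: since $u_m$ is symmetric about $m/2 \neq m$, the median $m_\ast$ lies to the left of $m$ by an amount proportional to $e^{-a^2 m^2/2}/(1 - 2e^{-a^2 m^2/2})$, and this displacement propagates into $B_-$, widening the uniform-dominated integration range and scaling the integral by an extra factor of $m$.

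The main obstacle will be the bookkeeping of polynomial-in-$m$ and exponential-in-$a^2 m^2$ factors across several regimes --- inside versus outside $I_m$, and within $I_m$ close to versus far from the mode --- with the most delicate estimate in the transition region $|x - m| \approx am$ where the Gaussian peak and the flat uniform base contribute comparably to $f(x)$; a loose bound there would worsen the $m^5$ scaling. The borderline case $a = \sqrt{2\log 2}/m$, in which $1 - 2e^{-a^2 m^2/2} = 0$ and the displayed bound is vacuous, is then handled by a separate direct computation at the symmetric pseudomedian of the balanced mixture, yielding the convention $C_{LS}(\pi) \leq 324 m^3$.
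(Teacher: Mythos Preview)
Your approach via the Bobkov--G\"otze criterion is genuinely different from the paper's and could in principle yield a polynomial-in-$m$ bound, but your derivation of the specific form of the stated inequality is not right. The paper does not use a one-dimensional Muckenhoupt-type criterion at all: it invokes Schlichting's bound for log-Sobolev constants of two-component mixtures, which for $Q_p = pQ_0 + (1-p)Q_1$ gives $C_{LS}(Q_p) \leq \max\{(1 + (1-p)\lambda_p)C_{LS}(Q_0),\,(1 + p\lambda_p(1+\chi^2(Q_0,Q_1)))C_{LS}(Q_1)\}$ with $\lambda_p = \frac{\log p - \log(1-p)}{2p-1}$. The ingredients are $C_{LS}(\mathcal{N}(m,1))=1$, $C_{LS}(u_m)\leq 16m^2$ (proved separately by a Holley--Stroock perturbation of a custom strongly convex potential), and $\chi^2(\mathcal{N}(m,1),u_m)+1 \leq 5m$. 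Plugging in with $p = 1 - e^{-a^2m^2/2}$ gives $C_{LS}(\pi)\leq 162m^3\lambda_p$, and the denominator $1 - 2e^{-a^2m^2/2}$ is exactly $2p-1$ from $\lambda_p$; the ``convention'' at $a=\sqrt{2\log 2}/m$ is simply $\lambda_{1/2}=2$.

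The gap in your argument is the median-displacement mechanism you invoke to produce this denominator. As $a \downarrow \sqrt{2\log 2}/m$ the mixture weights tend to $(1/2,1/2)$, but the median of $\pi$ stays bounded between $m/2$ and $m$; it does not diverge, and neither does $\int_{m_\ast}^x dt/f(t)$ over any fixed interval in $I_m$. So Bobkov--G\"otze would give you a bound that is \emph{continuous} in $a$ near that point, not one that blows up like $(1-2e^{-a^2m^2/2})^{-1}$. You are reverse-engineering an artifact of a different method. If you carried the Bobkov--G\"otze computation through carefully you would more likely obtain something of order $a^2 m^4$ or $a^2 m^5$ uniformly (with no singular denominator), which would actually be a slightly cleaner statement---but it is not the bound written in the lemma, and your ``separate direct computation'' for the borderline case is unnecessary since nothing degenerates there in your framework.
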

The proof of Lemma~\ref{lem:log_sobolev_upper}
requires a digression into log-Sobolev inequalities
for mixtures and restrictions, so is deferred to the next section,
Appendix~\ref{subsec:upper_bounds_log_sobolev_of_unimodal}.

We first prove Theorems~\ref{th:exponentially_worse_poincare}
and~\ref{th:unimodal_lower_bound},
then return to proving Proposition~\ref{prop:mass_in_the_middle}.

\begin{proof}[Proof of Theorem~\ref{th:exponentially_worse_poincare}]
The fact that
$C_{LS}(\nu) = 1$ is the Gaussian log-Sobolev inequality,
and the bound $C_{LS}(\pi) \leqslant 6m^5$ follows immediately
from Lemma~\ref{lem:log_sobolev_upper} upon plugging in $a = 1/\sqrt{2}$.

For the lower bound on $C_{P}(\mu_\lambda)$,
take $\lambda \in[\frac12, 1]$ and
let $I := [m(1 - a)/2, m(1 - a)]$;
recall that $a \in [0, 1)$ is a free
parameter controlling the mixture weights in~\eqref{eqn:general_mixture_target}.
Put
$$
\psi(x) := \begin{cases}
    1 & x < m(1 - a)/2, \\
    1 - \frac{2}{(1 - a)m}(x - m(1 - a)/2) & x \in I,\\
    0 & x > m(1 - a).
\end{cases}
$$

Then 
$$
\|\nabla \psi\|^2_{L^2(\mu_{\lambda})}
= \frac{4}{m^2(1 - a)^2} \mu_{\lambda}(I).
$$
On the other hand,
\begin{align*}
\mathrm{Var}_{\mu_{\lambda}}(\psi)
&= \mu_{\lambda}(\psi^2) - \mu_{\lambda}(\psi)^2\\
&\geqslant \mu_{\lambda}((-\infty, m(1 - a)/2])
- (\mu_{\lambda}((-\infty, m(1 - a)/2])
+ \mu_{\lambda}(I))^2 \\
&= \mu_{\lambda}((-\infty, m(1 - a)/2])(1 - \mu_{\lambda}((-\infty, m(1 - a)/2])))\\
&- 2\mu_{\lambda}(I) \mu_{\lambda}((-\infty, m(1 - a)/2]) 
- \mu_{\lambda}(I)^2 \\
&\geqslant \mu_{\lambda}((-\infty, m(1 - a)/2])
(1 - \mu_{\lambda}((-\infty, m(1 - a)/2])))
-3 \mu_{\lambda}(I) \\
&= \mu_{\lambda}((-\infty, m(1 - a)/2])
\mu_{\lambda}([m(1 - a)/2, \infty))
-3 \mu_{\lambda}(I)
\end{align*}
We obtain
\begin{align}
&C_P(\mu_{\lambda}) \geqslant \frac{\mathrm{Var}_{\mu_{\lambda}}(\psi)}{\|\nabla \psi\|^2_{L^2(\mu_{\lambda})}} \nonumber \\
&= \frac{m^2(1 - a)^2}{4} \mu_{\lambda}(I)^{-1}
\mu_{\lambda}((-\infty, m(1 - a)/2])
\mu_{\lambda}([m(1 - a)/2, \infty)) - \frac{3m^2(1 - a)^2}{4}.
\label{eqn:full_poincare_lower}
\end{align} We'll
now apply Proposition~\ref{prop:mass_in_the_middle}
with $a = 1/\sqrt{2}$. With this choice
of $a$ and for $\lambda \in [\frac{1}{2}, 1]$,
the sufficient conditions are all verified,
so that
\begin{align*}
\mu_{\lambda}(I)^{-1} \mu_{\lambda}((-\infty, m(1 - a)/2]) \mu_{\lambda}([m(1-a)/2, \infty))\geqslant \frac{1}{100a m^3} c_{\lambda} e^{(1 -\lambda)
m^2(1 - a)^2/8  -\frac{1}{2}
\lambda(1 - \lambda)m^2}.
\end{align*}
Applying Proposition~\ref{prop:mass_in_the_middle}
once more, this time
to control $c_{\lambda}$,
we obtain
$$
\mu_{\lambda}(I)^{-1} \mu_{\lambda}((-\infty, m(1 - a)/2])
\mu_{\lambda}([m(1-a)/2, \infty))\geqslant  
\frac{ e^{(1 -\lambda)
m^2(1 - a)^2/8 -\frac{1}{2}
\lambda(1 - \lambda)m^2}}{400a m^3(e^{-\lambda a^2m^2/2} + e^{-\frac{1}{2}\lambda(1 - \lambda)m^2})}.
$$
To complete the proof of 
Theorem~\ref{th:exponentially_worse_poincare},
we finally plug in $a = 1/\sqrt{2}$,
and observe that
for $\lambda \in[\frac12, 1]$ it holds that
$e^{-\lambda a^2 m^2/2} \leqslant e^{-\frac{1}{2}\lambda(1 - \lambda)m^2}$,
so that
$$
\mu_{\lambda}(I)^{-1} \mu_{\lambda}((-\infty, m(1 - a)/2])
\mu_{\lambda}([m(1-a)/2, \infty))\geqslant 
\frac{1}{800m^3} e^{(1 - \lambda)m^2/100}.
$$
Plugging into~\eqref{eqn:full_poincare_lower}
we find
$$
C_P(\mu_{\lambda}) 
\geqslant \frac{m^2(1 - 1/\sqrt{2})^2}{4} \Big( 
\frac{1}{800m^3} e^{(1 - \lambda)m^2/100}
- 3
\Big).
$$ Observe that $\frac{1}{12} \leqslant
(1 - 1/\sqrt{2})^2 \leqslant 1$, so that
$$
C_P(\mu_{\lambda}) 
\geqslant
\frac{1}{48\cdot 800m} e^{(1 - \lambda)m^2/100}
- \frac{3m^2}{4}
\geqslant \frac{1}{4\cdot 10^4m} e^{(1 - \lambda)m^2/100}
- m^2.
$$
\end{proof}

\begin{proof}[Proof of Theorem~\ref{th:unimodal_lower_bound}]
The equality $C_{LS}(\nu) = 1$ is the Gaussian
log-Sobolev inequality. For the log-Sobolev constant
of $\tg$, the result is immediate from Lemma~\ref{lem:log_sobolev_upper}
upon plugging in $a = \sqrt{2\log 2}/m$.

For the lower bound in total variation,
we will apply Theorem~\ref{th:general_lower_bound}
using the interval $I = [m(1 - a)/2, m(1 - a)]$;
recall that $a \in [0, 1)$ is a free
parameter controlling the mixture weights in~\eqref{eqn:general_mixture_target}
that we will eventually set
to $a = \sqrt{2\log 2}/m$.

In this case, it clear that we may bound
the scores with $B = m$.
The main issue
is then to get control on $\mu_{\lambda}(I)$,
and for this we apply Proposition~\ref{prop:mass_in_the_middle}
to obtain
$$
\mu_{\lambda}(I) \leqslant 50am^3 e^{-(1 - \lambda)(1 - a)^2m^2/8}.
$$
So for $\lambda_\star := \lambda_k$, we may take
$$
\delta :=  50am^3 e^{-(1 - \lambda_\star)(1 - a)^2m^2/8}.
$$
 Theorem~\ref{th:general_lower_bound}
then implies
\begin{align*}
 \mathrm{TV}(p^k_{T_k}, \tg)
    \geqslant 
\tg([m(1 - a), \infty)) - &\tg([m(1 - a)/2, 
m(1 - a)]) - \ppsal([m(1 - a), \infty)) - \delta \\
&- \frac{2}{1 - a} \cdot \sqrt{\delta} \cdot \Big(\sum_{i = 1}^k T_i(\chi^2(p_0^i,
\mu_i) + 1)^{1/2}\Big).
\end{align*}
Next, we observe that $4m u_m(x) \geqslant \nu(x)$ for all $x \in \R$, so that
$$
\pi(x) \geqslant \frac12 u_m(x) \geqslant \frac{1}{8m}
\nu(x).
$$ Hence we may Lemma~\ref{lem:chi2_recursive} with $C = 8m$
to obtain 
\begin{align*}
\mathrm{TV}(p^k_{T_k}, \tg)
    \geqslant 
\tg([m(1 - a), \infty)) - &\tg([m(1 - a)/2, 
m(1 - a)]) - \ppsal([m(1 - a), \infty)) \\
&- \delta - \frac{8\sqrt{m}}{1 - a} \cdot \sqrt{\delta} \cdot \Big(\sum_{i = 1}^k T_i\Big).
\end{align*}
Let us specialize to $a = \sqrt{2 \log 2}/m$.
In this case, Mills' ratio implies
$$
\ppsal([m(1 - a), \infty))
\leqslant 
e^{-(1 - a)^2m^2/2} \leqslant e^{-2m^2/5}.
$$ For the other two terms, let us adopt
the notation $g_m := \frac{1}{\sqrt{2\pi}} e^{-\frac12 (x -m )^2}$,
and then write
\begin{align*}
\tg([m(1 - a), \infty))
- &\tg([m(1 - a)/2, m(1 - a)]) \\
&\geqslant \int_{m(1 - a)}^{2m} \big( \frac12 g_m(x) + \frac12 u_m(x)) \ud x
- \int_{m(1 - a)/2}^{m(1 - a)} \big( \frac12 g_m(x) + \frac12 u_m(x) \big) \ud x\\
&= \frac{1}{2} \int_{m(1 - a)}^{2m}g_m(x)\ud x - \frac{1}{2} \int_{m(1 - a)/2}^{m(1 - a)} 
g_m(x) \ud x + \frac12 \int_{m(1 - a/2)}^{2m} u_m(x)\ud x\\
&\geqslant \frac{1}{2} \int_{m(1 - a)}^{2m}g_m(x)\ud x - \frac{1}{2} \int_{m(1 - a)/2}^{m(1 - a)} 
g_m(x) \ud x \\
&= \int_{m(1 - a)}^{m(1 + a)} g_m(x) \ud x \geqslant \int_{m - 1}^{m + 1}
g_m(x) \geqslant \frac{1}{4}.
\end{align*}
We finally obtain
$$
 \mathrm{TV}(p^k_{T_k}, \tg)
    \geqslant 
\frac{1}{4} - e^{-2m^2/5}
- \delta - 10m\cdot \sqrt{\delta} \cdot \Big(\sum_{i = 1}^k T_i\Big).
$$ For this choice of $a$,
note that
$$
\delta \leqslant 6 m^3 e^{-(1 - \lambda_k) m^2 / 10}.
$$ By the restriction on $m$ we know
$1/4 - e^{-2m^2/5} \geqslant 1/5$, yielding the result.
\end{proof}

\begin{proof}[Proof of Prop.~\ref{prop:mass_in_the_middle}]
For ease of notation, let us write $g_m(x):=\frac{1}{\sqrt{2\pi}}e^{-\frac12(x - m)^2}$.
Also, put $u_m(x) = c_{u_m} e^{-\frac12 d(x, I_m)^2}$
for $c_{u_m}$ a normalizing constant and observe that
$$
c_{u_m}^{-1} = \int_{-\infty}^{\infty} e^{-\frac12 d(x, I_m)^2}
\ud x = 3m + \sqrt{2\pi} \implies \frac{1}{4m} \leqslant c_{u_m} \leqslant 
\frac{1}{3m}.
$$
% $$
% g_m := \mathcal{N}(m, 1)\big|_{[-m, 2m]},
% \quad \quad u_m := \mathrm{unif}_{[-m, 2m]}.
% $$
% Then under our assumption that $m\geqslant 4$
% it isn't hard to see that
% $$
% \frac{1}{\sqrt{2\pi}}e^{-\frac12 (x - m)^2}
% \leqslant g_m(x) \leqslant e^{-\frac12 (x - m)^2},
% \quad \quad \forall x\in[-m, 2m].
% $$
% We thus have that
Therefore,
\begin{equation}\label{eqn:uniform_by_Gaussian}
e^{-a^2 m^2/2} u_m(x)
\leqslant \frac{e^{-a^2 m^2/2}}{3m} \leqslant \frac{1}{m}g_m(x), \quad \quad \forall x \in [m(1 - a), 
m(1 + a)],
\end{equation}
and
\begin{equation}\label{eqn:Gaussian_by_uniform}
g_m(x) 
\leqslant e^{-a^2 m^2 / 2} \leqslant 4m e^{-a^2 m^2/2} u_m(x) \quad \quad \forall x  \in [-m, m(1 - a)] \cup [m(1 + a), 2m].
\end{equation}

{\it Proof of Fact 1.}
We use~\eqref{eqn:Gaussian_by_uniform}
to observe that
\begin{align*}
\mu_{\lambda}([m(1 - a)/2, m(1 - a)]) &=
c_{\lambda} \int_{m(1 - a)/2}^{m(1 - a)}
\ppsal^{1 - \lambda} \tg^{\lambda} \\
&\leqslant c_{\lambda} (5m)^{\lambda} e^{-\lambda a^2 m^2/2} \int_{m(1 - a)/2}^{m(1 - a)}
\ppsal^{1 - \lambda}  \\
&\leqslant  5a m^2 c_{\lambda} e^{-\lambda a^2 m^2/2} \cdot e^{-(1 - \lambda)m^2(1 - a)^2/8}.
\end{align*}

{\it Proof of Fact 2.}
We compute
\begin{align*}
    \mu_{\lambda}((-\infty, m(1 - a)/2]) &= \int_{-\infty}^{m(1 - a)/2}
    c_{\lambda} \ppsal^{1 - \lambda} \tg^{\lambda} \ud x \\
    &\geqslant c_{\lambda} \int_{-m(1 - a)/2}^{m(1 - a)/2}
     \ppsal^{1 - \lambda} \tg^{\lambda} \ud x \\
    &\geqslant c_{\lambda} \Big( \frac{e^{-a^2 m^2/2}}{4m} \Big)^{\lambda}\Big(\frac{1}{\sqrt{2\pi}}
    \Big)^{1 - \lambda} \int_{-m(1 - a)/2}^{m(1 - a)/2}
    e^{-\frac{1}{2}x^2(1 - \lambda)} \ud x\\
    &\geqslant c_{\lambda} \frac{e^{-\lambda a^2 m^2/2}}{4\sqrt{2\pi}m} \int_{-m(1 - a)/2}^{m(1 - a)/2}
    e^{-\frac{1}{2}x^2(1 - \lambda)} \ud x\\
    &\geqslant c_{\lambda} \frac{e^{-\lambda a^2 m^2/2}}{4\sqrt{2\pi}m}    \int_{-m(1 - a)/2}^{m(1 - a)/2}
    e^{-\frac{1}{2}x^2} \ud x \\
    &\geqslant c_{\lambda} \frac{e^{-\lambda a^2 m^2/2}}{4\sqrt{2\pi}m}
    \int_{-1}^1e^{-\frac{1}{2}x^2} \ud x \\
    & \geqslant c_{\lambda} \frac{2e^{-1/2} e^{-\lambda a^2 m^2/2}}{4\sqrt{2\pi}m}\\
    &\geqslant \frac{c_{\lambda} e^{-\lambda a^2 m^2/2}}{10m}.
\end{align*}

{\it Proof of Fact 3.}
We calculate,
\begin{align*}
\mu_{\lambda}([m(1 - a)/2, \infty))  &= \int_{m(1 - a)/2}^{\infty}
c_{\lambda} \ppsal^{1 - \lambda} \tg^{\lambda} \ud x \\
&\geqslant \frac{c_{\lambda}}{2} \int_{m(1 - a)/2}^{2m}
\frac{1}{\sqrt{2\pi}}
e^{-\frac{1}{2}(1 - \lambda)x^2 - \frac12 \lambda (x - m)^2} \ud x \\
&=\frac{c_{\lambda}}{2}e^{-\frac12\lambda(1 - \lambda)m^2} \int_{m(1 - a)/2}^{2m}
\frac{1}{\sqrt{2\pi}} e^{-\frac{1}{2}(x - \lambda m)^2} \ud x \\
&\geqslant \frac{c_{\lambda}}{2} e^{-\frac12 \lambda(1 - \lambda)m^2},
\end{align*}
where we used the
fact that under the hypothesis, we have $m(1-a)/2 \leqslant \lambda m - 1$.

{\it Proof of Fact 4.}
We compute
\begin{align*}
\frac{1}{c_{\lambda}} =
\int_{-\infty}^{\infty}
\ppsal^{1 - \lambda} \tg^{\lambda}
&\geqslant \frac{1}{(2\pi)^{(1 - \lambda)/2}}
\cdot \Big(\frac{e^{-a^2m^2/2}}{4m}\Big)^{\lambda} \cdot
\int_{-m/2}^{m/2} e^{-\frac{1}{2}x^2(1 - \lambda)} \ud x\\
&\geqslant \frac{e^{-\lambda a^2 m^2/2}}{4\sqrt{2\pi}m}
\int_{-m/2}^{m/2} e^{-\frac{1}{2}x^2(1 - \lambda)} \ud x \\
&\geqslant \frac{e^{-\lambda a^2 m^2/2}}{4\sqrt{2\pi}m}
\int_{-m/2}^{m/2} e^{-\frac{1}{2}x^2} \ud x \\
&\geqslant \frac{2e^{-1/2}e^{-\lambda a^2 m^2/2}}{4\sqrt{2\pi}m} \\
&\geqslant \frac{e^{-\lambda a^2 m^2/2}}{10m}.
\end{align*}
On the other hand, 
we use~\eqref{eqn:uniform_by_Gaussian}
and~\eqref{eqn:Gaussian_by_uniform}
to bound
\begin{align*}
    \frac{1}{c_{\lambda}} &= \int_{-\infty}^{\infty}
    \ppsal^{1 - \lambda} \tg^{\lambda} \\
    &= \int_{-m}^{m(1 - a)} \ppsal^{1 - \lambda} \tg^{\lambda}
    + \int_{m(1 - a)}^{m(1 + a)} \ppsal^{1 - \lambda} \tg^{\lambda}
    + \int_{m(1 + a)}^{2m} \ppsal^{1 - \lambda} \tg^{\lambda} \\
    &+ \int_{-\infty}^{-m}  \ppsal^{1 - \lambda} \tg^{\lambda}+ \int_{2m}^{\infty}  \ppsal^{1 - \lambda} \tg^{\lambda}\\
    &\leqslant 2\int_{-m}^{m(1 - a)}
    \ppsal^{1 - \lambda} \tg^{\lambda} + \int_{m(1 - a)}^{m(1 + a)} \ppsal^{1 - \lambda} \tg^{\lambda}+ \int_{-\infty}^{-m}  \ppsal^{1 - \lambda} \tg^{\lambda}+ \int_{2m}^{\infty}  \ppsal^{1 - \lambda} \tg^{\lambda}\\
    &\leqslant 
    2\Big(\frac{5e^{-am^2/2}}{3} \Big)^{\lambda}
    \cdot  \frac{1}{(2\pi)^{(1 - \lambda)/2}}
    \int_{-m}^{m(1 - a)} e^{-\frac{1}{2}(1 - \lambda)x^2}\ud x \\
    &+ \frac{2^{\lambda}}{\sqrt{2\pi}} \int_{m(1 - a)}^{m(1 + a)}
    e^{-\frac{1}{2}
    (1 - \lambda)x^2 - \frac{1}{2}\lambda (x -m)^2} \ud x + \int_{-\infty}^{-m}
    \nu^{1 - \lambda} + \int_{2m}^{\infty} \nu^{1 - \lambda}\\
    &\leqslant 4e^{-\lambda a^2 m^2/2}
    + \frac{2}{\sqrt{2\pi}} e^{-\frac{1}{2}\lambda(1 - \lambda)m^2}
    \int_{-\infty}^{\infty} e^{-\frac{1}{2}(x - \lambda m)^2} \ud x + \frac{2}{\sqrt{2\pi}} \int_{-\infty}^{-m} e^{-\frac12 x^2} \ud x\\
    &\leqslant  4e^{-\lambda a^2 m^2/2}
    + 2e^{-\frac{1}{2}\lambda(1 - \lambda)m^2} + e^{-\frac12 m^2} \\
    &\leqslant 4(e^{-\lambda a^2 m^2/2} + e^{-\frac{1}{2}\lambda(1 - \lambda)m^2}).\qedhere
\end{align*}
\end{proof}
\subsection{Upper bounds on the log-Sobolev
constant of the unimodal target}
\label{subsec:upper_bounds_log_sobolev_of_unimodal}

In this section we prove Lemma~\ref{lem:log_sobolev_upper}
on the log-Sobolev constant of the unimodal target
defined in~\eqref{eqn:target_mixture}.
We use the following result which controls
the log-Sobolev constants of mixtures,
and appears as~\cite[Corollary 2]{schlichting2019poincare}.
\begin{theorem}[Upper bounds on log-Sobolev constant of mixtures~\citep{schlichting2019poincare}]
\label{th:log_sobolev_mixture}
Suppose $Q_0, Q_1 \in \mathcal{P}(\R^d)$
are such that $Q_0 \ll Q_1$. For $p \in [0,1]$,
consider the mixture $Q_p := p Q_0 + (1 - p)Q_1$.
Then
$$
C_{LS}(Q_p) \leqslant \max\big\{ 
(1 + (1 - p) \lambda_p)C_{LS}(Q_0), (1 + p\lambda_p(1 + \chi^2(Q_0, Q_1)))C_{LS}(Q_1) \big\},
$$ for
$$
\lambda_p := \begin{cases} \frac{\log p - \log(1 - p)}{2p - 1} & p \in[0,\frac12) \cup (\frac12, 1] \\
2 & p = \frac12.
\end{cases}
$$
\end{theorem}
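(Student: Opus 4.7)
The plan is to adapt the standard decomposition approach for log-Sobolev inequalities of mixtures: reduce the claim to (i) component-wise log-Sobolev inequalities for $Q_0, Q_1$ and (ii) a discrete log-Sobolev inequality on the Bernoulli space $\{0,1\}$. I would start by fixing $f \in \mathcal{C}_c^\infty(\R^d)$ and applying the additivity-of-entropy identity
\begin{equation*}
\mathrm{ent}_{Q_p}(f^2) \;=\; p\,\mathrm{ent}_{Q_0}(f^2) \;+\; (1-p)\,\mathrm{ent}_{Q_1}(f^2) \;+\; \mathrm{ent}_{\beta_p}(G),
\end{equation*}
where $\beta_p$ is the Bernoulli measure with $\beta_p(\{0\}) = p$, and $G \colon \{0,1\} \to \R_+$ is defined by $G(0) = \E_{Q_0}[f^2]$ and $G(1) = \E_{Q_1}[f^2]$. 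The first two summands are handled directly by the hypothesized log-Sobolev inequalities of $Q_0$ and $Q_1$, giving a bound by $2(p\,C_{LS}(Q_0)\|\nabla f\|_{L^2(Q_0)}^2 + (1-p)C_{LS}(Q_1)\|\nabla f\|_{L^2(Q_1)}^2)$, which is clearly dominated by the contributions from the two branches of the max.

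For the third summand I would invoke the sharp two-point log-Sobolev inequality (Diaconis--Saloff-Coste), whose optimal constant is $\lambda_p/4$ with $\lambda_p$ the logarithmic mean, yielding
\begin{equation*}
\mathrm{ent}_{\beta_p}(G) \;\leq\; \lambda_p \, p(1-p)\,\bigl(\sqrt{G(0)} - \sqrt{G(1)}\bigr)^2.
\end{equation*}
The crux is then to bound the squared difference $(\sqrt{\E_{Q_0}[f^2]} - \sqrt{\E_{Q_1}[f^2]})^2$ by a Dirichlet-form type quantity. Using $Q_0 \ll Q_1$, I would introduce $h := f - \E_{Q_1}[f]$ (or a similar centering) and rewrite $\E_{Q_0}[f^2] = \E_{Q_1}\bigl[f^2 \tfrac{\mathrm{d}Q_0}{\mathrm{d}Q_1}\bigr]$; after Cauchy--Schwarz in $L^2(Q_1)$, the change of measure produces the factor $1 + \chi^2(Q_0, Q_1)$ paired with $\E_{Q_1}[|\nabla f|^2]$ (and hence with $C_{LS}(Q_1)$ after a further application of the $Q_1$-log-Sobolev inequality). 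A symmetric route going through $Q_0$ avoids the $\chi^2$ factor but yields $C_{LS}(Q_0)$; taking the max of the two routes gives the asymmetric bound stated in the theorem.

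Finally, I would collect terms: the within-cluster contribution plus whichever of the two transport estimates is used. The $\lambda_p/4$ from the two-point inequality is absorbed into the $\lambda_p$ prefactors, and the weights $p$, $(1-p)$ arise from the $p(1-p)$ in front of the discrete Dirichlet form being split according to which component absorbs the transport cost. The main obstacle is the sharp handling of the change-of-measure step so that precisely $1 + \chi^2(Q_0, Q_1)$ (rather than a looser quantity like $\|\mathrm{d}Q_0/\mathrm{d}Q_1\|_\infty$) appears; this is what forces the use of the $L^2(Q_1)$ Poincar\'e/log-Sobolev structure rather than a naive pointwise bound, and explains why the result genuinely requires only $Q_0 \ll Q_1$ rather than mutual absolute continuity.
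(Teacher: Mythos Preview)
The paper does not prove this theorem at all: it is quoted verbatim as an external result, cited as \cite[Corollary~2]{schlichting2019poincare}, and then applied as a black box in the proof of Lemma~\ref{lem:log_sobolev_upper}. There is therefore no ``paper's own proof'' to compare against.

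That said, your sketch is essentially the argument in the cited reference. The three-term entropy decomposition, the Diaconis--Saloff-Coste two-point inequality with constant $\lambda_p$, and the asymmetric change-of-measure step producing the $1+\chi^2(Q_0,Q_1)$ factor are exactly the ingredients Schlichting uses. The one place where your outline is genuinely incomplete is the ``crux'' step: you say Cauchy--Schwarz on $\E_{Q_1}[f^2\,\tfrac{\mathrm{d}Q_0}{\mathrm{d}Q_1}]$ followed by a $Q_1$-log-Sobolev application yields the factor $1+\chi^2(Q_0,Q_1)$ paired with $\|\nabla f\|_{L^2(Q_1)}^2$, but this is not quite right as stated. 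A naive Cauchy--Schwarz there produces $\sqrt{\chi^2(Q_0,Q_1)}\cdot\|f^2\|_{L^2(Q_1)}$, which involves fourth moments of $f$ rather than a Dirichlet form. The actual argument first bounds $(\sqrt{G(0)}-\sqrt{G(1)})^2$ by a variance of $f$ under one of the components (via the elementary inequality $(\sqrt{a}-\sqrt{b})^2\leq \mathrm{Var}$-type bounds after centering), and only then changes measure on that variance to pick up the $\chi^2$ factor, finally closing with the component Poincar\'e (implied by log-Sobolev). Your high-level plan is correct, but if you were to write this out in full you would need to be more careful about exactly which quantity the Cauchy--Schwarz is applied to.
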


With this result in hand, we just need to get control on the
the log-Sobolev constant of $u_m$.
This is accomplished in the following lemma.

\begin{lemma}[Log-Sobolev constant of $u_m$]\label{lem:log-sobolev_constant_um}
We have
$$
C_{LS}(u_m) \leqslant 16m^2.
$$
\end{lemma}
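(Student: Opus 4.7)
My plan is to use a Lipschitz transport argument: construct a $(4m)$-Lipschitz map $T \colon \R \to \R$ with $T_\# \mathcal{N}(0,1) = u_m$, and then invoke the Lipschitz transfer principle for log-Sobolev inequalities to conclude
$$C_{LS}(u_m) \leq (4m)^2 \cdot C_{LS}(\mathcal{N}(0,1)) = 16 m^2,$$
using the Gaussian log-Sobolev inequality $C_{LS}(\mathcal{N}(0,1)) = 1$.

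First I would observe that $u_m$ is log-concave, since its negative log-density $V(x) := \tfrac{1}{2} d(x, I_m)^2$ is convex as the square of the distance to the convex set $I_m$. The natural candidate for $T$ is then the monotone rearrangement $T(y) := F_{u_m}^{-1}(\Phi(y))$, where $\Phi$ is the standard Gaussian CDF and $F_{u_m}$ is the CDF of $u_m$. This is a smooth, strictly increasing map with $T'(y) = \phi(y)/u_m(T(y))$, where $\phi$ is the standard Gaussian density.

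Next I would bound $T'(y) \leq 4m$ by splitting into cases according to whether $T(y) \in I_m$. When $T(y) \in I_m$, the density $u_m$ equals the constant $c_{u_m}$ there, so $T'(y) \leq \phi(0)/c_{u_m} = (3m + \sqrt{2\pi})/\sqrt{2\pi}$, which is easily at most $4m$ for $m \geq 10$. When $T(y) \notin I_m$, say $T(y) > 2m$ (the other tail being symmetric), I would use the CDF identity $(1-\Phi)(y) = c_{u_m}\sqrt{2\pi}\,(1-\Phi)(T(y) - 2m)$, obtained directly by integrating $F_{u_m}(T(y)) = \Phi(y)$, together with Mills' ratio, to control $T'(y)$. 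The key qualitative fact is that $T'(y) \to 1$ as $y \to \infty$, so the $4m$ bound is comfortable throughout this regime.

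The main obstacle will be making the tail case fully rigorous: the relation between $y$ and $T(y) - 2m$ is transcendental, so I would use Mills' two-sided estimates $\phi(x)/(x + 1/x) \leq (1-\Phi)(x) \leq \phi(x)/x$ to obtain the required uniform control and verify $T'(y) \leq 4m$ everywhere in the tail. As a backup, one could instead invoke the one-dimensional Bobkov--Götze/Muckenhoupt criterion, which characterizes $C_{LS}(u_m)$ via a one-dimensional supremum involving tail masses and the reciprocal density; the flat-middle-plus-Gaussian-tails structure of $u_m$ makes this computation tractable and also yields an $O(m^2)$ bound with absolute constants.
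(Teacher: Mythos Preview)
Your transport approach is correct and genuinely different from the paper's proof. The paper uses the Holley--Stroock perturbation principle: it builds a $C^1$ strongly convex comparison potential $\phi_r$ that agrees with $\tfrac12 d(\cdot,I_m)^2$ on the Gaussian tails and is a shallow parabola on a slightly enlarged interval $[-m-r,2m+r]$, then bounds the oscillation of the difference and optimizes over $r$ (taking $r=1/m$) to get $C_{LS}(u_m)\leq \alpha_r^{-1}\exp(\mathrm{osc})\leq 16m^2$. Your argument instead pushes forward $\mathcal N(0,1)$ by the increasing rearrangement $T=F_{u_m}^{-1}\circ\Phi$ and bounds $\|T'\|_\infty$; the Lipschitz transfer principle then gives $C_{LS}(u_m)\leq \|T'\|_\infty^2$.

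One remark: the tail case you flag as the ``main obstacle'' is in fact immediate and needs no Mills' ratio. With $z:=T(y)-2m\geq 0$ and $c:=c_{u_m}\sqrt{2\pi}<1$, the identity $1-\Phi(y)=c\,(1-\Phi(z))$ forces $y>z\geq 0$, hence $\phi(y)<\phi(z)$, and therefore
\[
T'(y)=\frac{\phi(y)}{c\,\phi(z)}<\frac{1}{c}=\frac{3m+\sqrt{2\pi}}{\sqrt{2\pi}}=\frac{3m}{\sqrt{2\pi}}+1,
\]
which is exactly the maximum of $T'$ on the flat region (attained at $y=0$) and is $\leq 4m$ for $m\geq 1$. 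So the global Lipschitz constant is $3m/\sqrt{2\pi}+1$, comfortably below $4m$; the left tail follows by the symmetry $T(-y)=m-T(y)$.

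As for what each approach buys: your transport argument is shorter, uses only the Gaussian LSI and an elementary monotonicity, and even yields the slightly sharper constant $(3m/\sqrt{2\pi}+1)^2$. The paper's Holley--Stroock route is the more standard toolbox move and generalizes more readily (e.g.\ to higher dimensions or to potentials without an explicit one-dimensional transport), at the cost of a bespoke comparison potential and a parameter optimization.
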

\begin{proof}[Proof of Lemma~\ref{lem:log-sobolev_constant_um}]
We will apply the Holley-Stroock perturbation principle to a custom 
comparison distribution.
Indeed, for a parameter $r > 0$ to be chosen later, let
$$
\phi_r(x):= \begin{cases} \frac12 (x - 2m)^2 & x > 2m
+ r\\
\alpha_r\big(x - \frac{m}{2} \big)^2 + \delta_r & x \in [-m - r, 2m + r] \\
\frac12 (x + m)^2 &  x < -m - r,
\end{cases}
$$ where we take $\alpha_r, \delta_r$ to make $\phi$ continuously
differentiable; in particular
$$
\alpha_r := \frac{r}{2r + 3m}, \quad \quad \delta_r := -\frac{3mr^2}{8}.
$$ Note that $\phi_r$ is $\alpha_r$ strongly convex, so that
$C_{LS}(e^{-\phi_r}) \leqslant \frac{1}{\alpha_r}$. On the other hand
$$
\delta_r - \frac12 r^2 \leqslant \phi_r - \frac12 d(x, I_m)^2
\leqslant \alpha_r\big( \frac{3m}{2} + r\big)^2 + \delta_r.
$$
Hence
$$
\mathrm{osc}(\phi - \frac12 d(\cdot, I_m)^2)
\leqslant \alpha_r \big( \frac{3m}{2} + r\big)^2 + \frac{r^2}{2}.
$$
Hence by the Holley-Stroock perturbation principle~\cite{holley1987logsobolevineq}
we know that
$$
C_{LS}(u_m) \leqslant \exp\big(\alpha_r \big( \frac{3m}{2} + r\big)^2 + \frac{r^2}{2} \big)
C_{LS}(e^{-\phi_r})
\leqslant \frac{1}{\alpha_r} \cdot  \exp\big(\alpha_r \big( \frac{3m}{2} + r\big)^2 + \frac{r^2}{2} \big).
$$ 
Taking $r = 1/m$, we find
$$
C_{LS}(u_m) \leqslant
(2 + 3m^2) \exp\big(\frac{1}{3m^2}(2m)^2 + \frac{1}{2m^2} \big)
= (2 + 3m^2) \exp\big( \frac43 + \frac{1}{2m^2} \big)
\leqslant 16m^2.
$$
\end{proof}

With the result in hand, we can prove Lemma~\ref{lem:log_sobolev_upper}.
\begin{proof}[Proof of Lemma~\ref{lem:log_sobolev_upper}]
We first compute $\chi^2(\mathcal{N}(m, 1), u_m)$. For this,
let us write $u_m = c_{u_m} e^{-\frac12 d^2(x, I_m)}$,
and note that
$$
c_{u_m}^{-1} = \int_{\infty}^{\infty} e^{-\frac12 d^2(x, I_m)}
\ud x = \sqrt{2\pi} + 3m \leqslant 4m.
$$
Hence 
\begin{align*}
    \chi^2(\mathcal{N}(m, 1), u_m) + 1&\leqslant
    \frac{2m}{\pi}\int_{-\infty}^{\infty} e^{-(x - m)^2 + \frac12 d(x, I_m)^2}
    \ud x \\ 
    &= 
    \frac{2m}{\pi}\int_{-\infty}^{-m} e^{-(x - m)^2 + \frac12 d(x, I_m)^2}
    \ud x + 
    \frac{2m}{\pi}\int_{-m}^{2m} e^{-(x - m)^2 + \frac12 d(x, I_m)^2}
    \ud x \\ 
    &+  \frac{2m}{\pi}\int_{2m}^{\infty} e^{-(x - m)^2 + \frac12 d(x, I_m)^2}
    \ud x \\ 
    &=
    \frac{2m}{\pi}\int_{-\infty}^{-m} e^{-(x - m)^2 + \frac12 (x + m)^2}
    \ud x + 
    \frac{2m}{\pi}\int_{-m}^{2m} e^{-(x - m)^2}
    \ud x \\ 
    &+  \frac{2m}{\pi}\int_{2m}^{\infty} e^{-(x - m)^2 + \frac12 (x - 2m)^2}
    \ud x \\ 
    &\leqslant  
    \frac{2m}{\pi}\int_{-\infty}^{-m} e^{-\frac12 (x - m)^2}
    \ud x + 
    \frac{2m}{\pi}\int_{-\infty}^{\infty} e^{-(x - m)^2}
    \ud x \\ 
    &+  \frac{2m}{\pi}\int_{2m}^{\infty} e^{-\frac 12(x - m)^2}
    \ud x \\ 
    &\leqslant \frac{2m}{\pi} \big( \sqrt{2\pi} + \sqrt{\pi}
    + \sqrt{2\pi}\big) \leqslant 5m.
\end{align*}
Using Lemma~\ref{lem:log-sobolev_constant_um}
and
plugging into Theorem~\ref{th:log_sobolev_mixture},
with $p = 1 - e^{-a^2 m^2/2}$,
we find
\begin{align*}
C_{LS}(\pi) &\leqslant \max \big\{1 + (1 - p)
\lambda_p, 80m^3(1 + p\lambda_p) \big\}\leqslant 162m^3 \lambda_p.
\end{align*}
The result follows.
\end{proof}

\section{Additional numerical illustrations}
\label{app:sec:numerical_simulations}

The geometric path is often illustrated in a setup where the initialization is chosen in the middle of a two symmetric modes: see for example~\citet[Sampling Book, Tempered SMC]{blackjax2020github} or~\citet[Fig. 1]{maurais2024fisherrao} or~\citet[Fig. 1]{chehab2023annealednce}. In this very specific setting, the geometric path conveys a sense that it evolves particle positions. In a more general setting, we can observe in Figure~\ref{fig:geometric_path_simulation} that once the closer mode is reached, the path seems to evolve the particle weights, which is problematic for Langevin dynamics.

\begin{figure}[h!]  
    % if forced location "h" can be removed, then better
    \centering
    \includegraphics[width=\linewidth]{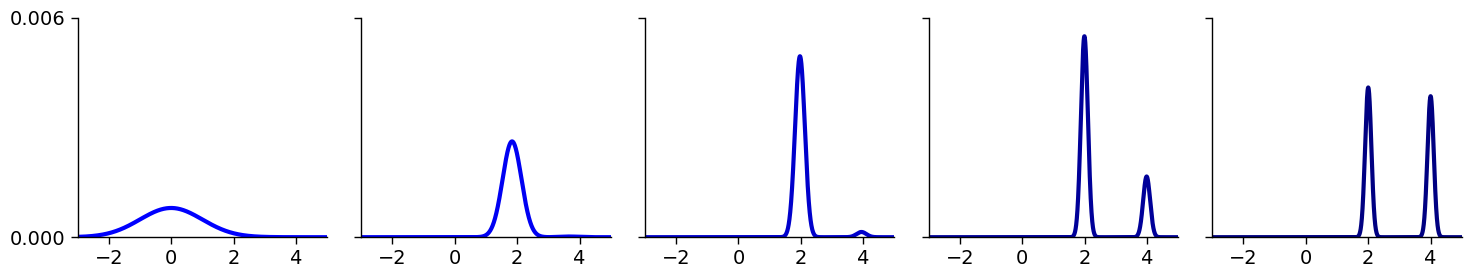} 
    \includegraphics[width=\linewidth]{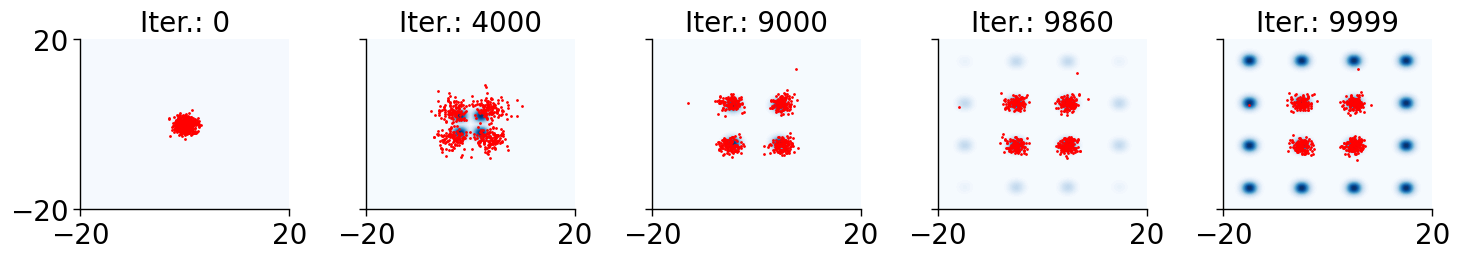} 
    \includegraphics[width=\linewidth]{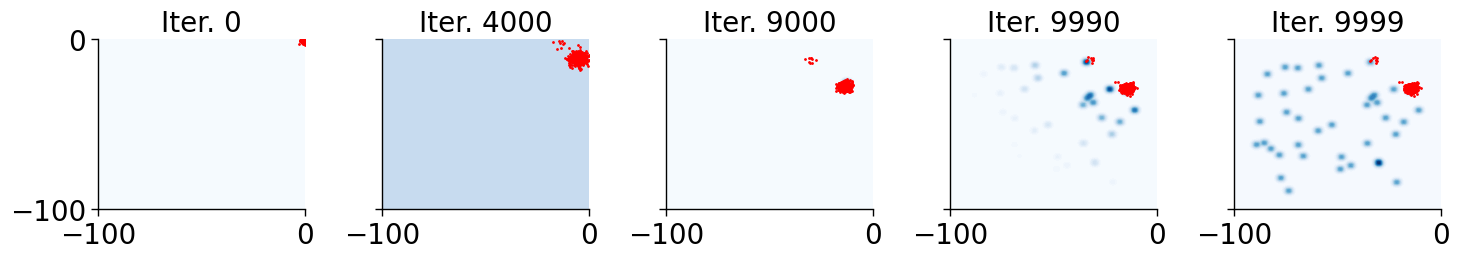} 
    \caption{Geometric path from a Gaussian to a Gaussian mixture. We observe that that this path displaces mass ``horizontally" to the nearest modes (left columns), and then ``vertically" to the remaining modes (right columns). Intuitively, this second part is problematic for a Langevin sampler.}
    \label{fig:geometric_path_simulation}
\end{figure}

\end{document}